\newcommand{\oO}{\mathcal{O}}
\newcommand{\cO}{\mathcal{O}}
\newcommand{\cC}{\mathcal{C}}
\newcommand{\cG}{\mathcal{G}}
\newcommand{\cM}{\mathcal{M}}
\newcommand{\cS}{\mathcal{S}}
\newcommand{\sS}{\mathcal{S}}
\newcommand{\cA}{\mathcal{A}}
\newcommand{\aA}{\mathcal{A}}
\newcommand{\cH}{\mathcal{H}}
\newcommand{\hH}{\mathcal{H}}
\newcommand{\RR}{\mathbb{R}}
\newcommand{\NN}{\mathbb{N}}
\newcommand{\PP}{\mathbb{P}}
\newcommand{\T}{\top}
\newcommand{\bR}{\mathbb{R}}
\newcommand{\poly}{\text{poly}}
\newcommand{\prodA}{\prod_{i=1}^m |A^\ag{i}|}
\newcommand{\sumA}{\sum_{i=1}^m |A^\ag{i}|}
\newcommand{\allm}{{(1:m)}}
\newcommand{\ag}[1]{{(#1)}}
\newcommand{\eye}{\mathbf{I}}
\newcommand{\critinf}{\tilde \Gamma(\lambda, \log(2))}
\newcommand{\scritinf}{\tilde \Gamma}
\newcommand{\kernel}{\mathbf{k}}
\DeclareMathOperator*{\argmax}{arg\,max}
\DeclareMathOperator*{\argmin}{arg\,min}
\newtheorem{example}{Example} 
\newtheorem{theorem}{Theorem}
\newtheorem{lemma}[theorem]{Lemma} 
\newtheorem{proposition}[theorem]{Proposition}
\newtheorem{assumption}{Assumption}
\crefname{assumption}{Assumption}{Assumptions}
\Crefname{assumption}{Assumption}{Assumptions}
\begin{document}

%

%
\runningauthor{Volodymyr Tkachuk, Seyed Alireza Bakhtiari, Johannes Kirschner, Matej Jusup, Ilija Bogunovic, Csaba Szepesvári}
\newcommand{\maintitle}{Efficient Planning in Combinatorial Action Spaces with Applications to Cooperative Multi-Agent Reinforcement Learning}

\twocolumn[
\aistatstitle{\maintitle}

\aistatsauthor{ 
	Volodymyr Tkachuk$^*$  \And
	Seyed Alireza Bakhtiari$^*$ \And
	Johannes Kirschner}

\aistatsaddress{ University of Alberta \And University of Alberta  \And University of Alberta}

\aistatsauthor{ 
	Matej Jusup \And
	Ilija Bogunovic\And
	Csaba Szepesvári }

\aistatsaddress{ETH Zurich \And University College London \And University of Alberta/DeepMind }  ]

\begin{abstract}
A practical challenge in reinforcement learning are combinatorial action spaces that make planning computationally demanding. For example, in cooperative multi-agent reinforcement learning, a potentially large number of agents jointly optimize a global reward function, which leads to a combinatorial blow-up in the action space by the number of agents. As a minimal requirement, we assume access to an argmax oracle that allows to efficiently compute the greedy policy for any Q-function in the model class. Building on recent work in planning with local access to a simulator and linear function approximation, we propose efficient algorithms for this setting that lead to polynomial compute and query complexity in all relevant problem parameters. For the special case where the feature decomposition is additive, we further improve the bounds and extend the results to the kernelized setting with an efficient algorithm. 
\end{abstract}

\section{\uppercase{Introduction}} \label{sec: introduction}

Reinforcement learning (RL) is concerned with training data-driven agents to make optimal decisions in interactive environments. 
An agent interacts with an environment by choosing actions and observing its state and a reward signal. 
The goal is to learn an optimal policy that maximizes the total reward. 
Efficiently computing optimal policies, also known as \emph{planning}, is therefore at the heart of any reinforcement learning algorithm.

Recent works have successfully applied reinforcement learning algorithms to complex domains including video games \citep{mnih2013playing}, tokamak plasmas control \citep{degrave2022magnetic}, robotic manipulation tasks \citep{akkaya2019solving}, to name a few. A common theme of these works is that the agent is trained on a simulated environment. This provides additional flexibility on how the agent can interact with the environment. A reasonable assumption is that the internal state of the simulator can be saved (`checkpointing') and later revisited. 
\looseness=-1


In this work, we formally study \emph{efficient planning with local access to a simulator}. The local access model was recently proposed by \citet{yin2021efficient} with the goal of making the simulation access model more practical in applications. Local access means that the only states at which the planner can query the simulator are the initial state or states returned in response to previously issued queries. Efficient planning means that given an initial state, the learner outputs a near-optimal policy using polynomial compute and queries in all relevant parameters. 

\newcommand{\otherfactors}{\frac{1}{1 - \gamma}, \frac{1}{\kappa}, \log(\frac{1}{\delta}), \log(b), \frac{1}{\epsilon}}
\newcommand{\otherfactorshidden}{}
\begin{table*}[t]
	\centering
	\begin{tabular}{c | c | c || c  c | c} 
		\hline
  		\multicolumn{2}{c|}{{Algorithms}}
            & Query $\epsilon = 0$
            & Query $\epsilon > 0$
		& Sub-optimality $\epsilon > 0$
		& Computation
		\\ \cline{3-4} 
		\hline
		\multirow{4}{*}[-1.3em]{\begin{turn}{90}\textsc{LSPI}\end{turn}}
	    & \textsc{Naive}
	    & $\tilde{\mathcal{O}} \left( \frac{d^3}{\kappa^2 (1-\gamma)^8} \right)$
	    & $\tilde{\mathcal{O}} \left( \frac{d^2}{\epsilon^2 (1-\gamma)^4} \right)$
		& $\tilde{\mathcal{O}} \left( \frac{\epsilon \sqrt{d}}{(1-\gamma)^2} \right)$
		& $ \poly(|\cA|, d\otherfactorshidden)$
	    \\
	    & \textsc{EGSS}
		& $\tilde{\mathcal{O}} \left( \frac{d^{3 + \boldsymbol{1}}}{\kappa^2 (1-\gamma)^8} \right)$
	    & $\tilde{\mathcal{O}} \left( \frac{d^2}{\epsilon^2 (1-\gamma)^4} \right)$
		& $\tilde{\mathcal{O}} \left( \frac{\epsilon \sqrt{d} \boldsymbol{\sqrt{d}}}{(1-\gamma)^2} \right)$
		& $ \poly(d\otherfactorshidden)$
		\\
	    
	    & \textsc{DAV}
	    & $\tilde{\mathcal{O}} \left( \frac{\boldsymbol{m^2} d^3}{\kappa^2 (1-\gamma)^8} \right)$
	    & $\tilde{\mathcal{O}} \left( \frac{d^2}{\epsilon^2 (1-\gamma)^4} \right)$
		& $\tilde{\mathcal{O}} \left( \frac{\epsilon \sqrt{d} \boldsymbol{m}}{(1-\gamma)^2} \right)$
		& $\poly(\sum_{i=1}^m |\cA^\ag{i}|, d\otherfactorshidden)$
		\\
	    & \textsc{KERNEL-DAV}
		& $\tilde{\mathcal{O}} \left( \frac{\boldsymbol{m^2} \tilde \Gamma^3}{\kappa^2 (1-\gamma)^8} \right)$
	    & $\tilde{\mathcal{O}} \left( \frac{\tilde \Gamma^2}{\epsilon^2 (1-\gamma)^4} \right)$
		& $\tilde{\mathcal{O}} \left( \frac{\epsilon \sqrt{\tilde \Gamma} \boldsymbol{m}}{(1-\gamma)^2} \right)$
		& $\poly(\sum_{i=1}^m |\cA^\ag{i}|, \scritinf \otherfactorshidden)$
		\\ 
		\hline
		\multirow{4}{*}[-1.3em]{\begin{turn}{90}\textsc{POLITEX}\end{turn}}
	    & \textsc{Naive}
		& $\tilde{\mathcal{O}} \left( \frac{d^{3}}{\kappa^4 (1-\gamma)^9} \right)$
	    & $\tilde{\mathcal{O}} \left( \frac{d}{\epsilon^4 (1-\gamma)^5} \right)$
		& $\tilde{\mathcal{O}} \left( \frac{\epsilon \sqrt{d}}{(1-\gamma)} \right)$
		& $ \poly(|\cA|, d\otherfactorshidden)$
	    \\
	    
	    & \textsc{EGSS}
		& $\tilde{\mathcal{O}} \left( \frac{\boldsymbol{m} d^{3 + \boldsymbol{1}}}{\kappa^4 (1-\gamma)^9} \right)$
	    & $\tilde{\mathcal{O}} \left( \frac{\boldsymbol{m} d}{\epsilon^4 (1-\gamma)^5} \right)$
		& $\tilde{\mathcal{O}} \left( \frac{\epsilon \sqrt{d} \boldsymbol{\sqrt{d}}}{(1-\gamma)} \right)$
		& $ \poly(\sum_{i=1}^m |\cA^\ag{i}|, d \otherfactorshidden)$
		\\
	    
	    & \textsc{DAV}
		& $\tilde{\mathcal{O}} \left( \frac{\boldsymbol{m^3} d^3}{\kappa^4 (1-\gamma)^9} \right)$
	    & $\tilde{\mathcal{O}} \left( \frac{\boldsymbol{m} d}{\epsilon^4 (1-\gamma)^5} \right)$
		& $\tilde{\mathcal{O}} \left( \frac{\epsilon \sqrt{d} \boldsymbol{m}}{(1-\gamma)} \right)$
		& $\poly(\sum_{i=1}^m |\cA^\ag{i}|, d\otherfactorshidden)$
		\\
	    
	    & \textsc{KERNEL-DAV}
		& $\tilde{\mathcal{O}} \left( \frac{\boldsymbol{m^2} \tilde \Gamma^3}{\kappa^2 (1-\gamma)^8} \right)$
	    & $\tilde{\mathcal{O}} \left( \frac{\boldsymbol{m} \tilde \Gamma}{\epsilon^4 (1-\gamma)^5} \right)$
		& $\tilde{\mathcal{O}} \left( \frac{\epsilon \sqrt{\tilde \Gamma} \boldsymbol{m}}{(1-\gamma)} \right)$
		& $\poly(\sum_{i=1}^m |\cA^\ag{i}|, \scritinf \otherfactorshidden)$
		\\
		\hline
	\end{tabular}
	\caption{Query complexity and sub-optimality bounds of algorithms proposed in Section \ref{sec: algorithms} and \ref{sec:additive} in the realizable ($\epsilon=0$) and $\epsilon$-misspecified ($\epsilon> 0$) setting. \textsc{Naive} refers to a direct implementation of the approach by \citet{yin2021efficient}. For $\epsilon=0$,the sub-optimality gap is $\kappa>0$, while for $\epsilon > 0$, the sub-optimality gap is given in the third column. All algorithms require $\cO(\poly(\otherfactors))$ computation. \textsc{LSPI-EGSS} requires access to a `greedy oracle' (\cref{ass: argmax oracle}). Results for \textsc{LSPI-(Kernel-)DAV} and $\textsc{POLITEX}$ hold for product action sets $\cA = \cA^{(1)} \times \cdots \times \cA^{(m)}$ and \cref{ass: feature decomposition}.\looseness=-1
	}\label{table:2}
\end{table*}

Motivated by the increasing complexity of applications, we specifically study the case where the state space is large or continuous.
To avoid the query complexity scaling with the size of the state space, it is standard to introduce linear function approximation \citep[e.g.,][]{bertsekas1996temporal,lagoudakis2003least,munos2005error,lattimore2020learning}. 
In particular, we assume linear $\epsilon$-realizability of joint state-action value functions for \emph{all} policies. 
This is motivated by the recent realization that realizability of the optimal state-action value function alone is not sufficient to develop a query-efficient planner~\citep{weisz2021exponential}. 
However, even under stronger realizability assumptions, previous approaches are not computationally efficient in the case where the action space is \emph{combinatorially large}, and direct enumeration of the action space becomes infeasible.
Therefore, we work with a minimal oracle assumption that allows us to compute the greedy policy for any Q-function in the model class (which amounts to solving a linear optimization over the feature space).

One prominent special case of this setting is multi-agent reinforcement learning. 
Multi-agent reinforcement learning has been a recent research focus with multiple promising attempts at tackling complex multi-agent problems, e.g., team games \citep{baker2019emergent}, large scale traffic signal control \citep{chu2019multi}, cooperative controls in powergrids \citep{chen2021powernet} among others.
Naively applying single-agent planning algorithms fails to achieve efficiency in the multi-agent setting because the single-agent algorithms typically face an exponential blow-up of the action space in the number agents. 
In many practical tasks, however, there is an inherent structure in the underlying dynamics that can be exploited to address both efficiency and scalability issues. \looseness=-1

\paragraph{Contributions} Our first contribution is a novel oracle-efficient variant of the Confident Monte-Carlo least-squares policy iteration (\textsc{Confident MC-LSPI}) algorithm by \citet{yin2021efficient}, for combinatorially large action spaces. 
The key insight is an efficient implementation of the \emph{uncertainty check}, that determines the diversity of the state and action set used for estimation. 
We also study a special case where the Q-function has an additive structure in the features (formally introduced in \Cref{ass: feature decomposition}), which leads to improved bounds in the regime where the dimension is large. 
In the multi-agent setting, the decomposition corresponds to agent-specific features, and the proposed algorithms achieve polynomial compute and query complexity in the number of agents and other quantities of interest. 
We further introduce a kernelized variant, which under the corresponding additivity assumption admits an efficient implementation. 
Lastly, the additive structure leads to an efficient implementation of the \textsc{Confident MC-Politex} algorithm that admits improved bounds in the misspecified setting. 
The formal results are summarized in \cref{table:2}.\looseness=-1


\section{\uppercase{Related Work}} \label{sec:related work}

Computing optimal policies, also known as \emph{planning}, is a central challenge in reinforcement learning \citep{sutton2018reinforcement,szepesvari2010algorithms}. The two most classical planning algorithms are value iteration \citep{bellman1957markovian} and policy iteration \citep{howard1960dynamic}. Approximate versions of value and policy iteration were analyzed by \citet{munos2003error,munos2005error,farahmand2010error}. A common setting is planning with a \emph{generative model} (also \emph{global} simulator access), where the learner can query the transition kernel at any state and action \citep{kakade2003sample}. In the corresponding tabular setting the query complexity of value and policy iteration are completely understood \citep[e.g.,][]{azar2012sample,azar2013minimax}. When combined with function approximation, the picture becomes more nuanced. A lower bound under misspecification was provided by \citet{du2019good}. Sample complexity bounds for least-squares policy iteration \citep{bertsekas1996temporal,lagoudakis2003least} are by \citet{lattimore2020learning}. The latter work combines a G-experimental design over state-action pairs with Monte-Carlo rollouts to obtain value estimates for the policies. In similar fashion, least-squares value iteration (LSVI) was analyzed in the generative model setting \citep{agarwal2020reinforcement}. Yet another approach is Politex \citep{abbasi2019politex,rltheory2022}, which uses mirror descent to improve the policy. \looseness=-1

A much larger body of work focuses on the online setting, where the learner interacts with the environment in one or multiple episodes. Early work that uses function approximation includes \citep{bradtke1996linear,melo2007q}. Recent works provide query complexity guarantees under various models \citep{osband2016generalization,yang2020function,ayoub2020model,zanette2020learning,du2021bilinear,zhou2021nearly}. This includes approaches that are computationally efficient for small action sets \citep{jin2020provably,agarwal2020pc}. We are not aware of provably query efficient algorithms with \emph{only} linear $Q_\pi$-realizability (\cref{asm:linear-q-pi})  for the online setting. \Citet{abbasi2019politex,lazic2021improved,wei2021learning} prove bounds with a \emph{feature excitation} condition, although these works do not consider large action sets. Negative results under weaker assumptions are known, e.g.~for $Q^*$-realizability \citep{weisz2021exponential} and approximate $Q_\pi$-realizability \citep{du2019good}.\looseness=-1

Recently, \citet{yin2021efficient} introduced the \emph{local access} model, in which the learner can query the simulator at the initial state or any state observed during planning. They further introduce a Monte-Carlo policy iteration algorithm that provides the basis of our work. Different to this previous work, we consider the combinatorial action set setting, and provide new algorithms that avoid scaling of the computational complexity with the size of the action set. 
Least-squares value iteration with local access was analyzed by \citet{hao2022confident}. For a detailed discussion on different simulators models we refer the reader to  \citep{yin2021efficient}.


Relatively few related works on computationally efficient planning in MDPs are concerned with combinatorial action spaces. 
This topic has received attention in the context of factored MDPs in planning \citep{dean1998solving,geisser2020trial,raghavan2012planning}, online RL \citep{osband2014near, xu2020near, tian2020towards, chen2020efficient} and in the empirical literature \citep{delarue2020reinforcement,hubert2021learning} with applications to vehicle routing and control problems. We are not aware of prior work with query complexity guarantee for MDPs with large action sets, however there is a long line of works on combinatorial bandits \citep[e.g.,][]{cesa2012combinatorial,chen2013combinatorial,shleyfman2014combinatorial,combes2015combinatorial,jourdan2021efficient}. Relevant in this context are also kernelized bandit algorithms (Bayesian optimization) that exploit additive structure of the reward function \citep{kandasamy2015high,wang2019improving,kirschner2021bias,mutny2018efficient,rolland2018high}. We consider a similar assumption in \cref{sec:additive} as a special case.


Multi-agent reinforcement learning  \citep{busoniu2008comprehensive,zhang2021multi} can be understood as a combinatorial setting, which has a large body of works on its own. Query complexity bounds focus mostly on the competitive setting, e.g. in tabular Markov games (e.g., \cite{shapley1953stochastic, song2021can,tian2021online,bai2020provable,liu2021sharp,leonardos2021global}). 
One of the key challenges is the exponential blowup in the action space with the number of agents, which is sometimes refered to as `curse of multi-agents'. \Citet{jin2021v} introduce a computationally efficient algorithm for tabular Markov games. Multi-agent reinforcement learning with function approximation is studied by \cite{huang2021towards,chen2021almost,jin2020provably}. These works consider the competitive setting and focus on obtaining query efficient algorithms, while the approaches are not computationally tractable.  In the limit where the number of agents becomes large, previous work uses mean-field approximations \citep{yang2018mean,pasztor2021efficient}.





Most closely related is \emph{cooperative} multi-agent learning. Early work by \citet{guestrin2001multiagent} proposes the use of factored MDPs to make planning tractable via message passing algorithms. \Citet{rashid2018qmix} propose a neural network architecture that allows to decouple the agent rewards in a way such that the greedy policy can be computed efficiently. 
The goal of these works is to ensure the greedy policy can be computed efficiently. \Citet{zohar2021locality} consider a setting where a graph structure captures the reward dependencies across the agents; however the guarantees they provide apply only to the bandit setting.\looseness=-1

\section{\uppercase{Preliminaries}} \label{sec:preliminaries}
We consider reinforcement learning in an infinite-horizon Markov decision process (MDP) specified by a tuple $\cM = \left(\cS, \cA, \PP, r, \gamma\right)$. 
As usual, $\cS$ denotes the state space, $\cA$ denotes the action space, and
$\PP : \cS \times \cA \rightarrow \Delta_{\cS}$ is the transition kernel, where $\Delta_{\cS}$ denotes the set of probability measures over $\cS$. 
Given a state $s \in \cS$ and action vector $a \in \cA$, the system transits to a new state $s' \sim \PP(s, a)$. The reward function is 
$r : \cS \times \cA \rightarrow [0, 1]$ and 
$\gamma \in [0, 1)$ is the discount factor.\looseness=-1

A stationary policy $\pi :\cS \rightarrow \Delta_\cA$ maps states to a distribution over $\cA$.
The state value function $V_\pi: \cS \to \RR$ of a policy $\pi$ from a state {$s \in \cS$} is 
\begin{align*}
	V_\pi(s) = \mathbb{E}_{\pi}\Bigg[\sum_{t=0}^\infty \gamma^t r(s_t, a_t)\Big|s_0 = s\Bigg] \,.
\end{align*}
The expectation is over the sequence of states $(s_t)_{t \in \NN}$ and actions  $(a_t)_{t \in \NN}$ queried from the transition kernel $\PP$ and the policy $\pi$. 
A policy $\pi^*$ is \emph{optimal} if $V_{\pi^*} = \max_{\pi} V_{\pi}$.

The Q-function $Q_\pi: \cS \times \cA \to \RR$ of a policy $\pi$ 
is defined for $s \in \cS$ and $a \in \cA$ as
\begin{align*}
	Q_{\pi}(s,a) = r(s,a)  + \gamma \mathbb{E}_{s' \sim \PP(s,a)}\left[V_{\pi}(s')\right].
\end{align*} 

In the following we assume that we are given a state-action feature map $\phi: \cS \times \cA \to \RR^d$, that allows to approximate the $Q$-function of any policy as a linear function.\looseness=-1
\begin{assumption}[Linear $Q_\pi$-realizability]
	\label{asm:linear-q-pi}
	For each policy $\pi$ there exists a weight vector $w_\pi \in \RR^d, \|w_\pi\|_2 \leq b $ satisfying  $\max_{s,a} |Q_\pi(s,a) -w_\pi^\T  \phi(s,a)| \le \epsilon$.
\end{assumption}
The assumption is commonly used in combination with policy iteration algorithms \citep{lattimore2020learning,zanette2020learning}.  In particular, the assumption allows to obtain query complexity results that are independent of the number of states and actions. We remark that the linear MDP assumption \citep{jin2020provably} implies $Q_\pi$-realizability, but not vice versa. We also make the following standard boundedness assumption:

\begin{assumption}[Bounded features]
	\label{ass: bounded features}
	We assume that $\|\phi(s, a)\|_2 \le 1$ for all $(s, a) \in \cS \times \cA$.
\end{assumption}

Our main objective is to obtain query and computationally efficient algorithms for the case where the action set $\cA$ is \emph{combinatorially} large, and direct enumeration becomes infeasible. To obtain meaningful results in this setting, we assume that the \emph{offline problem} of computing the greedy policy given a \emph{fixed} approximator $w \in \RR^d$ can be solved efficiently. This is formally captured in the next assumption.\looseness=-1
\begin{assumption}[Greedy oracle]
	\label{ass: argmax oracle}
	We have access to an oracle $\cG$ which takes as input a vector $w \in \RR^d$, a state $s \in \cS$ and a feature function $\phi: \cS \times \cA \to \RR^d$ and returns an action that maximizes $w^\T \phi(s, a)$.
	Formally
	\begin{align*}
		\cG(w,\phi) = \argmax_{a \in \cA} w^\T \phi(s, a)\,,
	\end{align*}
	with ties broken arbitrarily.
\end{assumption}
Combined with the linear $Q_\pi$-realizability (\cref{asm:linear-q-pi}), the greedy oracle amounts to solving a \emph{linear} optimization over the action set $\cA$. This is a reasonable assumption, as optimized solvers are available for many settings. It is also a \emph{minimal} assumption in the sense that it is required to implement a policy iteration procedure. Note that the assumption can be relaxed to require only an $\epsilon$-approximate solution, which is essentially equivalent to misspecification (\cref{asm:linear-q-pi}). In \cref{sec:additive} we provide an additive model where the oracle can be directly implemented.

Our goal is to find a computational and query efficient algorithm that given a starting state $\rho \in \cS$ returns a $\kappa$-optimal policy $\hat \pi$, i.e.~$V_{\pi^*}(\rho) - V_{\hat \pi}(\rho) \leq \kappa$ for $\kappa > 0$ while minimizing the number of queries needed. 
To obtain queries, the learner is given \emph{local access} to a simulator of the MDP \citep{yin2021efficient}.
A simulator of the MDP takes as input a state-action pair $(s,a) \in \cS \times \cA$ and returns a next state $s^\prime \sim \PP(s, a)$ and reward $r(s,a)$.
A local access simulator restricts the input state $s \in \cS$ only to those states which have been visited previously.





An important example where the action set is typically large is cooperative multi-agent reinforcement learning.
\begin{example}[Cooperative multi-agent RL] \label{ex:multi-agent} In the multi-agent setting, $m \in \NN$ agents act jointly on the MDP $\cM$.
Each agent $i \in [m]$ has a set of actions $\cA^\ag{i}$ available where $[m] := \{1,\dots,m\}$.
We denote the joint action set by $\cA = \cA^{(1:m)} := \cA^{(1)} \times ... \times \cA^{(m)}$. 
	 The state space $\cS$ is joint for all agents.
	 A centralized, stationary policy $\pi :\cS \rightarrow \Delta_{\cA^{(1:m)}}$ maps states to a distribution over $\cA^{(1:m)}$. In the \emph{cooperative} setting, the agents jointly maximize a global reward function $r : \cS \times \cA^{(1:m)} \rightarrow [0, 1]$.

\end{example}
Note that the size of the joint action set is exponential in the number of agents, which makes approaches designed for the single agent setting computationally intractable. We will revisit this example in \cref{sec:additive} where we discuss how an additive feature decomposition leads to algorithms that scale polynomially in the number of agents $m$. We remark that prior work on multi-agent RL has focused on architectures where the greedy policy can be computed efficiently \citep[e.g.,][]{guestrin2001multiagent,rashid2018qmix,delarue2020reinforcement,zohar2021locality}. \looseness=-1


\newcommand{\MAMCLSPIPOLITEX}{\textsc{MARL-MC-LSPI/Politex}}
\newcommand{\MCLSPI}{\textsc{Confident MC-LSPI}}
\newcommand{\MCLSPIEGSS}{\textsc{Confident MC-LSPI-EGSS}}
\newcommand{\ConfidentRollout}{\textsc{Rollout}}
\newcommand{\UncertaintyCheck}{\textsc{UncertaintyCheck}}
\newcommand{\UncertaintyCheckEGSS}{\textsc{UncertaintyCheck-EGSS}}
\newcommand{\DONE}{\textsc{done}}
\newcommand{\NONE}{\textsc{none}}
\newcommand{\UNCERTAIN}{\textsc{uncertain}}
\newcommand{\CERTAIN}{\textsc{certain}}
\newcommand{\LSPI}{\textsc{LSPI}}
\newcommand{\EGSS}{\textsc{EGSS}}
\newcommand{\DAV}{\textsc{DAV}}
\newcommand{\NAIVE}{\textsc{NAIVE}}
\newcommand{\Politex}{\textsc{Politex}}
\newcommand{\amax}{\hat a}

\section{\uppercase{Efficient MC-LSPI}} \label{sec: algorithms}
In this section, we extend the \textsc{Confident MC-LSPI} algorithm proposed by \cite{yin2021efficient} to the combinatorial action setting. 
More precisely, \cref{alg:confident ma mc-lspi} with \cref{alg:uncertainty check} used for the \UncertaintyCheck \ is equivalent to the \textsc{Confident MC-LSPI} algorithm presented in \cite{yin2021efficient}, which relies on either enumerating the action set or solving a quadratic maximization problem, both which become infeasible for large $\cA$ in general \citep[e.g.,][]{bhattiprolu2021framework}. 
The main challenge is to come up with a procedure that uses only polynomially many calls to the greedy oracle while also scaling polynomially in all other quantities of interest.\looseness=-1


At a high level, \cref{alg:confident ma mc-lspi} alternates between policy evaluation and policy improvement. 
For evaluation, a core set is constructed that holds a small but sufficiently diverse set of features corresponding to state-action pairs. 
For each element of the core set, the \ConfidentRollout~routine (\cref{alg:confident rollout ma}) returns a Monte-Carlo estimate of the Q-value. 
During each rollout, the \UncertaintyCheck~subroutine (\cref{alg:uncertainty check egss}) determines if a feature should be added to the core set. 
This procedure is repeated until no more elements are added to the core set. 
The Monte-Carlo returns from the rollouts are then used to construct a least-squares estimate of $Q_\pi(s,a)$, which in turn is used to improve the policy.

Formally, the outer loop aims to complete $K$ iterations of policy iteration.
The goal of each iteration $k$ is to estimate $Q_{\pi_{k-1}}$ using a weight vector $w_k \in \RR^d$ and derive a new greedy policy $\pi_k$, w.r.t.~$w_k$. 
For estimation, the algorithm maintains a \emph{core set} $\cC$ with elements corresponding to state-action pairs. The elements of the core set $z = (z_s, z_{a}, z_\phi, z_q) \in \cC$ are tuples containing a state $z_s \in \cS$, an action $z_{a} \in \cA$, the corresponding feature $z_\phi \in \RR^d$ , and a value estimate $z_q \in \RR \cup \{\NONE\}$.
We denote the vector of all value estimates in the core set as $q_\cC = (z_q)_{z \in \cC} \in \bR^{|\cC|}$.
The weight vector $w_k$ to estimate $Q_{\pi_{k-1}}$ is computed using regularized least squares, with $q_\cC$ as the targets (line 16). 
An improved policy based on $w_k$ is then calculated by following the greedy policy with respect to $w^\T \phi(s,a)$ (line 17). 
The core set is initialized in lines 3-8  by adding the initial state with a \emph{default action} $\bar a$, so that there is at least one element in the core set to rollout from (line 3). 
Then we continuously run the \UncertaintyCheck~algorithm until it stops returning a status of \UNCERTAIN, and add the uncertain tuple to the core set each time. 
This is to ensure that the final policy\footnote{The algorithm returns $\pi_{K-1}$ instead of $\pi_K$ because the proof requires that the uncertainty checks for the final policy pass. This is only ensured for $\pi_{K-1}$.} $\pi_{K-1}$  returned by the main algorithm is approximately optimal from the initial state $\rho$, and this can be insured if all the uncertain actions (from $\rho$) are added to the core set (details in Appendix \ref{app:efficient uncertainty check}).\looseness=-1

\begin{algorithm}[t]
	\caption{\textsc{Confident MC-LSPI}} \label{alg:confident ma mc-lspi}  
	\begin{algorithmic}[1]
		\State \textbf{Input:} initial state $\rho$, initial policy $\pi_0$, number of iterations $K$, threshold $\tau$, number of rollouts $n$, length of rollout $H$
		\State \textbf{Globals:} default action $\bar a$, regularization coefficient $\lambda$, discount $\gamma$, subroutine \UncertaintyCheck
		\State {$\cC \gets \{(\rho, \bar a, \phi(\rho, \bar a), \NONE)\}$} 
		\State status, result $\gets \UncertaintyCheck(\rho,  \cC, \tau)$
		\While {status $=$ \UNCERTAIN}
		\State $\cC \gets \cC \cup \{\text{result}\}$
		\State status, result $\gets \text{\UncertaintyCheck}(\rho, \cC, \tau)$
		\EndWhile
		\State $z_q \gets \text{\NONE}, \, \forall z \in \cC$ \quad \Comment{Policy iteration starts $(*)$}
		\For {$k \in 1, \dots, K$}
		\For {$z \in \cC$}
		\State status, result $\gets \text{\ConfidentRollout}(n, H, \pi_{k-1}, z, \cC, \tau)$
		\State \textbf{if} status $=$ \DONE, \textbf{then} $z_q = \text{result}$
		\State \textbf{else} $\cC \gets \cC \cup \{\text{result}\}$ and \textbf{goto} line $(*)$ 
		\EndFor 
		\State $w_k \gets (\Phi_\cC^\top \Phi_\cC + \lambda I)^{-1} \Phi_\cC^\top q_\cC$
		\State $\pi_k(a|s) \gets \mathds{1}\big(a = \argmax\limits_{\tilde{a} \in \cA} w_k^\top \phi(s, \tilde{a})\big)$
	\EndFor
	\State \Return $\pi_{K-1}$ 
\end{algorithmic}
\end{algorithm}

\begin{algorithm}[t]
	\caption{\ConfidentRollout} \label{alg:confident rollout ma}  
	\begin{algorithmic}[1]
		\State \textbf{Input:} number of rollouts $n$, length of rollouts $H$, rollout policy $\pi$, core set element $z$, core set $\cC$, threshold $\tau$.
		\For {$i = 1, ..., n$}
		\State $s_{i, 0} \gets z_s, a_{i, 0} \gets z_a$
		\State Query the simulator, obtain $r_{i, 0} \gets r(s_{i, 0}, a_{i, 0})$, and the next state $s_{i, 1}$
		\For {$t = 1, ..., H$} 
		\State status, result${\gets} \text{\UncertaintyCheck}(s_{i, t}, \cC, \tau)$
		\If {status = \UNCERTAIN} 
		\State \Return {status, result}
		\EndIf
		\State Sample $a_{i, t} \sim \pi(\cdot | s_{i,t})$
		\State Query the simulator with $s_{i, t}, a_{i, t}$, obtain $r_{i, t} \gets r(s_{i, t}, a_{i, t})$, and next state $s_{i, {t+1}}$
		\EndFor 
		\EndFor
		\State result $\gets \frac{1}{n} \sum_{i=1}^n \sum_{t=0}^H \gamma^t r_{i, t}$ 
		\State \Return \DONE, result
	\end{algorithmic}
\end{algorithm}

In each iteration $k$, a Monte-Carlo estimation procedure (\ConfidentRollout, \cref{alg:confident rollout ma}) is launched for every element $z \in \cC$ in the core set. 
An estimate (result in line 14) is obtained via taking the average return of $n$ Monte-Carlo rollouts of length $H$ while following policy $\pi_{k-1}$.
\ConfidentRollout~is \emph{successful} if it returns a status of~\DONE~and an estimate of $Q_{\pi_{k-1}}(z_s, z_{a})$, which is assigned to $z_q$.
If at iteration $k$ \ConfidentRollout~is successful for every core set element then $z_q$ has a value estimate for all $z \in \cC$, and the iteration is completed with the policy improvement step. 
The way the core set is constructed guarantees that the features of all the elements in the core set are sufficiently different to provide good target values $q_\cC$ for least squares (\cref{prop: approx value function bound for DAV,prop: approx value function bound for EGSS}).\looseness=-1



Each time when \ConfidentRollout~is \emph{unsuccessful}, it returns a status of \UNCERTAIN~and a corresponding tuple. The uncertain tuple is added to the core set and policy iteration is restarted (line 14) and the value estimates for all the core set elements are reset to \NONE~(line 9).
Roughly speaking, a tuple is flagged as uncertain when during the rollout a features is observed that is sufficiently different from all the features in the core set $\{ z_\phi: z \in \cC \}$.
Important is that adding tuples to the core set in this way ensures that the size of the core set is bounded by a $\oO(d)$ (\cref{lemma:bound on core set size}). 
Restarting policy iteration is mainly to simplify the analysis; in practice it is reasonable to continue with the same policy.\looseness=-1


It remains to specify the \UncertaintyCheck~subroutine that is used in \cref{alg:confident ma mc-lspi,alg:confident rollout ma}.
For a fixed state $s \in \cS$ the purpose of the uncertainty check is to search for an \emph{uncertain action} that satisfies 
\begin{align}
\phi(s, a)^\top (\Phi_\cC^\top \Phi_\cC + \lambda I)^{-1} \phi(s, a) > \tau\label{eq:uncertainty-check}
\end{align}
Here $\Phi_\cC \in \RR^{|\cC| \times d}$ is a matrix of all the features from the tuples in the core set stacked vertically. Solving \cref{eq:uncertainty-check} \emph{exactly} recovers the approach by \citet{yin2021efficient}. However, as this amounts to solving a positive-definite maximization problem, this is infeasible in general.

\subsection{Efficient Good Set Search (EGSS, \cref{alg:uncertainty check egss})}
Next, we show how to efficiently approximate the uncertainty check in \cref{eq:uncertainty-check}.
Define $V_\cC = \Phi_\cC^\top \Phi_\cC + \lambda I$ and a weighted matrix norm as $\|x\|_{B}^2 = x^\top B x, \ x \in \RR^d, B \in \RR^{d \times d}$.
Using this notation, \cref{eq:uncertainty-check} becomes
\begin{align*}
\phi(s, a)^\top (\Phi_\cC^\top \Phi_\cC + \lambda I)^{-1} \phi(s, a) = \|\phi(s, a)\|_{V_\cC^{-1}}^2\,> \tau.
\end{align*}
We define the \textit{good set} to be the set of all features with $\| . \|_{V_\cC^{-1}}^2$ weighted norm less than or equal to $\tau$ as follows
\begin{align*}
    \mathcal{D} = \{\phi(s, a): \|\phi(s, a)\|_{V_\cC^{-1}}^2 \le \tau\}.
\end{align*}
Fix a state $s \in \cS$.
We want to check if there exists an action outside of the good set (i.e. $a \in \cA$ that satisfies $\|\phi(s, a)\|_{V_\cC^{-1}}^2 > \tau$) with computation that does not depend on $|\cA|$.
To this end, let $L L^\T  = V_\cC^{-1}$ be a Cholesky decomposition of $V_{\cC}^{-1}$ and define  
 $\amax = \argmax_{a \in \cA} \|L^\T \phi(s, a)\|_\infty$. Note that $\amax$ satisfies the following norm inequalities:
$$\frac{1}{d}\|\phi(s, \amax)\|_{V_\cC^{-1}}^2 \le \|L^\T \phi(s, \amax)\|_\infty^2 \le \|\phi(s, \amax)\|_{V_\cC^{-1}}^2$$
 In other words, if $\|L^\T \phi(s, \amax)\|_\infty^2 > \tau$ holds, then  we have $\|\phi(s, \amax)\|_{V_\cC^{-1}}^2 > \tau$ and we have found an uncertain state-action pair.
At the same time if $\|L^\T \phi(s, \amax)\|_\infty^2 \le \tau$ then we are sure that $\|\phi(s, a)\|_{V_\cC^{-1}}^2 \le d \tau$ for all $a \in \cA$.
The fact that the last inequality is still sufficient to provide bounds on the sub-optimality of policy evaluation manifests in Proposition \ref{prop: approx value function bound for EGSS}, where only an extra factor of $\sqrt{d}$ is introduced.
Finally, notice that 
\begin{align}
&\max_{a \in \cA} \|L^\T \phi(s, a)\|_\infty = \max_{v \in \{\pm e_i\}_{i=1}^d} \max_{a \in \cA} \langle Lv, \phi(s, a)\rangle\label{eq:infty-norm-oracle}
\end{align}
can be computed efficiently using $2d$ calls to the greedy oracle (\cref{ass: argmax oracle}).

%

\begin{algorithm}[t]
	\caption{\UncertaintyCheckEGSS} \label{alg:uncertainty check egss}  
	\begin{algorithmic}[1]
		\State \textbf{Input:} state $s$, core set $\cC$, threshold $\tau$
		\State $L \gets \text{Cholesky}((\Phi_\cC^\top \Phi_\cC + \lambda I)^{-1})$ 
		\For {$v \in \{\pm e_l\}_{l = 1}^d$}
		\State $\amax \gets \argmax_{a \in \cA} \phi(s, a)^\top L v $
		\If {$\left(\phi(s, \amax)^\top L v\right)^2 > \tau$}
		\State result $\gets (s, \amax, \phi(s, \amax), \text{\NONE})$
		\State \Return {\UNCERTAIN, result}
		\EndIf
		\EndFor 
		\State \Return \CERTAIN, \NONE 
	\end{algorithmic}
\end{algorithm}

\subsection{Theoretical Guarantees}\label{ss:mc-lspi-egss-theory}
The result that 
characterizes the performance of
\MCLSPI~combined with \UncertaintyCheckEGSS~is summarized in the next theorem.
\begin{theorem}[\textsc{Confident MC-LSPI EGSS} Sub-Optimality] \label{thm:mc-lspi-egss sub-optimality}	
        Suppose \cref{asm:linear-q-pi,ass: bounded features,ass: argmax oracle} hold.
	If $\epsilon = 0$, for any $\kappa > 0$, with probability at least $1 - \delta$, the final policy $\pi_{K-1}$, returned by \MCLSPI~combined with \UncertaintyCheckEGSS~satisfies
	\begin{equation*}
		V^*(\rho) - V_{\pi_{K-1}}(\rho) \leq \kappa.
	\end{equation*}
	  The query and computation complexity are $\cO\big(\tfrac{d^4}{\kappa^2 (1-\gamma)^8} \big)$ and $\poly(d, \frac{1}{1 - \gamma}, \frac{1}{\kappa}, \log(\frac{1}{\delta}))$ respectively.
	  If $\epsilon > 0$, then with probability at least $1 - \delta$, the policy $\pi_{K-1}$, output satisfies
	\begin{equation*}
		V^*(\rho) - V_{\pi_{K-1}}(\rho) \leq \tfrac{64 \epsilon d}{(1-\gamma)^2} (1 +\log(1+b^2 \epsilon^{-2} d^{-1}))^{1/2}.
	\end{equation*}
	  The query and computation complexity are $\cO\big(\tfrac{d^2}{\epsilon^2 (1-\gamma)^4} \big)$ and $\poly(d, \frac{1}{1 - \gamma}, \frac{1}{\epsilon}, \log(\frac{1}{\delta}), \log(1+b))$, respectively.
       All parameter settings are in \cref{app: theorem proofs}.
\end{theorem}
When compared to the result in \citet[Theorem 5.1]{yin2021efficient} we have an extra factor of $d$ in the query complexity for $\epsilon = 0$, while for $\epsilon \neq 0$ we only have an extra factor of $\sqrt{d}$ in the sub-optimality of the output policy.
This is similar to linear bandits, where an extra $\sqrt{d}$ is suffered in the regret for oracle-efficient methods \citep{dani2008stochastic,agrawal2013thompson,abeille2017linear}. \looseness=-1

The full proof is given in \cref{app: theorem proofs}. 
The proof essentially follows the ideas in \citet{yin2021efficient} while carefully arguing how \UncertaintyCheckEGSS~affects the query complexity. 
For the computational complexity, note that \UncertaintyCheckEGSS~can be implemented in $\poly(d)$ by \cref{eq:infty-norm-oracle}, and linear algebra operations. 
Since the core set size is bounded (\cref{lemma:bound on core set size}), policy iteration only restarts $\cO(d)$ times.
Lastly, the policy improvement step is trivially implemented using the greedy oracle (\cref{ass: argmax oracle}). \looseness=-1


\newcommand{\UncertaintyCheckDAV}{\textsc{UncertaintyCheck-DAV}}
\newcommand{\UncertaintyCheckKDAV}{\textsc{UncertaintyCheck-K-DAV}}
\newcommand{\MCKern}{\textsc{Confident Kernel MC-LSPI/Politex}}
\newcommand{\MCLSPIKern}{\textsc{Confident Kernel MC-LSPI}}
\newcommand{\MCPolitexKern}{\textsc{Confident Kernel MC-Politex}}
\newcommand{\MCPolitex}{\textsc{Confident MC-Politex}}

\section{\uppercase{Additive Q-Functions}}\label{sec:additive}

The result in \cref{ss:mc-lspi-egss-theory} makes no restriction on the choice of features as long as the greedy policy can be computed efficiently (\cref{asm:linear-q-pi,ass: argmax oracle}).
Next, we introduce an additive feature model for which the oracle can be implemented efficiently.

With the greedy oracle (\cref{ass: argmax oracle}), one can use \MCLSPI~combined with \UncertaintyCheckEGSS~and directly invoke \cref{thm:mc-lspi-egss sub-optimality}.
However, in \cref{ss:dav} we introduce a new uncertainty check algorithm, \UncertaintyCheckDAV, that explicitly uses the additive structure.
The additive feature structure leads to improved results in the regimes where the dimension is large, but more importantly facilitates an efficient kernelized version of the \MCLSPI~algorithm (\cref{ss:kernel}). 
The additive model also allows an efficient implementation of \MCPolitex~\citep{yin2021efficient}, which leads to an improved dependence on the suboptimality in the misspecified setting (\cref{ss:politex}).

In the following, we assume that the action space can be decomposed into a product $\cA = \cA^\allm := \cA^{(1)} \times \cdots \times \cA^{(m)}$ for $m \geq 1$ (borrowing the standard notation from the multi-agent setting).
We further assume access to feature maps $\phi_i: \cS \times \cA^{(i)} \to \RR^d$ for each $i \in [m]$ and define $\phi(s, a^\allm) = \sum_{i=1}^m \phi_i(s, a^\ag{i})$.
The next assumption states that for any policy $\pi$, the $Q_\pi$-function is (approximately) linear in the feature map $\phi$ and decomposes additively across the components $\cA^{(i)}$. 
\begin{assumption}
	\label{ass: feature decomposition}
	For each policy $\pi$ there exists a weight vector $w_\pi \in \RR^d, \|w_\pi\|_2 \leq b$ satisfying \looseness=-1
 $\max\limits_{(s, a^\allm) \in \cS \times \cA} |Q_\pi(s,a^\allm) -w_\pi^\T  \sum_{j=1}^m \phi_j(s,a^{(j)})| \le \epsilon$.
\end{assumption}
In the context of the multi-agent setting (\cref{ex:multi-agent}), the interpretation is that each $\phi_i(s,a^\ag{i})$ models the contribution to the $Q$-function of each agent individually.
Moreover, when \cref{ass: feature decomposition} is satisfied, then for any weight vector $w \in \RR^d$ the greedy policy can be implemented with $\cO(d\sum_{i=1}^m|\cA^\ag{i}|)$ computation: \looseness=-1
\begin{align*}
	&\argmax\nolimits_{a^\allm \in \cA} w^\T \phi(s, a^\allm)
	\\
	&=\big(\argmax_{a^{(1)} \in \cA^{(1)}} w^\T \phi_1(s, a^{(1)}), ..., \argmax_{a^{(m)} \in \cA^{(m)}} w^\T \phi_m(s, a^{(m)})\big)
\end{align*}

A simple example when \cref{ass: feature decomposition} holds is when $m$ agents ``live'' in $m$ separate MDPs such that in each MDP the action-value functions are linearly realizable with their respective feature-maps and the goal is to maximize the sum of the rewards across the MDPs.
In cases like this, we say that the ``large'' MDP is a \emph{product MDP}.
Note that in this setting agents only observe a joint reward after taking their actions, so an optimal policy for the joint MDP may not always be learned by simply applying single agent algorithms in each individual MDP. 
In \cref{app:additive mdp example} we show that \cref{ass: feature decomposition} also captures MDPs that require cooperation between agents, and provide some empirical results.\looseness=-1

\subsection{Uncertainty Check using a Default Action Vector} \label{ss:dav}

In this section we introduce the uncertainty check with a default action vector (\UncertaintyCheckDAV, \cref{alg:uncertainty check dav}).
The goal of the uncertainty check is to ultimately bound the estimation error of $w_k$, i.e. 
\begin{align}
	|w_k^\T \phi(s, a^\allm) - Q_{\pi_{k-1}}(s, a^\allm)| \le \eta \,.\label{eq:suboptimality}
\end{align}
Lemma \ref{lemma: yin lemma b.2} shows that a sufficient condition is to ensure that $\|\phi(s, a^\allm)\|_{V_\cC^{-1}}^2 \le \tau$ for all $(s, a^\allm) \in (\cS \times \cA)$ that are queried during policy evaluation.\looseness=-1

We show that under \cref{ass: feature decomposition}, it is possible to achieve \cref{eq:suboptimality} while running the uncertainty check for a much smaller set of actions of size $\sum_{i=1}^m |\cA^\ag{i}|$. 
Recall that \MCLSPI~sets a \emph{default action vector} $\bar a^\allm \in \cA$ as a global. 
Define a subset of $\cA$ as $\bar \cA^\allm = \{(a^\ag{i}, \bar a^\ag{-i}): a^\ag{i} \in \cA^\ag{i}, i \in [m]\}$, where we define $(a^\ag{i}, \bar a^\ag{-i}) =(\bar a^\ag{1},...,\bar a^\ag{i - 1}, a^\ag{i}, \bar a^\ag{i + 1},...,\bar a^\ag{m})$ as the action vector resulting from changing agent $i$'s default action in $\bar a^\allm$ with $a^\ag{i}$.
Then, by \cref{ass: feature decomposition} for any $a^\allm \in \cA$ we have
\begin{align}
	&w_k^\T \phi(s, a^\allm) \nonumber
	= w_k^\T \sum_{i=1}^m \phi_i(s, a^{\ag{i}}) \nonumber \\
	&= w_k^\T \Big(\sum_{i=1}^m \phi_i(s, a^\ag{i}) \pm (m-1) \phi_i(s, \bar a^\allm) \Big) \nonumber \\
	&= w_k^\T \Big(\sum_{i=1}^m \phi_i(s, (a^\ag{i}, \bar a^\ag{-i})) - (m-1) \phi_i(s, \bar a^\allm) \Big) \nonumber 
\end{align}
Notice that $\bar a^\allm, (a^\ag{i}, \bar a^\ag{-i}) \in \bar \cA^\allm, \forall i \in [m]$. 
Thus, when $\|\phi(s, \tilde a^\allm)\|_{V_\cC^{-1}}^2 \le \tau$ for all $\tilde a^\allm \in \bar \cA^\allm$ we can ensure that for all action-vectors $a^\allm \in \cA^\allm$ that $|w_k^\T \phi(s, a^\allm) - Q_{\pi_{k-1}}(s, a^\allm)| \le (2m-1)\eta$.
In words, by checking the uncertainty of action-vectors that differ from the default action vector by at most one position $\tilde a^\allm \in \bar \cA^\allm$ we can bound the sub-optimality of our estimate $w_k$, since the feature of any action vector can be related to the feature of the default action vector under Assumption \ref{ass: feature decomposition}. 
Since $\bar \cA^\allm$ only contains $\sum_{i=1}^m |\cA^\ag{i}|$ elements, this procedure is $\poly(d,\sum_{i=1}^m |\cA^\ag{i}|)$.\looseness=-1

\begin{algorithm}
	\caption{\UncertaintyCheckDAV} \label{alg:uncertainty check dav}  
	\begin{algorithmic}[1]
		\State \textbf{Input:} state $s$, core set $\cC$, threshold $\tau$.
		\State \textbf{Globals:} number of action components $m$ 
		\For {$j \in [m]$}
		\For {$a^{(j)} \in \cA^{(j)}$}
		\State $\tilde{a} \gets (a^{(j)}, \bar a^{(-j)})$
		\If {$\phi(s, \tilde{a})^\top V_\cC^{-1} \phi(s, \tilde a) > \tau$}
		\State result $\gets (s, \tilde a, \phi(s, \tilde a), \text{\NONE})$
		\State \Return {\UNCERTAIN, result}
		\EndIf
		\EndFor 
		\EndFor 
		\State \Return \CERTAIN, \NONE 
	\end{algorithmic}
\end{algorithm}

The result 
that characterizes the performance of
\MCLSPI~combined with \UncertaintyCheckDAV ~is summarized in the next theorem.
\begin{theorem}[\textsc{Confident MC-LSPI DAV} Sub-Optimality] \label{thm:mc-lspi-dav sub-optimality}	
        Suppose Assumption \ref{ass: feature decomposition}, and \ref{ass: bounded features} hold.
	If $\epsilon = 0$, for any $\kappa > 0$, with probability at least $1 - \delta$, the policy $\pi_{K-1}$, output by \MCLSPI~combined with \UncertaintyCheckDAV~satisfies
	\begin{equation*}
		V^*(\rho) - V_{\pi_{K-1}}(\rho) \leq \kappa.
	\end{equation*}
	  The query and computation complexity are $\cO\left(\tfrac{m^2d^3}{\kappa^2 (1-\gamma)^8} \right)$.
	and $\poly(\sum_{i=1}^m |\cA^\ag{i}|, d, \frac{1}{1 - \gamma}, \frac{1}{\kappa}, \log(\frac{1}{\delta}))$ respectively. 
	  If $\epsilon > 0$, then with probability at least $1 - \delta$, the output policy $\pi_{K-1}$ satisfies\looseness=-1
	\begin{equation*}
		V^*(\rho) - V_{\pi_{K-1}}(\rho) \leq \tfrac{128 \epsilon \sqrt{d} m}{(1-\gamma)^2} (1 +\log(1+ b^2 \epsilon^{-2} d^{-1}))^{1/2}.
	\end{equation*}
	  The query and computation complexity are $\cO\left(\tfrac{d^2}{\epsilon^2 (1-\gamma)^4} \right)$.
	  and $\poly(\sum_{i=1}^m |\cA^\ag{i}|, d, \frac{1}{1 - \gamma}, \frac{1}{\epsilon}, \log(\frac{1}{\delta}), \log(1+b))$ respectively.
       All parameter settings are in \cref{app: theorem proofs}.
\end{theorem}
When compared to the result in \citet[Theorem 5.1]{yin2021efficient} we have an extra factor of $m^2$ in the query complexity for $\epsilon = 0$, while for $\epsilon \neq 0$ we only have an extra factor of $m$ in the sub-optimality of the output policy. On the other hand, the computational complexity is improved from $\cO(\prod_{i=1}^m |\cA^\ag{i}|)$ for the prior work to $\cO(\sum_{i=1}^m |\cA^\ag{i}|)$.
When compared to \cref{thm:mc-lspi-egss sub-optimality} where \UncertaintyCheckEGSS~was used instead of \UncertaintyCheckDAV~the extra dependence on $\sqrt{d}$ changed to $m$.

\subsection{Kernelized Setting}\label{ss:kernel}
The kernelized setting is a standard extension of the finite-dimensional linear setup \citep{srinivas2009gaussian,abbasi2012online}. 
It lifts the restriction that the features and parameter vector are elements of $\bR^d$. 
Formally the kernel is $\kernel : (\sS \times \aA^\allm)^2 \rightarrow \bR$, which gives rise to a reproducing kernel Hilbert space (RKHS) $\cH$, defined as a vector space $V_\cH := \RR^{\cS \times \cA^\allm}$ with inner product $\langle \cdot, \cdot \rangle_\cH : V_\cH \times V_\cH \to \RR$.
We require that the $Q_\pi$-function is approximately contained in an RKHS. 
This includes cases where the linear dimension of function class is infinite.
\begin{assumption}[Kernel $Q_\pi$-realizability]
\label{ass:kernel q-pi}
For each policy $\pi$ there exists a vector $\tilde Q_\pi \in \cH, \|\tilde Q_\pi\|_\cH \le b$ that satisfies 
    $\sup_{s \in \cS, a^\allm \in \cA^\allm} |Q_\pi(s,a^\allm) - \tilde Q_\pi (s,a^\allm)| \le \epsilon$, where $\tilde Q_\pi(s,a^\allm) = \langle 
    \tilde Q_\pi, \kernel(s, a^\allm, \cdot, \cdot )\rangle_\cH.$
\end{assumption}
Similar to the finite setting we assume an additive structure (on the kernel now) to allow efficient implementation.
For component $j \in [m]$, define the kernel as $\kernel_j : (\sS \times \aA^\ag{j})^2 \rightarrow \bR$, which gives rise to an RKHS $\cH_j$, defined as a vector space $V_{\cH_j} := \RR^{\cS \times \cA^\ag{j}}$ with inner product $\langle \cdot, \cdot \rangle_\cH : V_{\cH_j} \times V_{\cH_j} \to \RR$. 
\begin{assumption}
\label{ass:kernel additive}
    The kernel $\kernel$ can be written as $\kernel(s_1,a_1^\allm, s_2,a_2^\allm) = \sum_{j=1}^m \kernel_j(s_1,a_1^{(j)}, s_2,a_2^{(j)})$ where $s_1, s_2 \in \cS, \ a_1^\allm, a_2^\allm \in \cA^\allm$.
\end{assumption}

The kernel setting  requires us to address two main challenges. 
First, the scaling of the query complexity with the dimension $d$ needs to be improved to a notion of effective dimension. 
Following \citet{du2021bilinear, huang2021short} we make use of the critical information gain $\tilde \Gamma$ (defined in \cref{eq:critical infogain}, \cref{app:kernel setting}) which can be bounded for different RKHS of interest \citep{srinivas2009gaussian,huang2021short}. 
Second, computationally we cannot directly work with infinite dimensional features $\phi(s,a) = \kernel(s, a, \cdot, \cdot)$. 
Instead, we rely on the `kernel trick' and compute all quantities of interest in the finite-dimensional data space \citep{scholkopf2001generalized}. 
After formally arguing as stated above, one can show that a kernelized version of \MCLSPI~and \UncertaintyCheckDAV~provide the following sub-optimality guarantees on the output policy (proof in \cref{app:kernel setting}).

\begin{theorem}[\textsc{Confident Kernel MC-LSPI DAV} Sub-Optimality] \label{thm:kernel mc-lspi-dav sub-optimality}	
        Suppose Assumption \ref{ass:kernel q-pi}, \ref{ass:kernel additive}, and \ref{ass: bounded features} hold.
        Define $\scritinf := \critinf$.
	If $\epsilon = 0$, for any $\kappa > 0$, with probability at least $1 - \delta$, the policy $\pi_{K-1}$, returned by \MCLSPIKern~(\cref{alg:confident kernel mc-lspi/politex}) combined with \UncertaintyCheckKDAV~(\cref{alg:uncertainty check k-dav}) satisfies
	\begin{equation*}
		V^*(\rho) - V_{\pi_{K-1}}(\rho) \leq \kappa.
	\end{equation*}
	The query and computation complexity are $\cO\left(\tfrac{m^2\scritinf^3}{\kappa^2 (1-\gamma)^8} \right)$ and $\poly(\sum_{i=1}^m |\cA^\ag{i}|, \scritinf, \frac{1}{1 - \gamma}, \frac{1}{\kappa}, \log(\frac{1}{\delta}))$ respectively. 
	  If $\epsilon > 0$, then with probability at least $1 - \delta$, the final policy $\pi_{K-1}$ satisfies
	\begin{equation*}
		V^*(\rho) - V_{\pi_{K-1}}(\rho) \leq \tfrac{32 \epsilon m \sqrt{\scritinf}}{(1-\gamma)^2}.
	\end{equation*}
	  The query and computation complexity are $\cO\left(\tfrac{\scritinf^2}{\epsilon^2 (1-\gamma)^4} \right)$
	  and $\poly(\sum_{i=1}^m |\cA^\ag{i}|, \scritinf, \frac{1}{1 - \gamma}, \frac{1}{\epsilon}, \log(\frac{1}{\delta}), \log(1+b))$ respectively.
       All parameter settings are in \cref{app: theorem proofs}.
\end{theorem}
The result is identical to \cref{thm:mc-lspi-dav sub-optimality} except with $d$ replaced with the critical information gain $\critinf$.

\subsection{Politex}\label{ss:politex}
The Politex algorithm has been shown to obtain better sub-optimality gaurantees than LSPI by \citet{abbasi2019politex}.
In this section we show that \MCPolitex~presented by \citet{yin2021efficient} can be extended to combinatorially large action spaces.
Although Politex is a also based on policy iteration, like LSPI, an important difference is that it uses stochastic policies based on an exponential weighting of each actions $Q$-value.
Efficiently sampling from such a policy is not always possible when the action space is combinatorially large.
We show \cref{ass: feature decomposition} is sufficient to do so (Proposition \ref{prop: efficient politex policy sampling}).
Moreover, using similar arguments as in \cref{ss:dav}, indeed, \MCPolitex~combined with \UncertaintyCheckDAV~achieves better sub-optimality guarantees than \MCLSPI.

\begin{theorem}[\textsc{Confident MC-Politex} Sub-Optimality] \label{thm:mc-politex sub-optimality}	
        Suppose Assumption \ref{ass: feature decomposition}, and \ref{ass: bounded features} hold.
	If $\epsilon > 0$, for any $\kappa > 0$, with probability at least $1 - \delta$, the policy $\bar \pi_{K-1}$, output by \MCPolitex~(\cref{alg:confident ma mc-politex}) combined with \UncertaintyCheckDAV~(\cref{alg:uncertainty check dav}) satisfies
	\begin{equation*}
		V^*(\rho) - V_{\bar \pi_{K-1}}(\rho) \leq \tfrac{64 \epsilon m \sqrt{d}}{1-\gamma} (1 +\log(1+b^2 \epsilon^{-2} d^{-1}))^{1/2}.
	\end{equation*}
	The query and computation complexity are $\cO\left(\tfrac{md}{\epsilon^4 (1-\gamma)^5} \right)$
	  and $\poly(\sum_{i=1}^m |\cA^\ag{i}|, d, \frac{1}{1 - \gamma}, \frac{1}{\epsilon}, \log(\frac{1}{\delta}), \log(1+b))$ respectively.
       All parameter settings are in \cref{app: theorem proofs}.
\end{theorem}

As expected the sub-optimality is better (scales with $1/(1-\gamma)$) than that of  \MCLSPI (\cref{thm:mc-lspi-dav sub-optimality}), which scales with $1/(1-\gamma)^2$.
However, the query complexity is worse (as is typical for Politex), and an extra factor of $m$ is introduced, since mirror descent needs to be run on the entire action space of size $\prod_{i=1}^m |\cA^\ag{i}|$ for each state. 
We also extend the result to the kernelized setting in \cref{app: theorem proofs},
and show that \UncertaintyCheckEGSS~can be used when \cref{ass: feature decomposition} is satisfied.

\section{\uppercase{Conclusion}} \label{sec:discussion}

In this work, we considered the problem of planning with a local access simulator when the action space is combinatorially large.
We introduced several algorithms that achieve polynomial computational and query complexity guarantees, while still maintaining a reasonable sub-optimality of the output policy under various assumptions. The main novelty is an efficient implementation of the uncertainty check under the mild assumption of having access to a greedy oracle. If the $Q$-functions for all policies satisfy an additive structure we provide nuanced results that show how the sample complexity can be improved in the regime where the dimension is large. Under the same additive structure our results also extend to the kernelized setting. An interesting direction for future work is to extend the results to the Factored MDP model \citep{guestrin2001multiagent} or the Confident LSVI algorithm \citep{hao2022confident}. 



 \subsubsection*{Acknowledgements}
 Johannes Kirschner gratefully acknowledges funding from the SNSF Early Postdoc.Mobility fellowship P2EZP2\_199781.
 Matej Jusup gratefully acknowledges support by the Swiss National Science Foundation under the research project DADA/181210.
Csaba Szepesv\'ari gratefully acknowledges funding  from the Canada CIFAR AI Chairs Program, Amii and NSERC.

\bibliographystyle{unsrtnat}
\bibliography{references}

\appendix
\onecolumn

\section{\uppercase{Efficient Policy Sampling}} \label{app:efficienct policy sampling}
    The policy in \citep{yin2021efficient} for \MCLSPI~and \MCPolitex~is as follows
    \begin{equation}
    \pi_k(a|s) \gets 
        \begin{cases}
            \mathds{1}\left(a = \argmax\limits_{\tilde{a} \in \cA} w^\top \phi(s, \tilde{a})\right) & \text{LSPI} \\
            \exp\left(\alpha \sum\limits_{j=0}^{k-1} Q_j(s, a)\right) / \sum\limits_{a \in \cA} \exp\left(\alpha \sum\limits_{j=0}^{k-1} Q_k(s, a)\right) . & \text{Politex}
        \end{cases} \label{eq:yin policy}
    \end{equation}
    with $w_k = (\Phi_\cC^\top \Phi_\cC + \lambda I)^{-1} \Phi_\cC^\top q_\cC$ 
    and $Q_{k-1}(s, a) = \min\{\max\{w_k^\T \phi(s, a), 0\}, 1/(1-\gamma)\}$ for the Politex case only.
    In this section we show that the above policy can be sampled from efficiently if \cref{ass: feature decomposition}  or \cref{ass: argmax oracle} is satisfied for the LSPI case and policy $\pi_k$ can be sampled from efficiently if \cref{ass: feature decomposition} is satisfied for the Politex case.
    To be precise, by efficiently we mean with computation that does not depend on $|\cA|$.
    We assume only $w \in \RR^d$ or $w_0, ..., w_{k-1} \in \RR^d$ (for LSPI and Politex respectively) and a feature map $\phi: \cS \times \cA \to \RR^d$ are given, thus the process of sampling may require calculating the policy if necessary to accurately sample.
    First we handle the LSPI case.
    
    \begin{proposition}[Efficient LSPI Policy Sampling]
        \label{prop: efficient lspi policy sampling}
        Given state $s \in \cS$, parameter vector $w \in \RR^d$, feature map $\phi: \cS \times \cA \to \RR^d$ and assumption \cref{ass: feature decomposition} or \cref{ass: argmax oracle} satisfied. 
        Then policy
        $$\pi_k(a|s) = \mathds{1}\left(a = \argmax_{\tilde{a} \in \cA} w^\top \phi(s, \tilde{a})\right)$$
        can be sampled from in with computation that does not depend on $|\cA|$.
    \end{proposition}
    \begin{proof}
        One can sample from policy $\pi_k$ by simply outputting the result of $\argmax_{\tilde{a} \in \cA} w^\top \phi(s, \tilde{a})$. 
        Under assumption \cref{ass: argmax oracle} $\argmax_{\tilde{a} \in \cA} w^\top \phi(s, \tilde{a})$ can be computed in constant time by applying the oracle to $w$ and $\phi$ (i.e. $\cG(w, \phi)$).
        While, \cref{ass: feature decomposition} implies we can compute $\argmax_{\tilde{a} \in \cA} w^\top \phi(s, \tilde{a})$ in $\poly(\sumA, d)$ time, since 
        \begin{align*}
	&\argmax\nolimits_{a^\allm \in \cA^\allm} w^\T \phi(s, a^\allm)
	\\
	&=\big(\argmax_{a^{(1)} \in \cA^{(1)}} w^\T \phi_1(s, a^{(1)}), ..., \argmax_{a^{(m)} \in \cA^{(m)}} w^\T \phi_m(s, a^{(m)})\big)
\end{align*}
    \end{proof}

    Next, we handle the Politex case. 
    To achieve the result below we assume \cref{ass: feature decomposition} is satisfied.
    We have to modify the Politex policy in \cref{eq:yin policy} slightly, by removing the clipping of the $Q$-function at each iteration $k$ (i.e. we define the $Q$-function at iteration $k$ to be $Q_{k-1}(s, a) = w_k^\T \phi(s, a)$ instead of $Q_{k-1}(s, a) = \min\{\max\{w_k^\T \phi(s, a), 0\}, 1/(1-\gamma)\}$).
    This was done since we were not aware of an efficient way to compute the clipped $Q$-function for all action-vectors in $\cA^\allm$.
    Importantly, removing the clipping does not suffer any increase in the dominating terms of the final policies sub-optimality (shown in \cref{app: theorem proofs})
    \begin{proposition}[Efficient Politex Policy Sampling]
        \label{prop: efficient politex policy sampling}
        Given state $s \in \cS$, parameter vectors $w_0, ..., w_{k-1} \in \RR^d$, feature map $\phi: \cS \times \cA^\allm \to \RR^d$ and \cref{ass: feature decomposition} satisfied. 
        Then policy
        $$\pi_k(a^\allm|s) = \exp\left(\alpha \sum\nolimits_{j=0}^{k-1} w_{j}^\T \phi(s, a^\allm)\right)/ \sum\nolimits_{\tilde{a}^\allm \in \cA^\allm} \exp \left(\alpha \sum\nolimits_{j=0}^{k-1} w_{j}^\T \phi(s, \tilde{a}^\allm)\right)$$ 
        with $a^\allm \in \cA^\allm$ can be sampled from in time $\poly(\sum_{i=1}^m |\cA_i|, d)$.
    \end{proposition}

    \begin{proof}
    Fix arbitrary $a^\allm \in \cA^\allm$.
    To sample from $\pi_k$ it is sufficient to sample actions $a^\allm \in \cA^\allm$ proportional to $\exp(\alpha \sum_{j=0}^{k-1} Q_{j}(s, a^\allm))$.
    Rearranging $\exp(\alpha \sum_{j=0}^{k-1} Q_{j}(s, a^\allm))$ and plugging in that $\phi(s, a^\allm) = \sum_{i=1}^m \phi_i(s, a^\ag{i})$ under assumption \cref{ass: feature decomposition} we have
    \begin{align*}
        \exp\left(\alpha \sum\nolimits_{j=0}^{k-1} w_{j}^\T \phi(s, a^\allm)\right) 
        &= \prod_{j=0}^{k-1} \exp\left(\alpha w_{j}^\T \phi(s, a^\allm)\right) \\
        &= \prod_{j=0}^{k-1} \exp\left(\alpha w_{j}^\T \sum_{i=1}^m \phi_i(s, a^\ag{i})\right) \\
        &= \prod_{i=1}^m \prod_{j=0}^{k-1} \exp\left(\alpha w_{j}^\T \phi_i(s, a^\ag{i})\right)
    \end{align*}

    Which means that the probability of sampling action $a^\allm$ is equal to the product of the probabilities of sampling $a^\ag{i}$ for $i \in [m]$ independently.
    Since $a^\allm$ was arbitrary this completes the proof.
    \end{proof}

    \section{\uppercase{Bound on Core Set Size}}
    \citet{yin2021efficient} showed that when only tuples containing state-action vectors that satisfy $\phi(s, a)^\top (\Phi^\top \Phi + \lambda I)^{-1} \phi(s, a) > \tau$ are add to the core set then it can be bounded as follows.
    
    \begin{lemma}[Bound on Core Set Size (Lemme 5.1 in \citep{yin2021efficient})] \label{lemma:bound on core set size}
        When \cref{ass: bounded features} is satisfied, and $(s,  a) \in (\cS \times \cA)$ that satisfy $\phi(s, a)^\top (\Phi_\cC^\top \Phi_\cC + \lambda I)^{-1} \phi(s, a) > \tau$ are added to the core set, the size of the core set can be bounded by
        \begin{align}
             \tilde{C}_{\max} := \frac{e}{e-1} \frac{1 + \tau}{\tau} d \left( 
                \log(1 + \frac{1}{\tau}) +
                \log(1 + \frac{1}{\lambda})
            \right). \label{eq:cmax-new}       
        \end{align}
    \end{lemma}


    \section{\uppercase{Efficient Uncertainty Check}} \label{app:efficient uncertainty check}
    \label{app: efficient uncertainty check}
    The \MCLSPI~algorithm proposed by \citet{yin2021efficient} is the same as our \MCLSPI~(\cref{alg:confident ma mc-lspi}) algorithm combined with \UncertaintyCheck~(\cref{alg:uncertainty check}) and the policy on line 17 of \MCLSPI~replaced with \cref{eq:yin policy}.
    \begin{algorithm}
    \caption{\UncertaintyCheck} \label{alg:uncertainty check}  
    \begin{algorithmic}[1]
    \State \textbf{Input:} state $s$, core set $\cC$, threshold $\tau$
    \For {$a \in \cA$}
        \If {$\phi(s, a)^\top (\Phi_\cC^\top \Phi_\cC + \lambda I)^{-1} \phi(s, a) > \tau$}
            \State status $\gets$ \UNCERTAIN, result $\gets (s, a, \phi(s, a), \NONE)$
            \State \Return {status, result}
        \EndIf
    \EndFor 
    \State \Return \CERTAIN, \NONE 
    \end{algorithmic}
    \end{algorithm}
    Notice that \UncertaintyCheck~requires iterating over $\cA$ (line 2), which is computationally expensive with the action space is combinatorially large.
    In this appendix we show how the loop over all actions $a \in \cA$ in the \UncertaintyCheck~algorithm can be avoided when either \cref{ass: feature decomposition} or \cref{ass: argmax oracle} is satisfied.
    In particular, we show that \UncertaintyCheckDAV~and \UncertaintyCheckEGSS~algorithms are able to reduce the computation time of \UncertaintyCheck~to no longer depend on $|\cA|$, while still maintaining suitable output policy guarantees.

    Since, we are extending the \MCLSPI~algorithm proposed by \citet{yin2021efficient}, we will be borrowing much of the steps from their proof.
    \citet{yin2021efficient} used a \textit{virtual algorithm} (VA) and \textit{main algorithm} (MA) to prove the sub-optimality of their \MCLSPI~algorithm.
    We give a brief summary of the VA and MA; however, avoid full details since we use the exact same definition as in \citet{yin2021efficient}. 
    Until the next subsection, assume \UncertaintyCheck~is used in \MCLSPI~and \ConfidentRollout.
    The MA is exactly \MCLSPI. 
    The VA is based on the \MCLSPI~algorithm, but has some differences, which we outline next. 
    The VA runs for exactly $C_\text{max}$ loops, $K$ iterations, and completes all $n$ of its rollouts of length $H$. 
    For each loop and iteration $k$ the VA always obtains estimates $q_\cC$ of its policy.
    The VA uses a different policy than the MA for rollouts.
    We will first focus on the LSPI case and return to Politex much later.
    The VA's $Q$-function at iteration $k$ is  
    \begin{equation*}
        \tilde{Q}_{k-1}(s, a)=\begin{cases}
                          \tilde{w}_k^\top \phi(s, a) \quad &\text{if} \, \phi(s, a) \in \mathcal{D} \\
                          Q_{\tilde{\pi}_{k-1}}(s, a)     \quad &\text{if} \, \phi(s, a) \notin \mathcal{D} \\
                    \end{cases}
    \end{equation*}
    where $\tilde w_k = V_\cC^{-1} \Phi_\cC^\top \tilde q_\cC$, and $\tilde q_\cC$ are the estimates obtained from running \ConfidentRollout~on each element of the core set, and $\mathcal{D} = \{\phi(s, a): \|\phi(s, a)\|_{V_\cC^{-1}}^2 \le \tau\}$ is the \textit{good set}.
    The VA's policy is
    \begin{equation*}
        \tilde \pi_k(a | s) = \mathds{1} \left( a = \argmax_{\tilde a \in \cA} \tilde Q_{k-1}(s, \tilde a) \right).
    \end{equation*}

    The nice thing about defining the VA's policy in this way is that we can make use of the following Lemma from \citep{yin2021efficient}.
    
    \begin{lemma}[Lemma B.2 in \citep{yin2021efficient}]
    \label{lemma: yin lemma b.2}
    Suppose that Assumption \cref{ass: feature decomposition} holds. 
    With all terms as defined earlier and $\theta > 0$. 
    Then, with probability at least 
    $$1 - 2C_{\text{max}} \exp(-2 \theta^2(1-\gamma)^2 n)$$
    for any $(s,  a) \in (\cS \times \cA)$ pair such that $\phi(s, a) \in \mathcal{D}$, we have 
    $$|\tilde{Q}_{k-1} (s, a) - Q_{\tilde\pi_{k-1}} (s, a)| \le b\sqrt{\lambda \tau} + \left(\epsilon +  \frac{\gamma^{H-1}}{1 - \gamma} + \theta \right) \sqrt{\tau C_{\text{max}}} + \epsilon := \eta$$
    \end{lemma}

    Notice that for any $(s,  a) \in (\cS \times \cA)$ pair such that $\phi(s, a) \notin \mathcal{D}$, the VA's $Q$-function $\tilde Q_{k-1}$ has access to the true $Q$-function $Q_{\tilde{\pi}_{k-1}}$ of policy $\tilde \pi_{k-1}$.
    Thus, we have that 
    \begin{equation}
    \label{eq:va inf-norm bound}
        \| \tilde{Q}_{k-1} (s, a) - Q_{\tilde\pi_{k-1}} (s, a) \|_\infty \le \eta 
    \end{equation}
    Combined with the fact that $\tilde \pi_k$ is greedy w.r.t. $\tilde Q_{k-1}$ the above result turns out to be especially useful.
    
    To understand why, we state a classic policy improvement result, which can be found as Lemma B.3 in \citet{yin2021efficient} and in other papers.
    \begin{lemma}[approximate policy iteration]
    \label{lemma:approximate policy iteration}
        Suppose that we run K approximate policy iterations and generate a sequence of policies
        $\pi_0, \pi_1, \pi_2, \cdots, \pi_K$.
        Suppose that for every $k = 1, 2, \cdots, K$, in the k-th iteration, we obtain a function
        $\tilde{Q}_{k-1}$ such that, $\| \tilde{Q}_{k - 1} - Q_{\pi_{k - 1}} \|_\infty \leq \eta$,
        and choose $\pi_k$ to be greedy with respect to $\tilde{Q}_{k-1}$.
        Then
        \begin{align*}
            \| Q^* - Q_{\pi_K} \|_\infty \leq \frac{2 \eta}{1 - \gamma} + \frac{\gamma^K}{1 - \gamma},
        \end{align*}
    \end{lemma}

    In our case the VA's policy $\tilde \pi_k$ is greedy w.r.t. $\tilde Q_{k-1}$ and thus we have that
    \begin{align*}
        \| Q^* - Q_{\tilde \pi_K} \|_\infty \leq \frac{2 \eta}{1 - \gamma} + \frac{\gamma^K}{1 - \gamma},
    \end{align*}

    Now we explain how the MA can be related to the VA, and make use of the above result.
    The \UncertaintyCheck~algorithm can have two cases: 
    
    \textbf{Case 1:} $\|\phi(s, a)\|_{V_\cC^{-1}}^2 > \tau$ holds for at least one $a \in \cA$,
    
    \textbf{Case 2:} $\|\phi(s, a)\|_{V_\cC^{-1}}^2 \le \tau$ holds for all $a \in \cA$. This is equivalent to saying $\phi(s, a) \in \mathcal{D}, \ \forall a \in \cA$.

    The VA is exactly the same at the MA algorithm, until Case 1 occurs for the first time.
    This is because the MA's and VA's simulators are coupled, in the sense that at iteration $k$, rollout $i$, and step $t$, when both simulators are queried with the same state-action vector pairs, they sample the exact same next state and reward. 
    The VA also uses the same initial policy as the MA at the start of policy iteration for every loop.
    Once Case 1 occurs the MA would restart policy iteration (else condition in line 14 of \MCLSPI), while the VA does not. 
    The VA records the state-action vector pair when Case 1 occurs for the first time and adds it to the core set once it completes running policy iteration for the current loop.
    In this way the core set maintained by the MA and VA are always the same.
    Since the size of the core set is bounded by $C_\text{max}$ when $(s, a) \in (\cS \times \cA)$ that satisfy $\phi(s, a)^\top (\Phi_\cC^\top \Phi_\cC + \lambda I)^{-1} \phi(s, a) > \tau$ are added to the core set (\cref{lemma:bound on core set size}), there will be a loop of policy iteration at which the MA and VA never encounter Case 1 for any of the $K$ iterations of policy iteration.
    We call this loop the \emph{final loop}.
    This is equivalent to say that all $(s,  a) \in (\cS \times \cA)$ observed during all $K$ iterations of policy iteration in the final loop are in the good set (i.e. $\phi(s, a) \in \mathcal{D}$).
    Notice that this means MA and VA behaved identical in the final loop, since the VA's policy would have always been greedy w.r.t. $\tilde{w}_k^\top \phi$ and the MA and VA use the same initial policy at the start of each loop.
    It turns out this relationship between the MA and VA allows us to bound the sub-optimality of the MA in the final loop, by using the result in \cref{eq:va inf-norm bound} we have for the VA. 
    More precisely, the following result can be extracted from \citep{yin2021efficient}
    \begin{proposition}[equation (B.15) in \citet{yin2021efficient}] \label{prop:optimality of output policy}
    With all terms as defined earlier. 
    Define $\eta \ge \|\tilde Q_{k-1}(s, a) - Q_{\tilde \pi_{k-1}}(s, a)\|_\infty$. 
    Suppose $\eta \ge |\tilde w_{k}^\T \phi(\rho, a) - Q_{\tilde \pi_{k-1}}(\rho, a)|, \ \forall a \in \cA$.
    Then, if the VA and MA behave identically in the final loop, with probability at least $1 - 4KC_{\text{max}}^2 \exp(-2 \theta^2(1-\gamma)^2 n)$ we have
    \begin{align}
        V^*(\rho) - V_{\pi_{K-1}}(\rho) \leq \frac{8 \eta}{(1 - \gamma)^2} + \frac{2 \gamma^{K - 1}}{(1 - \gamma)^2} \label{eq: v function bound main}
    \end{align}
    \end{proposition}
    
    Notice, that we require three things to use the above result. 
    We need a bound on $\|\tilde Q_{k-1}(s, a) - Q_{\tilde \pi_{k-1}}(s, a)\|_\infty$. 
    We need a bound on $|\tilde w_{k}^\T \phi(\rho, a) - Q_{\tilde \pi_{k-1}}(\rho, a)|, \ \forall a \in \cA$.
    We need to ensure that the VA and MA behave identically in the final loop. 
    Then, we can get a bound on the sub-optimality of the MA's output policy $\pi_{K-1}$.
    An important observation is that \UncertaintyCheck~ensured that MA and VA behave identically in the final loop.
    It did this by making sure that the VA's policy $\tilde \pi_k$ would only be able to use $\tilde{w}_k^\top \phi$ to derive its actions, since \UncertaintyCheck~always returns a status of \textsc{certain} in the final loop, which means that $\phi(s, a) \in \mathcal{D}$ for all $s, a \in \cS \times \cA$ encountered in the final loop.
    With this information in mind, we now show that \UncertaintyCheckDAV~and \UncertaintyCheckEGSS~only requires computation independent of $|\cA|$, while providing only slightly worse sub-optimality guarantees when compared to the result in \citep{yin2021efficient}.


    \subsection{Efficient Good Set Search Approach (EGSS)} \label{subsec:good set search}
    
    In this section we prove some useful results for \UncertaintyCheckEGSS.
    Fix a state $s \in \cS$.
    First, we show that with computation independent of $|\cA|$, one can find an action vector $a \in \cA$ that approximately maximizes $\phi(s, a)^\top V_\cC^{-1} \phi(s, a)$.
    
    \begin{lemma}[Efficient good set search]
    \label{lemma:good set search}
    Assume either \cref{ass: argmax oracle} is satisfied.
    With all terms as defined earlier. 
    One can ensure, with $2 d$ calls to the greedy oracle that
    $$\phi(s, a)^\top V_\cC^{-1} \phi(s, a) \le d\tau$$
    for all $a \in \cA$, or there exists an $a \in \cA$ such that
    $$\phi(s, a)^\top V_\cC^{-1} \phi(s, a) > \tau.$$
    Further, if \cref{ass: feature decomposition} is satisfied, then the same guarantees hold with $2 d^2 \sumA$ computation time.
    \end{lemma}
    
    \begin{proof}
    Recall that we are able to compute $\max_{a \in \cA} \langle u, \phi(s, a) \rangle$ for any $u \in \RR^d$ in constant time if \cref{ass: feature decomposition} is satisfied, and in $d \sumA$ time if \cref{ass: argmax oracle} is satisfied.
    We make use of a bi-directional 2-norm to $\infty$-norm inequality that will take advantage of the above mentioned efficient computation.
    Fix $\cC$ and define the lower triangular matrix $L$ via the Cholesky decomposition $V_\cC^{-1} = L L^\top$.
    Define $\{e_i\}_{i=1}^d$ as the standard basis vectors and 
    \begin{equation*}
    (v^*, a_\text{max}) := \text{arg} \left(\max_{v \in \{\pm e_i\}_{i=1}^d} \max_{a \in \cA} \langle L v, \phi(s, a) \rangle \right) 
    \end{equation*}
    Then we have that
    \begin{align}
       \frac{1}{d} \| \phi(s, a_\text{max}) \|_{V_\cC^{-1}}^2
       &= \frac{1}{d} \phi(s, a_\text{max})^\top V_\cC^{-1} \phi(s, a_\text{max}) \nonumber \\
       &= \frac{1}{d} \phi(s, a_\text{max})^\top L L^\top \phi(s, a_\text{max}) \nonumber \\
       &= \frac{1}{d} \| L^\top \phi(s, a_\text{max}) \|_2^2 \nonumber \\
       &\le \max_{a \in \cA} \| L^\top \phi(s, a)\|_\infty^2 \nonumber \\
       &= \max_{v \in \{\pm e_i\}_{i=1}^d} \max_{a \in \cA} \langle v, L^\top \phi(s, a) \rangle^2 \nonumber \\
       &= \max_{v \in \{\pm e_i\}_{i=1}^d} \max_{a \in \cA} \langle L v, \phi(s, a) \rangle^2 \label{inf-norm term} \\
       &= \langle L v^*, \phi(s, a_\text{max}) \rangle^2  \nonumber  \\
       &\le \| L^\top \phi(s, a_\text{max}) \|_2^2 \nonumber 
    \end{align}
    The purpose of writing all the equalities up to \cref{inf-norm term} was to show that \cref{inf-norm term} can be computed efficiently. 
    This is since we are able to compute $\max_{a \in \cA} \langle L v, \phi(s, a) \rangle^2$ in constant time if \cref{ass: feature decomposition} is satisfied, and in $d \sumA$ time if \cref{ass: argmax oracle} is satisfied, and $\{\pm e_i\}_{i=1}^d$ contains $2d$ elements.
    Also, note that $L$ can be computed with at most $d^2$ computation in each loop by doing a rank one update to the Cholesky decomposition of $V_\cC^{-1} = L L^\top$.
    
    If equation (\cref{inf-norm term}) is larger than $\tau$, then $\|\phi(s, a_\text{max}) \|_{V^{-1}}^2 > \tau$.
    While, if equation (\cref{inf-norm term}) is less than or equal $\tau$, then $\|\phi(s, a_\text{max}) \|_{V^{-1}}^2 \le d\tau$, completing the proof.
    \end{proof}

    \UncertaintyCheckEGSS~is essentially an implementation of equation (\cref{inf-norm term}), thus its computation is independent of $|\cA|$, as stated in \cref{lemma:good set search}.
    Also, since only $a \in \cA$ that satisfy $\|\phi(s, a)\|_{V_\cC^{-1}}^2 \ge \|\phi(s, a)\|_\infty^2 > \tau$ are added to the core set, we can still use \cref{lemma:bound on core set size} to bound the size of the core set by $C_\text{max}$.
    Basically, \cref{inf-norm term} is an underestimate of $\| \phi(s, a_\text{max}) \|_{V^{-1}}^2$ and we only add elements to the core set when it is larger than $\tau$, thus the core set is no larger than it was when using \UncertaintyCheck.
    
    Now, we aim to ensure that the VA and MA behave identically in the final loop.
    Notice that \UncertaintyCheckEGSS~provides a weaker guarantee than \UncertaintyCheck, when the returned result is \CERTAIN.
    Specifically, when \UncertaintyCheckEGSS~returns a result of \CERTAIN, then \cref{lemma:good set search} guarantees that $\|\phi(s, a)\|_{V_\cC^{-1}}^2 \le d\tau$ for all $a \in \cA$.
    While when the \UncertaintyCheck~returns a result of \CERTAIN, then $\|\phi(s, a)\|_{V_\cC^{-1}}^2 \le \tau$ for all $a \in \cA$.
    Thus, we define a smaller good set $\mathcal{D}_d = \{ \phi(s, a): \|\phi(s, a)\|_{V_\cC^{-1}}^2 \le d\tau\}$.
    
    Redefine the VA's $Q$-function at iteration $k$ as
    \begin{equation*}
        \tilde{Q}_{k-1}(s, a)=\begin{cases}
                          \tilde{w}_k^\top \phi(s, a) \quad &\text{if} \, \phi(s, a) \in \mathcal{D}_d \\
                          Q_{\tilde{\pi}_{k-1}}(s, a)     \quad &\text{if} \, \phi(s, a) \notin \mathcal{D}_d \\
                    \end{cases}
    \end{equation*}
    and VA's policy as
    \begin{equation*}
        \tilde \pi_k(a | s) = \mathds{1} \left( a = \argmax_{\tilde a \in \cA} \tilde Q_{k-1}(s, \tilde a) \right).
    \end{equation*}
    Notice that in the final loop \UncertaintyCheckEGSS~always returns a \textsc{result} of \textsc{certain}, and thus we are sure that all $a \in \cA$ for all the states encountered in the final loop are in the smaller good set $\mathcal{D}_d$.
    Thus, the VA's policy $\pi_{k}$ would always be greedy w.r.t. $\tilde w_k^\top \phi$ in the final loop.
    This ensures that the VA and MA behave identically in the final loop.

    Next we need show that we can bound $\|\tilde Q_{k-1}(s, a) - Q_{\tilde \pi_{k-1}}(s, a)\|_\infty$ with this new definition of $\tilde Q_{k-1}$. 
    First we state a slight modification of \cref{lemma: yin lemma b.2} that holds for the smaller good set $\mathcal{D}_d$ 
    
    \begin{lemma}[EGSS modified Lemma B.2 from \cite{yin2021efficient}]
    \label{lemma: mod b.2 egss}
    Suppose that \cref{asm:linear-q-pi} holds. 
    With all terms as defined earlier and $\theta > 0$. 
    Then, with probability at least 
    $$1 - 2C_{\text{max}} \exp(-2 \theta^2(1-\gamma)^2 n)$$
    for any $(s,  a) \in (\cS \times \cA)$ pair such that $\phi(s, a) \in \mathcal{D}_d$, we have 
    $$|\tilde{w}_k^\top \phi(s, a) - w_{\tilde\pi_{k-1}}^\top \phi(s, a)| \le b\sqrt{\lambda d \tau} + \left(\epsilon + \frac{\gamma^{H+1}}{1 - \gamma} + \theta \right) \sqrt{d \tau C_{\text{max}}} + \epsilon = \sqrt{d} \bar \eta:= \eta_2$$
    \end{lemma}
    
    \begin{proof}
    The proof is identical to that of Lemme B.2 from \cite{yin2021efficient} except $\tau$ is replaced with $d \tau$ everywhere, due to the weaker guarantee of \UncertaintyCheckEGSS as discussed above. 
    \end{proof}
    
    Essentially we get an extra $\sqrt{d}$ factor due to the smaller good set $\mathcal{D}_d$. 
    Since the VA's policy $\tilde \pi_{k}$ has access to the true $Q$-function $Q_{\tilde \pi_{k-1}}$ for all $\phi(s, a) \notin \mathcal{D}_d$,
    we can show that $\|\tilde Q_{k-1}(s, a) - Q_{\tilde \pi_{k-1}}(s, a)\|_\infty$ can be bounded.
    
    \begin{proposition}[approximate value function bound for EGSS]
    \label{prop: approx value function bound for EGSS}
    Suppose that \cref{asm:linear-q-pi} holds. 
    With all terms as defined earlier and $\theta > 0$. 
    Then, with probability at least 
    $$1 - 2C_{\text{max}} \exp(-2 \theta^2(1-\gamma)^2 n)$$
    we have
    $$\|\tilde Q_{k-1}(s, a) - Q_{\tilde{\pi}_{k-1}}(s, a)\|_\infty \le \eta_2.$$ 
    \end{proposition}

    \begin{proof}
    For any $(s, a) \in (\cS \times \cA)$ such that $\phi(s, a) \in \mathcal{D}_d$, we have
    \begin{align}
        |\tilde Q_{k-1}(s, a) - Q_{\tilde{\pi}_{k-1}}(s, a)| \le \eta_2
    \end{align}
    by \cref{prop: approx value function bound for EGSS}.
    While for any $(s, a) \in (\cS \times \cA)$ such that $\phi(s, a) \notin \mathcal{D}_d$, we have
    \begin{align}
        |\tilde Q_{k-1}(s, a) - Q_{\tilde{\pi}_{k-1}}(s, a)| 
        = |Q_{\tilde \pi_{k-1}}(s, a) - Q_{\tilde{\pi}_{k-1}}(s, a)|
        &= 0 
    \end{align}
    \end{proof}

    Finally, it is left to show that $|\tilde w_{k}^\T \phi(\rho, a) - Q_{\tilde \pi_{k-1}}(\rho, a)|$  can be bounded for all $a \in \cA$.
    Notice that lines 4-8 in \MCLSPI~run \UncertaintyCheckEGSS~with state $\rho$ as input until the returned status is \CERTAIN.
    Recall that once \UncertaintyCheckEGSS~returns a status of \CERTAIN~we know that $\rho \in \mathcal{D}_d$.
    Thus, we can immediately apply \cref{lemma: mod b.2 egss} to bound $\eta_2 \ge |\tilde w_{k}^\T \phi(\rho, a) - Q_{\tilde \pi_{k-1}}(\rho, a)|, \ \forall a \in \cA$.

    \subsection{Default Action Vector (DAV) Method}
    In this section we prove some useful results for \UncertaintyCheckDAV.
    Fix a state $s \in \cS$.
    As mentioned in the body, assume the action space can be decomposed as a product $\cA^\allm = \cA^\ag{1} \times ... \times \cA^\ag{m}$ throughout this section.
    We call elements of $\cA^\allm$ \textit{action vectors}.
    First, \UncertaintyCheckDAV~only iterates over $\sumA$ action vectors instead of all the action vectors like \UncertaintyCheck~does.
    Define the $\sum_{i=1}^m A^\ag{i}$ sized set of modified default action vectors as $\bar \cA^\allm = \{ (a^\ag{i}, \bar a^{(-i)}): a^\ag{i} \in \cA^\ag{i}, \ i \in [m] \}$.
    Notice \UncertaintyCheckDAV~iterates over all the actions in the set $a^\allm \in \bar \cA^\allm$ and checks if any of them satisfy $\|\phi(s, a^\allm)\|_{V_\cC^{-1}}^2 > \tau$.
    This of course achieves the goal of compute independent of $|\cA^\allm|$, since there are only $\sum_{i=1}^m A^\ag{i}$ action vectors in $\bar \cA^\allm$ to iterate over now.
    Also, since only $a^\allm \in \cA^\allm$ that satisfy $\|\phi(s, a^\allm)\|_{V_\cC^{-1}}^2 > \tau$ are added to the core set, we can still use \cref{lemma:bound on core set size} to bound the size of the core set by $C_\text{max}$.

    Now, we aim to ensure that the VA and MA behave identically in the final loop.
    Define the set of states for which all the modified default action vectors are in the good set as $\bar \cS = \{ s \in \cS: \|\phi(s, a^\allm)\|_{V_\cC^{-1}}^2 \le \tau, \forall a^\allm \in \bar \cA^\allm \}$.
    Redefine the VA's $Q$-function as 
    \begin{equation*}
    \tilde Q_{k-1}(s, a^\allm)=\begin{cases}
          \tilde w_k^\top \phi(s, a^\allm) \quad & s \in \bar \cS \\
          Q_{\tilde{\pi}_{k-1}}(s, a^\allm). \quad & s \in \cS \backslash \bar \cS
        \end{cases}
    \end{equation*}
    The VA's policy is
    \begin{equation*}
        \tilde \pi_k(a^\allm | s) = \mathds{1} \left( a^\allm = \argmax_{\tilde a^\allm \in \cA^\allm} \tilde Q_{k-1}(s, \tilde a^\allm) \right).
    \end{equation*}
    Notice that in the final loop the check $\phi(s, (a^{(j)}, \bar a^{(-j)}))^\top (\Phi_\cC^\top \Phi_\cC + \lambda I)^{-1} \phi(s, (a^{(j)}, \bar a^{(-j)})) > \tau$ in \UncertaintyCheckDAV~never returns \textsc{True}, and thus we are sure that all $a^\allm \in \bar \cA^\allm$ for all the states encountered in the final loop are in the good set.
    Notice that these states that satisfy this condition are state in $\bar \cS$.
    Thus, the VA's policy $\pi_{k}$ would always be greedy w.r.t. $\tilde w_k^\top \phi$ in the final loop.
    This ensures that the VA and MA behave identically in the final loop.

    Now we show that we can bound $\|\tilde Q_{k-1}(s, a^\allm) - Q_{\tilde \pi_{k-1}}(s, a^\allm)\|_\infty$ with this new definition of $\tilde Q_{k-1}$. 
    First we state a slight modification of \cref{lemma: yin lemma b.2} for $w_{\tilde{\pi}_{k-1}}^\T \phi$ instead of $Q_{\tilde{\pi}_{k-1}}$ which excludes the $\|w_{\tilde{\pi}_{k-1}}^\top \phi(s, a^\allm) - Q_{\tilde{\pi}_{k-1}}(s, a^\allm)\|_\infty \le \epsilon$ term in the proof of Lemma B.2 in \citep{yin2021efficient}.
    
    \begin{lemma}[Lemma B.2 in \citep{yin2021efficient}]
    \label{lemma: yin lemma b.2 no epsilon}
    Suppose that \cref{ass: feature decomposition} holds. 
    With all terms as defined earlier and $\theta > 0$. 
    Then, with probability at least 
    $$1 - 2C_{\text{max}} \exp(-2 \theta^2(1-\gamma)^2 n)$$
    for any $(s,  a^\allm) \in (\cS \times \cA^\allm)$ pair such that $\phi(s, a^\allm) \in \mathcal{D}$, we have 
    $$|\tilde{w}_{k} (s, a^\allm) - w_{\tilde\pi_{k-1}}^\top (s, a^\allm)| \le b\sqrt{\lambda \tau} + \left(\epsilon +  \frac{\gamma^{H-1}}{1 - \gamma} + \theta \right) \sqrt{\tau C_{\text{max}}}:= \bar\eta$$
    \end{lemma}

    The following Proposition gives us a bound on $\|\tilde Q_{k-1}(s, a^\allm) - Q_{\tilde \pi_{k-1}}(s, a^\allm)\|_\infty$.
    
    \begin{proposition}[approximate value function bound for DAV]
    \label{prop: approx value function bound for DAV}
    Suppose that \cref{ass: feature decomposition} holds. 
    With all terms as defined earlier and $\theta > 0$. 
    Then, with probability at least 
    $$1 - 2C_{\text{max}} \exp(-2 \theta^2(1-\gamma)^2 n)$$
    we have
    $$\|\tilde Q_{k-1}(s, a^\allm) - Q_{\tilde{\pi}_{k-1}}(s, a^\allm)\|_\infty \le \bar\eta (2m-1) + \epsilon := \eta_1.$$ 
    \end{proposition}
    
    \begin{proof}
    
    For any $(s, a^\allm) \in (\bar \cS \times \cA^\allm)$, we have
    \begin{align}
        & |\tilde Q_{k-1}(s, a^\allm) - Q_{\tilde{\pi}_{k-1}}(s, a^\allm)| \nonumber \\
        &= |\tilde{w}_k^\top \phi(s, a^\allm) - Q_{\tilde{\pi}_{k-1}}(s, a^\allm)| \nonumber \\
        &= |\tilde{w}_k^\top \phi(s, a^\allm) \pm w_{\tilde{\pi}_{k-1}}^\top \phi(s, a^\allm) - Q_{\tilde{\pi}_{k-1}}(s, a^\allm)| \nonumber \\
        &\le |\tilde{w}_k^\top \phi(s, a^\allm) - w_{\tilde{\pi}_{k-1}}^\top \phi(s, a^\allm)| + |w_{\tilde{\pi}_{k-1}}^\top \phi(s, a^\allm) - Q_{\tilde{\pi}_{k-1}}(s, a^\allm)| \nonumber \\
        &\le |\tilde{w}_k^\top \phi(s, a^\allm) - w_{\tilde{\pi}_{k-1}}^\top \phi(s, a^\allm)| + \epsilon \nonumber \\
        &= |\tilde{w}_k^\top \phi(s, a^\allm) - w_{\tilde{\pi}_{k-1}}^\top \phi(s, a^\allm) \pm (m-1)\tilde{w}_k^\top \phi(s, \bar a^\allm) \pm (m-1) w_{\tilde{\pi}_{k-1}}^\top \phi(s, \bar a^\allm)| + \epsilon \nonumber \\
        &= \left|\left( \sum_{i=1}^m \tilde{w}_k^\top \phi(s, (a^\ag{i}, \bar a^{(-i)})) - w_{\tilde\pi_{k-1}}^\top \phi(s, (a^\ag{i}, \bar a^{(-i)})) \right) + (m-1)\left[w_{\tilde{\pi}_{k-1}}^\top \phi(s, \bar a^\allm)) - \tilde{w}_k^\top \phi(s, \bar a^\allm\right]\right| + \epsilon \nonumber \\
        &\le m \bar\eta + (m-1) \bar \eta + \epsilon \nonumber \\ 
        &= \bar\eta (2m-1) + \epsilon \label{value function bound 1}
    \end{align}
    where the second last inequality holds by \cref{lemma: yin lemma b.2 no epsilon} (because the features of all the state action pairs considered are in $\mathcal{D}$, since $s \in \bar \cS$).
    
    While for any $(s, a^\allm) \in ((\cS \backslash \bar \cS) \times \cA^\allm)$, we have
    \begin{align}
        |\tilde Q_{k-1}(s, a^\allm) - Q_{\tilde{\pi}_{k-1}}(s, a^\allm)| 
        = |Q_{\tilde \pi_{k-1}}(s, a^\allm) - Q_{\tilde{\pi}_{k-1}}(s, a^\allm)|
        &= 0 \label{value function bound 2}
    \end{align}
    \end{proof}

    Finally, it is left to show that $|\tilde w_{k}^\T \phi(\rho, a^\allm) - Q_{\tilde \pi_{k-1}}(\rho, a^\allm)|$  can be bounded for all $a^\allm \in \cA^\allm$.
    Notice that lines 4-8 in \MCLSPI~run \UncertaintyCheckDAV~with state $\rho$ as input until the returned status is \CERTAIN.
    Recall that once \UncertaintyCheckDAV~returns a status of \CERTAIN~we know that $\rho \in \bar \cS$.
    Thus, we can immediately apply the result in \cref{value function bound 1} to bound $\eta_1 \ge |\tilde w_{k}^\T \phi(\rho, a^\allm) - Q_{\tilde \pi_{k-1}}(\rho, a^\allm)|, \ \forall a^\allm \in \cA^\allm$.

    \subsection{Extending to Politex} \label{subsec:extending to politex}
    Recall the above results where for the \MCLSPI~algorithm.
    The \MCPolitex~algorithm can be found as \cref{alg:confident ma mc-politex}.
    \begin{algorithm}[t]
	\caption{\MCPolitex} \label{alg:confident ma mc-politex}  
	\begin{algorithmic}[1]
		\State \textbf{Input:} initial state $\rho$, initial policy $\pi_0$, number of iterations $K$, threshold $\tau$, number of rollouts $n$, length of rollout $H$
		\State \textbf{Globals:} default action $\bar a$, regularization coefficient $\lambda$, discount $\gamma$, subroutine \UncertaintyCheck
		\State {$\cC \gets \{(\rho, \bar a, \phi(\rho, \bar a), \NONE)\}$} 
\State status, result $\gets \UncertaintyCheck(\rho,  \cC, \tau)$
		\While {status $=$ \UNCERTAIN}
		\State $\cC \gets \cC \cup \{\text{result}\}$
		 \State status, result $\gets \text{\UncertaintyCheck}(\rho, \cC, \tau)$
		\EndWhile
		\State $z_q \gets \NONE, \, \forall z \in \cC$ \quad \Comment{Policy iteration starts $(*)$}
		\For {$k \in 1, \dots, K$}
		\For {$z \in \cC$}
		\State status, result $\gets \text{\ConfidentRollout}(n, H, \pi_{k-1}, z, \cC, \tau)$
		\State \textbf{if} status $=$ \DONE, \textbf{then} $z_q = \text{result}$
		\State \textbf{else} $\cC \gets \cC \cup \{\text{result}\}$ and \textbf{goto} line $(*)$ 
		\EndFor 
		\State $w_k \gets (\Phi_\cC^\top \Phi_\cC + \lambda I)^{-1} \Phi_\cC^\top q_\cC$  
		\State $\pi_k(a^{(1:m)}|s) \gets \propto \prod_{i=1}^m \prod_{j=0}^{k-1} \exp\left(\alpha w_{j}^\T \phi_i(s, a^\ag{i})\right).$ 
		\EndFor
		\State \Return $\bar \pi_{K-1} \sim \text{Unif}\{\pi_k\}_{k=0}^{K-1}$
	\end{algorithmic}
    \end{algorithm}
    It turns out the story for \MCPolitex~is extremely similar and can be argued in nearly the same way. 
    The main difference is that the policy used in \MCPolitex~is different than in \MCLSPI~(line 17 in \MCPolitex~is different from line 17 in \MCLSPI).
    As such, we can no longer use \cref{lemma:approximate policy iteration} (since it relied on a greedy policy) and, thus cannot use \cref{prop:optimality of output policy} to bound the sub-optimality of the policy output by \MCPolitex. 
    Next, we show there is a similar Lemma and Proposition that can derived for \MCPolitex.

    Recall that we do not use clipping on the $Q$-functions in \MCPolitex, so that we can sample from the policy efficiently (\cref{prop: efficient politex policy sampling}).  
    Importantly \cref{prop: efficient politex policy sampling} only holds when \cref{ass: feature decomposition} is satisfied.
    Thus, for the remainder of this section we will be working with the product action space $\cA^\allm$.
    This means we must define the VA's $Q$-function differently from \citep{yin2021efficient}, by removing clipping from the case when $\phi(s, a^\allm) \in \mathcal{D}$.
    \begin{equation*}
        \tilde{Q}_{k-1}(s, a^\allm)=\begin{cases}
                          \tilde{w}_k^\top \phi(s, a^\allm) \quad &\text{if} \, \phi(s, a^\allm) \in \mathcal{D} \\
                          Q_{\tilde{\pi}_{k-1}}(s, a^\allm)     \quad &\text{if} \, \phi(s, a^\allm) \notin \mathcal{D} \\
                    \end{cases}
    \end{equation*}
    Then the VA's policy is
    \begin{equation} \label{eq:politex virtual policy}
        \tilde \pi_k(a^\allm | s) \propto \exp \left( \alpha \sum_{j=0}^{k-1} \tilde Q_{j}(s, a^\allm) \right).
    \end{equation}
    
    Also, due to no clipping, the sequence of $Q$-functions during policy iteration is now in the $[-\eta, (1-\gamma)^{-1} + \eta]$ interval, where $\eta \ge \|\tilde Q_{k-1}(s, a^\allm) - Q_{\tilde \pi_{k-1}}(s, a^\allm)\|_\infty$.
    We now restate Lemma D.1 from \citet{yin2021efficient} which bounds the mixture policy output by Politex for an arbitrary sequence of $Q$-functions
    Since we do not use clipping the theorem is slightly modified (we replace the interval $[0, (1-\gamma)^{-1}]$ with a general interval $[a, b], \ a, b \in \RR$, which can be extracted from the calculations in \citet{szepesvari2022}).

    \begin{lemma}[modified Lemma D.1 in \citet{yin2021efficient} also in \citet{szepesvari2022}] \label{lemma:politex mixture policy bound}
    Given an initial policy $\pi_0$, a sequence of functions $Q_k: \cS \times \cA^\allm \to [a, b], \ k \in [K-1], a, b \in \RR$, and $Q_{\pi^*} \in [0, 1/(1-\gamma)]$, construct a sequence of policies $\pi_1, ..., \pi_{K-1}$ according to (\cref{eq:politex virtual policy}) with $\alpha = 1/(b-a) \sqrt{\frac{2 \log(|\cA^\allm|)}{K}}$, then, for any $s \in \cS$, the mixture policy $\bar \pi_{K-1} \sim \text{Unif}\{\pi_k\}_{k=0}^{K-1}$ satisfies

    \begin{equation}
        V^*(s) - V_{\bar \pi_K}(s) \le \frac{b-a}{(1-\gamma)}\sqrt{\frac{2 \log(|\cA^\allm|)}{K}} + \frac{2 \max_{0 \le k \le K-1} \|Q_k - Q_{\pi_k}\|_\infty}{1 - \gamma}
    \end{equation}
    \end{lemma}

    Notice that the above result suggests we just need to control the term $\|Q_k - Q_{\pi_k}\|_\infty$.
    For the VA this is $\|\tilde Q_k - Q_{\tilde \pi_k}\|_\infty$ and as we have already seen, this can be bounded using the high probability bound on policy evaluation for \textsc{Uncertainty Chcek with DAV} (\cref{prop: approx value function bound for DAV}) and \textsc{Uncertainty Chcek with EGSS} (Proposition \cref{prop: approx value function bound for EGSS}).
    Using \cref{lemma:politex mixture policy bound} instead of Lemma D.1 in \citet{yin2021efficient}, one can extract another slightly modified result from \citet{yin2021efficient}.
    
    \begin{proposition}[equation (D.8) in \citet{yin2021efficient}] \label{prop:politex optimality of output policy}
    With all terms as defined earlier. 
    Define $\eta \ge \|\tilde Q_{k-1}(s, a^\allm) - Q_{\tilde \pi_{k-1}}(s, a^\allm)\|_\infty$. 
    Suppose $\eta \ge |\tilde w_{k}^\T \phi(\rho, a^\allm) - Q_{\tilde \pi_{k-1}}(\rho, a^\allm)|_\infty, \ \forall a^\allm \in \cA^\allm$.
    Then, if the VA and MA behave identically in the final loop, with probability at least $1 - 4KC_{\text{max}}^2 \exp(-2 \theta^2(1-\gamma)^2 n)$ we have
    \begin{align}
            V^*(s) - V_{\bar \pi_{K-1}}(\rho) \le \frac{b-a}{(1-\gamma)} \sqrt{\frac{2 \log(|\cA^\allm|)}{K}} + \frac{4 \eta}{1 - \gamma}
    \end{align}
    \end{proposition}
    
    Notice, that we require the same three things as in the \MCLSPI~case (\cref{prop:optimality of output policy}).
    We need a bound on $\|\tilde Q_{k-1}(s, a^\allm) - Q_{\tilde \pi_{k-1}}(s, a^\allm)\|_\infty$. 
    We need a bound on $|\tilde w_{k}^\T \phi(\rho, a^\allm) - Q_{\tilde \pi_{k-1}}(\rho, a^\allm)|_\infty, \ \forall a^\allm \in \cA^\allm$.
    We need to ensure that the VA and MA behave identically in the final loop. 
    Then, we can get a bound on the sub-optimality of the MA's output policy $\bar \pi_{K-1}$.
    Using the same steps as in the previous sections, one can verify that indeed, \MCPolitex~combined with \UncertaintyCheckDAV~or \UncertaintyCheckEGSS~does satisfy the above three conditions, with $\eta = \eta_1$ ($\eta_1$ as defined in \cref{prop: approx value function bound for DAV}) and $\eta = \eta_2$ ($\eta_2$ as defined in \cref{prop: approx value function bound for EGSS}) respectively.
    
    We bound $|\cA^\allm| = \prodA \le \max_{i \in [m]} |\cA^\ag{i}|$. 
    We can replace $b-a$ with $1/(1-\gamma) + 2\eta$, since $w^\T \phi(s, a^\allm) \in [-\eta, (1-\gamma)^{-1} + \eta], \ \forall (s \times a^\allm) \in (\cS \times \cA^\allm)$ in the final loop for the same event which holds with probability at least $1 - 4KC_{\text{max}}^2 \exp(-2 \theta^2(1-\gamma)^2 n)$ in \cref{prop:politex optimality of output policy}. 
    Applying \cref{prop:politex optimality of output policy} we get with probability at least $1 - 4KC_{\text{max}}^2 \exp(-2 \theta^2(1-\gamma)^2 n)$ that
    \begin{align} 
            V^*(s) - V_{\bar \pi_{K-1}}(\rho) \le \left(\frac{1}{(1-\gamma)^2} + \frac{2\eta}{(1-\gamma)}\right) \sqrt{\frac{2 m \log(\max_{i \in [m]} |\cA^\ag{i}|)}{K}} + \frac{4 \eta}{1 - \gamma}. \label{eq:politex actual optimality of output policy}
    \end{align}

        \section{\uppercase{Kernel Setting}} \label{app:kernel setting}

    Define $q_k(s,a^\allm)$ as the estimated rollout value for $(s,a^\allm) \in \cC$ in round $k \in [K]$ of policy iteration, and $q_k = [q_k(s, a^\allm)]_{(s, a^\allm) \in \cC} \in \bR^{|\cC|}$ as the vector containing all rollout results at round $k$, using some fixed ordering of $\cC$.
    In round $k$ of policy iteration we need to compute the ridge estimate $\hat Q_k$ using $q_k$ as least squares targets,
    \begin{align}
    	\hat Q_k = \argmin_{Q \in \cH} \sum_{(s,a^\allm)\in\cC} (Q(s,a^\allm) - q_k(s,a^\allm))^2 + \lambda \|Q\|_{\cH}^2 = (\Phi_{\cC}^\T\Phi_{\cC} + \lambda \eye_\hH)^{-1}\Phi_{\cC}^\top q_k
    \end{align}
    Here, $\eye_{\cH} : \hH \rightarrow \hH$ is the identity mapping, $\Phi_{\cC}$ can be formally defined as a map $\Phi_\cC : \cH \rightarrow \bR^{|\cC|}, f \mapsto [f(s,a^\allm)]_{(s,a^\allm) \in \cC}, \, f \in \cH$; and $\Phi_{\cC}^\T : \bR^{|\cC|} \rightarrow \hH$ is the adjoint of $\Phi_{\cC}$.
    
    Using the `kernel trick' we express the estimator as follows
    \begin{align}
    	\hat Q_k = \Phi_{\cC}^\T(K_{\cC} + \lambda \eye_{|\cC|})^{-1}q_k
    \end{align}
    where $K_{\cC} = \Phi_{\cC} \Phi_{\cC}^\T \in \bR^{|\cC| \times |\cC|}$ is the kernel matrix. Lastly, we can evaluate for any $(s,a^\allm) \in \cS \times \cA^\allm$:
    \begin{align}
    	\hat Q_k(s,a^\allm) = \kernel_{\cC}(s,a^\allm)^\T(K_{\cC} + \lambda \eye_{|\cC|})^{-1}q_k
    \end{align}
    where we defined $\kernel_\cC(s,a^\allm) = [\kernel(s,a^\allm, s',a'^\allm)]_{(s',a'^\allm) \in \cC} \in \bR^{|\cC|}$ (using the same fixed ordering of $\cC$).
    Importantly, the last display only involves finite-dimensional quantities that can be computed from kernel evaluations.
    Moreover, since $\kernel(s,a^\allm,s',a'^\allm) = \sum_{j=1}^m \kernel_j(s,a^{(j)}, s', a'^{(j)})$ we can write
    \begin{align}
    	\hat Q_k(s,a^\allm) &= \sum_{j=1}^m \hat{Q}_{k, j}(s, a^\ag{j})\\
    	\hat{Q}_{k, j}(s, a^\ag{j}) &:= \kernel_{j, \cC}(s,a^{(j)})^\T(K_{\cC} + \lambda \eye_{|\cC|})^{-1}q_k
     \label{eq:kernel agent-wise q}
    \end{align}
    where $\kernel_{j, \cC}(s,a^{(j)}) = [\kernel_j(s,a^{(j)}, s^\prime , a^{\prime(j)})]_{(s^\prime, a^{\prime(1:m)}) \in \cC} \in \bR^{|\cC|}$.
    Since the term $(K_{\cC} + \lambda \eye_{|\cC|})^{-1}q_k$ is fixed for each $j$, we can still compute the maximizer independently for each $j \in [m]$ by iterating over all actions.
    This allows us to define \MCKern~(\cref{alg:confident kernel mc-lspi/politex}), which makes use of \cref{eq:kernel agent-wise q} in line 17 for calculating the policy.

    \begin{algorithm}[t]
	\caption{\textsc{Confident Kernel MC-LSPI/Politex}} \label{alg:confident kernel mc-lspi/politex}  
	\begin{algorithmic}[1]
		\State \textbf{Input:} initial state $\rho$, initial policy $\pi_0$, number of iterations $K$, threshold $\tau$, number of rollouts $n$, length of rollout $H$
		\State \textbf{Globals:} default action $\bar a$, regularization coefficient $\lambda$, discount $\gamma$, subroutine \UncertaintyCheck, kernel $\kernel$
		\State {$\cC \gets \{(\rho, \bar a, \phi(\rho, \bar a), \NONE)\}$} 
\State status, result $\gets \UncertaintyCheck(\rho,  \cC, \tau)$
		\While {status $=$ \UNCERTAIN}
		\State $\cC \gets \cC \cup \{\text{result}\}$
		 \State status, result $\gets \text{\UncertaintyCheck}(\rho, \cC, \tau)$
		\EndWhile
		\State $z_q \gets \NONE, \, \forall z \in \cC$ \quad \Comment{Policy iteration starts $(*)$}
		\For {$k \in 1, \dots, K$}
		\For {$z \in \cC$}
		\State status, result $\gets \text{\ConfidentRollout}(n, H, \pi_{k-1}, z, \cC, \tau)$
		\State \textbf{if} status $=$ \DONE, \textbf{then} $z_q = \text{result}$
		\State \textbf{else} $\cC \gets \cC \cup \{\text{result}\}$ and \textbf{goto} line $(*)$ 
		\EndFor 
		\State $\hat Q_k = \Phi_{\cC}^\T(K_{\cC} + \lambda \eye_{|\cC|})^{-1}q_k$  
		\State {$\pi_k(a^{(1:m)}|s) \gets 
			\begin{cases}
				\mathds{1}\left(a^\allm = \argmax\limits_{\tilde{a}^\allm \in \cA^\allm} \hat Q_k(s,\tilde{a}^\allm) \right) & \text{LSPI} \\
				\propto \prod_{i=1}^m \prod_{j=0}^{k-1} \exp\left(\alpha \hat{Q}_{j, i}(s, a^\ag{i})\right). & \text{Politex}
			\end{cases}$}
		\EndFor
		\State \Return $\pi_{K-1}$ for LSPI, or $\bar \pi_{K-1} \sim \text{Unif}\{\pi_k\}_{k=0}^{K-1}$ for Politex 
	\end{algorithmic}
    \end{algorithm}
    
    The second quantity required by the algorithm is the squared norm $\|\phi(s,a^\allm)\|_{(\Phi_\cC^\T \Phi_{\cC} + \lambda \eye_\hH)^{-1}}^2$, where now $\phi(s,a^\allm) = \kernel(s,a^\allm, \cdot, \cdot) \in \cH$. A direct extension of the Woodbury formula to infinite vector spaces shows that
    \begin{align}
    	\lambda(\Phi_\cC^\top \Phi_{\cC} + \lambda \eye_\hH)^{-1} = \eye_{\cH} - \Phi_{\cC}^\T (K_\cC + \lambda \eye_{|\cC|})^{-1}\Phi_{\cC}
    \end{align}
    Therefore the feature norm can be written using finite-dimensional quantities: 
    \begin{align}
    	\|\phi(s,a^\allm)\|_{(\Phi_\cC^\T \Phi_{\cC} + \lambda \eye_\hH)^{-1}}^2 = \frac{1}{\lambda} \left( \kernel(s,a^\allm,s,a^\allm) - \kernel_\cC(s,a^\allm)^\T(K_\cC+ \lambda \eye_{|\cC|})^{-1}\kernel_{\cC}(s,a^\allm)\right)
     \label{eq:kernel uncertainty}
    \end{align}
    With this, we can define \UncertaintyCheckKDAV~(\cref{alg:uncertainty check k-dav}) which makes use of \cref{eq:kernel uncertainty}.

    \begin{algorithm}
    	\caption{\textsc{Uncertainty Check with Kernel-Default Action Vector (K-DAV)}} \label{alg:uncertainty check k-dav}  
    	\begin{algorithmic}[1]
    		\State \textbf{Input:} state $s$, core set $\Phi_\cC$, threshold $\tau$.
    		\State \textbf{Globals:} number of action components $m$. 
    		\For {$j \in [m]$}
    		\For {$a^{(j)} \in \cA^{(j)}$}
                \State $\tilde a \gets (a^\ag{j}, \bar a^\ag{-j})$
    		\If {$\frac{1}{\lambda} \left( \kernel(s, \tilde a,s, \tilde a) - \kernel_\cC(s, \tilde a)^\T(\Phi_\cC \Phi_\cC^\T + \lambda \eye_{|\cC|})^{-1}\kernel_{\cC}(s, \tilde a)\right) > \tau$}
    		\State result $\gets (s, \tilde a, \phi(s, \tilde a), \NONE)$
    		\State \Return {\UNCERTAIN, result}
    		\EndIf
    		\EndFor 
    		\EndFor 
    		\State \Return \CERTAIN, \NONE 
    	\end{algorithmic}
    \end{algorithm}

    \paragraph{Analysis}
    Our goal next is to extend the analysis used in the finite case to the kernel case, carefully arguing that the linear dimension $d$ can be replaced by a more benign quantity. A common complexity measure is the total information gain, which we define as follows:
    \begin{align}
    	\Gamma(\lambda; \cC) = \log \det (\Phi_{\cC}^\T\Phi_{\cC} + \lambda \eye_\hH) - \log \det (\lambda \eye_\hH)
    \end{align}
    Note that we can compute $\Gamma(\lambda; \cC)$ for any given core set $\cC$. In the kernel case, we can compute $\Gamma(\lambda; \cC) = \log \det (\eye_{|\cC|} + \lambda^{-1} K_{\cC})$ using similar arguments as before.
    
    The maximum information gain is 
    $$\Gamma_t(\lambda) = \max_{\cC : |\cC|=t} \Gamma(\lambda; \cC).$$ 
    It serves as a complexity measure in the bandit literature and can be bounded for many kernels of interests \citep{srinivas2009gaussian,vakili2021information}. 
    Following \citet{du2021bilinear, huang2021short}, we further define the \emph{critical information gain} for any fixed constant $c > 0$,
    \begin{align}
    	\tilde \Gamma(\lambda, c) = \max \{t \geq 1 : c t \le \Gamma_t(\lambda) \}.\label{eq:critical infogain}
    \end{align}
    Note that the proof of  \cite[Lemma 5.1]{yin2021efficient} implies that $\log(1+\tau)|C| \leq \Gamma_{|C|}(\lambda)$
    Thus, $|\cC| \le C_\text{max} = \tilde \Gamma(\lambda, \log(1+\tau)) $

    Since the dimension $d$ enters our bounds only through $C_{\max}$ we can immediately get a query complexity bound for the kernelized algorithm in terms of $\tilde\Gamma$. 
    For the finite-dimensional case, \cite[Lemma 5.1]{yin2021efficient} shows that $\tilde \Gamma \leq \oO(d)$, recovering the previous bound.

    \section{\uppercase{Proofs of Theorems}} \label{app: theorem proofs}
    We make a remark on the query complexity of \MCLSPI~and \MCPolitex.
    From \cref{lemma:bound on core set size} we know the core set size is bounded by $C_\text{max} = \tilde \cO(d)$.
    The total number of times Policy iteration is restarted (restart means line 14 in \MCLSPI~or \MCPolitex~is reached) is thus at most $C_\text{max}$.
    Each run of policy iteration can take as much as $K$ iterations.
    In each iteration \ConfidentRollout~is run at most $C_\text{max}$ times.
    \ConfidentRollout~does $n$ rollouts of length $H$ which queries the simulator once for each step.
    In total the number of queries performed by \MCLSPI~or \MCPolitex is bounded by $C_\text{max}^2 K n H$.
    This equation is used to calculate the query cost for the different variants of \MCLSPI, once all the parameter values have been calculated.
    Since, the only difference between \MCKern~and \MCLSPI~or \MCPolitex~is how the policy is calculated (lines 16-17 in each of the algorithms), thus we can use the same expression as above ($C_\text{max}^2 K n H$) to bound the query complexity of \MCKern~, with $C_\text{max} = \tilde \Gamma(\lambda, c)$. 


    \subsection{Proof of \cref{thm:mc-lspi-egss sub-optimality}}
    Plugging in $\eta=\eta_2$ ($\eta_2$ as defined in \cref{prop: approx value function bound for EGSS}) into \cref{prop:optimality of output policy}.
    Suppose \cref{asm:linear-q-pi,ass: argmax oracle,ass: bounded features} are satisfied with $\epsilon=0$.
    By choosing appropriate parameters according to $\delta$ and $\kappa$, we can ensure that with probability at least $1 - \delta$ that the policy output by \MCLSPI~combined with \UncertaintyCheckEGSS~, $\pi_{K-1}$ satisfies:
    \begin{align*}
        V^*(\rho) - V_{\pi_{K-1}}(\rho) \leq \kappa,
    \end{align*}
    with the following parameter settings 
    \begin{align*}
        \tau &= 1\\
        \lambda &= \frac{\kappa^2(1 - \gamma)^4}{1024 b^2 d}\\
        \theta &= \frac{\kappa(1- \gamma)^2}{32 \sqrt{d} \sqrt{C_{\text{max}}}}\\
        H &= \frac{
            \log \left ( 32 \sqrt{C_{\text{max}}} \sqrt{d} \right)
            - \log \left( \kappa(1 - \gamma)^3 \right)
        }{
            1-\gamma
        } - 1\\
        K &= \frac{\log\left(\frac{1}{\kappa(1 - \gamma)^2}\right) + \log(8)}{1-\gamma} + 1 \\
        n &= \frac{\log(\delta) - \log(4KC_{\text{max}}^2)}{2 \theta^2(1-\gamma)^2} \\
        C_{\max} &= \frac{e}{e-1} \frac{1 + \tau}{\tau} d \left( 
            \log(1 + \frac{1}{\tau}) +
            \log(1 + \frac{1}{\lambda})
        \right) 
    \end{align*}
    with computational cost of $\poly(d, \frac{1}{1 - \gamma}, \frac{1}{\kappa}, \log(\frac{1}{\delta}))$
    and query cost $\cO\left(\tfrac{d^4}{\kappa^2 (1-\gamma)^8} \right)$ 

    If $\epsilon > 0$, then by choosing parameters as above, with $\kappa = \frac{32 \epsilon d}{(1-\gamma)^2} (1 + \log( b^2 \epsilon^{-2} d^{-1}))^{1/2}$, we can ensure that with probability of at least $1 - \delta$ that $\pi_{K-1}$ satisfies:
    
    $$V^*(\rho) - V_{\pi_{K-1}}(\rho) \leq \frac{64 \epsilon d}{(1-\gamma)^2} (1 +\log(1+b^2 \epsilon^{-2} d^{-1}))^{1/2}$$
    
    with computational cost of $\poly(d, \frac{1}{1 - \gamma}, \frac{1}{\epsilon}, \log(\frac{1}{\delta}), \log(1+b))$
    and query cost $\cO\left(\tfrac{d^2}{\epsilon^2 (1-\gamma)^4} \right)$ 
    

    \subsection{Proof of \cref{thm:mc-lspi-dav sub-optimality}}
    Plugging in $\eta=\eta_1$ ($\eta_1$ as defined in \cref{prop: approx value function bound for DAV}) into \cref{prop:optimality of output policy}.
    Suppose \cref{ass: feature decomposition,ass: bounded features} are satisfied with $\epsilon=0$.
    By choosing appropriate parameters according to $\delta$ and $\kappa$, we can ensure that with probability at least $1 - \delta$ that the policy output by \MCLSPI~combined with \UncertaintyCheckEGSS~, $\pi_{K-1}$ satisfies:
    \begin{align*}
        V^*(\rho) - V_{\pi_{K-1}}(\rho) \leq \kappa,
    \end{align*}
    with the following parameter settings 
    \begin{align*}
        \tau &= 1\\
        \lambda &= \frac{\kappa^2(1 - \gamma)^4}{1024 b^2 (2m-1)^2}\\
        \theta &= \frac{\kappa(1- \gamma)^2}{32 (2m-1) \sqrt{C_{\text{max}}}}\\
        H &= \frac{
            \log \left ( 32 \sqrt{C_{\text{max}}} (2m-1) \right)
            - \log \left( \kappa(1 - \gamma)^3 \right)
        }{
            1-\gamma
        } - 1\\
        K &= \frac{\log\left(\frac{1}{\kappa(1 - \gamma)^2}\right) + \log(8)}{1-\gamma} + 1 \\
        n &= \frac{\log(\delta) - \log(4KC_{\text{max}}^2)}{2 \theta^2(1-\gamma)^2} \\
        C_{\max} &= \frac{e}{e-1} \frac{1 + \tau}{\tau} d \left( 
            \log(1 + \frac{1}{\tau}) +
            \log(1 + \frac{1}{\lambda})
        \right) 
    \end{align*}
    with computational cost of $\poly(\sumA, d, \frac{1}{1 - \gamma}, \frac{1}{\kappa}, \log(\frac{1}{\delta}))$
    and query cost $\cO\left(\tfrac{m^2 d^3}{\kappa^2 (1-\gamma)^8} \right)$ 

    If $\epsilon > 0$, then by choosing parameters as above, with $\kappa = \frac{32 \epsilon \sqrt{d} m}{(1-\gamma)^2} (1 + \log( b^2 \epsilon^{-2} d^{-1}))^{1/2}$, we can ensure that with probability of at least $1 - \delta$ that $\pi_{K-1}$ satisfies:
    
    $$V^*(\rho) - V_{\pi_{K-1}}(\rho) \leq \frac{128 \epsilon \sqrt{d} m}{(1-\gamma)^2} (1 +\log(1+b^2 \epsilon^{-2} d^{-1}))^{1/2}$$
    
    with computational cost of $\poly(\sumA, d, \frac{1}{1 - \gamma}, \frac{1}{\epsilon}, \log(\frac{1}{\delta}), \log(1+b))$
    and query cost $\cO\left(\tfrac{d^2}{\epsilon^2 (1-\gamma)^4} \right)$

    \subsection{Proof of \cref{thm:mc-politex sub-optimality} + \UncertaintyCheckEGSS~case}
    Plugging in $\eta=\eta_1$ when \UncertaintyCheckDAV~is used ($\eta_1$ as defined in \cref{prop: approx value function bound for DAV}) and $\eta=\eta_2$ when \UncertaintyCheckEGSS~is used ($\eta_2$ as defined in \cref{prop: approx value function bound for EGSS}) into \cref{eq:politex actual optimality of output policy}.
    Setting $\zeta=2m-1$ when \UncertaintyCheckDAV~is used, and $\zeta = \sqrt{d}$ when \UncertaintyCheckEGSS~is used.
    Suppose \cref{ass: feature decomposition,ass: bounded features} are satisfied with $\epsilon=0$.
    By choosing appropriate parameters according to $\delta$ and $\kappa$, we can ensure that with probability at least $1 - \delta$ that the policy output by \MCPolitex~$\bar \pi_{K-1}$ satisfies:
    \begin{align*}
        V^*(\rho) - V_{\bar \pi_{K-1}}(\rho) \leq \kappa,
    \end{align*}
    with the following parameter settings
    \begin{align*}
        \tau &= 1\\
        \lambda &= \frac{\kappa^2(1 - \gamma)^2}{576 b^2 \zeta^2}\\
        \theta &= \frac{\kappa(1- \gamma)}{24 \zeta \sqrt{C_{\text{max}}}}\\
        H &= \frac{
            \log \left ( 24 \sqrt{C_{\text{max}}} \zeta \right)
            - \log \left( \kappa(1 - \gamma)^2 \right)
        }{
            1-\gamma
        } - 1\\
        K &= 2m \log(A) \left( \frac{4}{\kappa^2 (1-\gamma)^4} + \frac{3}{\kappa (1-\gamma)^2} + \frac{9}{16} \right)\\
        n &= \frac{\log(\delta) - \log(4KC_{\text{max}}^2)}{2 \theta^2(1-\gamma)^2} \\
        C_{\max} &= \frac{e}{e-1} \frac{1 + \tau}{\tau} d \left( 
            \log(1 + \frac{1}{\tau}) +
            \log(1 + \frac{1}{\lambda})
        \right) 
    \end{align*}
    with computational cost of $\poly(\sumA, d, \frac{1}{1 - \gamma}, \frac{1}{\kappa}, \log(\frac{1}{\delta}))$
    and query cost $\cO\left(\tfrac{m \zeta^2 d^3}{\kappa^4 (1-\gamma)^9} \right)$ 

    If $\epsilon > 0$, then by choosing parameters as above, with $\kappa = \frac{16 \epsilon \sqrt{d} \zeta}{(1-\gamma)} (1 + \log( b^2 \epsilon^{-2} d^{-1}))^{1/2}$, we can ensure that with probability of at least $1 - \delta$ that $\bar \pi_{K-1}$ satisfies:
    
    $$V^*(\rho) - V_{\bar \pi_{K-1}}(\rho) \leq \frac{32 \epsilon \sqrt{d} \zeta}{1-\gamma} (1 +\log(1+ b^2 \epsilon^{-2} d^{-1}))^{1/2}$$

    with computational cost of $\poly(\sumA, d, \frac{1}{1 - \gamma}, \frac{1}{\epsilon}, \log(\frac{1}{\delta}), \log(1+b))$
    and query cost $\cO\left(\tfrac{m d}{\epsilon^4 (1-\gamma)^5} \right)$

    \subsection{Proof of \cref{thm:kernel mc-lspi-dav sub-optimality}}
    Plugging in $\eta=\eta_1$ ($\eta_1$ as defined in \cref{prop: approx value function bound for DAV}) into \cref{prop:optimality of output policy}.
    Suppose \cref{ass: bounded features,ass:kernel q-pi,ass:kernel additive} are satisfied with $\epsilon=0$.
    By choosing appropriate parameters according to $\delta$ and $\kappa$, we can ensure that with probability at least $1 - \delta$ that the policy output by \textsc{Confident Kernel MC-LSPI} $\pi_{K-1}$ satisfies:
    \begin{align*}
        V^*(\rho) - V_{\pi_{K-1}}(\rho) \leq \kappa,
    \end{align*}
    with the following parameter settings 
    \begin{align*}
        \tau &= 1\\
        \lambda &= \frac{\kappa^2(1 - \gamma)^4}{1024 b^2 (2m-1)^2}\\
        \theta &= \frac{\kappa(1- \gamma)^2}{32 (2m-1) \sqrt{C_{\text{max}}}}\\
        H &= \frac{
            \log \left ( 32 \sqrt{C_{\text{max}}} (2m-1) \right)
            - \log \left( \kappa(1 - \gamma)^3 \right)
        }{
            1-\gamma
        } - 1\\
        K &= \frac{\log\left(\frac{1}{\kappa(1 - \gamma)^2}\right) + \log(8)}{1-\gamma} + 1 \\
        n &= \frac{\log(\delta) - \log(4KC_{\text{max}}^2)}{2 \theta^2(1-\gamma)^2} \\
        C_{\max} &= \critinf
    \end{align*}
    with computational cost of $\poly(\sumA, \critinf, \frac{1}{1 - \gamma}, \frac{1}{\kappa}, \log(\frac{1}{\delta}))$
    and query cost $\cO\left(\tfrac{m^2 \critinf^3}{\kappa^2 (1-\gamma)^8} \right)$ 

    If $\epsilon > 0$, then by choosing parameters as above, with $\kappa = \frac{16 \epsilon m \sqrt{\critinf}}{(1-\gamma)^2}$, we can ensure that with probability of at least $1 - \delta$ that $\pi_{K-1}$ satisfies:
    
    $$V^*(\rho) - V_{\pi_{K-1}}(\rho) \leq \frac{32 \epsilon m \sqrt{\critinf}}{(1-\gamma)^2}$$
    
    with computational cost of $\poly(\sumA, \critinf, \frac{1}{1 - \gamma}, \frac{1}{\epsilon}, \log(\frac{1}{\delta}), \log(1+b))$
    and query cost $\cO\left(\tfrac{\critinf^2}{\epsilon^2 (1-\gamma)^4} \right)$ 

    \paragraph{Theroem for \MCPolitexKern~combined with \UncertaintyCheckDAV}

    As mentioned in the body we state the theorem bounding the sub-optimality of the policy output by \MCPolitexKern~combined with \UncertaintyCheckKDAV. 
    \begin{theorem}[\textsc{Confident Kernel MC-Politex DAV} Sub-Optimality] \label{thm:kernel mc-politex-dav sub-optimality}	
            Suppose Assumption \cref{ass:kernel q-pi,ass:kernel additive,ass: bounded features} hold.
            Define $\scritinf := \critinf$.
    	If $\epsilon = 0$, for any $\kappa > 0$, with probability at least $1 - \delta$, the policy $\bar \pi_{K-1}$, output by \MCPolitexKern~combined with \UncertaintyCheckKDAV~satisfies
    	\begin{equation*}
    		V^*(\rho) - V_{\bar \pi_{K-1}}(\rho) \leq \kappa. 
    	\end{equation*}
    	Further, the query cost is $\cO\left(\tfrac{m^3\scritinf^3}{\kappa^4 (1-\gamma)^9} \right)$
    	and computation cost is $\poly(\sumA, \scritinf, \frac{1}{1 - \gamma}, \frac{1}{\kappa}, \log(\frac{1}{\delta}))$ 
    	  If $\epsilon > 0$, then with probability at least $1 - \delta$, the policy $\bar \pi_{K-1}$, output satisfies
    	\begin{equation*}
    		V^*(\rho) - V_{\bar \pi_{K-1}}(\rho) \leq \frac{16 \epsilon m \sqrt{\scritinf}}{1-\gamma}
    	\end{equation*}
    	Further, the query cost is $\cO\left(\tfrac{m \scritinf}{\epsilon^4 (1-\gamma)^5} \right)$
    	and computation cost is $\poly(\sumA, \scritinf, \frac{1}{1 - \gamma}, \frac{1}{\epsilon}, \log(\frac{1}{\delta}), \log(1+b))$
    	The parameter settings for both cases are defined below. 
    \end{theorem}

    \subsection{Proof of \cref{thm:kernel mc-politex-dav sub-optimality}}
    Plugging in $\eta=\eta_1$ ($\eta_1$ as defined in \cref{prop: approx value function bound for DAV}) into \cref{eq:politex actual optimality of output policy}.
    Suppose \cref{ass:kernel additive,ass:kernel q-pi,ass: bounded features} are satisfied with $\epsilon=0$.
    By choosing appropriate parameters according to $\delta$ and $\kappa$, we can ensure that with probability at least $1 - \delta$ that the policy output by \MCPolitexKern~combined with \UncertaintyCheckKDAV~, $\bar \pi_{K-1}$ satisfies:
    \begin{align*}
        V^*(\rho) - V_{\bar \pi_{K-1}}(\rho) \leq \kappa,
    \end{align*}
    with the following parameter settings
    \begin{align*}
        \tau &= 1\\
        \lambda &= \frac{\kappa^2(1 - \gamma)^2}{576 b^2 (2m-1)^2}\\
        \theta &= \frac{\kappa(1- \gamma)}{24 (2m-1) \sqrt{C_{\text{max}}}}\\
        H &= \frac{
            \log \left ( 24 \sqrt{C_{\text{max}}} (2m-1) \right)
            - \log \left( \kappa(1 - \gamma)^2 \right)
        }{
            1-\gamma
        } - 1\\
        K &= 2m \log(A) \left( \frac{4}{\kappa^2 (1-\gamma)^4} + \frac{3}{\kappa (1-\gamma)^2} + \frac{9}{16} \right)\\
        n &= \frac{\log(\delta) - \log(4KC_{\text{max}}^2)}{2 \theta^2(1-\gamma)^2} \\
        C_{\max} &= \critinf
    \end{align*}
    with computational cost of $\poly(\sumA, \critinf, \frac{1}{1 - \gamma}, \frac{1}{\kappa}, \log(\frac{1}{\delta}))$
    and query cost $\cO\left(\tfrac{m^3 \critinf^3}{\kappa^4 (1-\gamma)^9} \right)$ 

    If $\epsilon > 0$, then by choosing parameters as above, with $\kappa = \frac{8 \epsilon m \sqrt{\critinf}}{(1-\gamma)} $, we can ensure that with probability of at least $1 - \delta$ that $\bar \pi_{K-1}$ satisfies:
    
    $$V^*(\rho) - V_{\bar \pi_{K-1}}(\rho) \leq \frac{16 \epsilon m \sqrt{\critinf}}{1-\gamma}$$

    with computational cost of $\poly(\sumA, \critinf, \frac{1}{1 - \gamma}, \frac{1}{\epsilon}, \log(\frac{1}{\delta}), \log(1+ b))$
    and query cost $\cO\left(\tfrac{m \critinf}{\epsilon^4 (1-\gamma)^5} \right)$

\section{\uppercase{Examples and Experiments}}\label{app:additive mdp example}
\begin{figure}[h]
	\begin{center}
		
		\begin{tikzpicture}[auto,node distance=8mm,>=latex,font=\small]
			\tikzstyle{round}=[thick,draw=black,circle]
			\node[round] (s1) {$s_1$};
			\node[round, right=0mm and 20mm of s1] (s2) {$s_2$};
			\node[round, left=0mm and 20mm of s1] (s3) {$s_3$};
			
			\draw[->] (s1) [out=0,in=180] to node {
				\tiny $\phi(s_1,(1,1))=\begin{bmatrix} 1 \\ 0 \\ 
				\end{bmatrix}$
			} node [swap] {
				\tiny $\phi(s_1,(1,0))=\begin{bmatrix} 1 \\ 0 \\ 
				\end{bmatrix}$
			} node [anchor=mid, fill=white!20] {\tiny r = 0} (s2);
			\draw[->] (s1) [out=180,in=0] to node {
				\tiny $\phi(s_1,(0,0))=\begin{bmatrix} 0 \\ 1 \\ 
				\end{bmatrix}$
			} node [swap] {
				\tiny $\phi(s_1,(0,1))=\begin{bmatrix} 0 \\ 1 \\ 
				\end{bmatrix}$
			} node [anchor=mid, fill=white!20] {\tiny r = 0} (s3);
			\draw[->] (s2) [out=60,in=120,loop] to node {\tiny r = 0} node [swap] { \tiny
				$\phi(s_2,(1,0))=\phi(s_2,(0,0))=\begin{bmatrix} 1 \\ 0 \\ 
				\end{bmatrix}$
			}(s2);
			\draw[->] (s2) [out=-60,in=-120,loop] to node { \tiny
				$\phi(s_2,(0,1))=\phi(s_2,(1,1))=\begin{bmatrix} 2 \\ 0 \\ 
				\end{bmatrix}$
			} node [swap] {\tiny r = 1} (s2);
			\draw[->] (s3) [out=60,in=120,loop] to node {\tiny r = 1} node [swap] { \tiny
				$\phi(s_3,(1,0))=\phi(s_3,(0,0))=\begin{bmatrix} 0 \\ 2 \\ 
				\end{bmatrix}$
			}(s3);
			\draw[->] (s3) [out=-60,in=-120,loop] to node { \tiny
				$\phi(s_3,(0,1))=\phi(s_3,(1,1))= \begin{bmatrix} 0 \\ 1 \\ 
				\end{bmatrix}$
			} node [swap] {\tiny r = 0} (s3);
		\end{tikzpicture}
	\end{center}
\caption{Illustration of \cref{ex:ski}.}\label{fig:coordination-example}
\end{figure}
\subsection{Additive MDP, Cooperation Example}
\begin{example}[Coordination]\label{ex:ski}

	
	Consider the MDP in \cref{fig:coordination-example}, which can be verified to satisfy \cref{ass: feature decomposition} with $\gamma = 1/2$ (proof in the next subsection).
	At every time step, two agents in the MDP take actions from $\mathcal{A}^{(1)}= \mathcal{A}^{(2)} = \{0, 1\}$, and move to a next state together.
	The starting state is $s_1$ and by taking a joint action they move to $s_2$ or $s_3$, which 
	are absorbing states and the agents will remain in them once they get there.
	It is easy to see that if we fix the policy for one of the agents in all states, the other agent will face a reduced MDP where the transitions only depends on the its actions.
	We will show that for two different policies followed by the second agent,
	the problem (the MDP) the first agent faces changes.
	More specifically, the best action for the first agent in $s_1$ is different in the resulting MDPs, which suggests that 
	the first agent should coordinate
	with the second agent to achieve a higher value.
	It also shows that this example cannot be reduced to a product MDP, since in product MDPs the best action for each agent is irrespective of the behavior of the other agents.
	
	Assume two different policies $\pi_0, \pi_1: \cS \rightarrow \Delta_{\cA^{(2)}}$ for the second agent,
	such that $\pi_0(s_i) = \delta_0 , \pi_1(s_i) = \delta_1$ for $i \in [3]$ where $\delta_j$ for $j \in [2]$ is the Dirac delta distribution.
	Policy $\pi_0$ causes the joint policy $\pi$ to get reward 1 in $s_3$ and get reward 0 in $s_2$, regardless of the policy followed by the first agent.
	The effect of following $\pi_1$ is exactly the opposite, meaning getting reward 1 in $s_2$ and 0 in $s_3$.
	Consequently, the optimal action for agent 1 depends on choosing $\pi_0$ or $\pi_1$ by the second agent.
	Therefore, agent 1 needs to coordinate its action with the second agent's policy to get the higher reward.
	This property, coordination with other agent's policy, cannot be modeled with separate MDPs
	since in those cases the optimal action for each agent only depends on the agent's MDP, and does not depend
	on the behavior of other agents.
	This example shows that the \cref{ass: feature decomposition} is not limited
	to solving multiple MDPs with joint reward observation, and can model some cases where cooperation is needed.\looseness=-1
\end{example}

\subsubsection*{Realizability}
In this section we prove that the MDP in \cref{fig:coordination-example} satisfies \cref{ass: feature decomposition}.
We start by showing that all the deterministic policies are realizable using the shown feature vectors.
We use the weight vector
$w_{(a_1^1,a_1^2),(a_2^1,a_2^2),(a_3^1,a_3^2)}$ 
for a deterministic policy
that takes action vector $(a^1_i, a^2_i)$ in state $s_i$ for $i \in \{1, 2, 3\}$ and $a^1_i, a^2_i \in \{0,1\}$. We also use $\cdot$ to show that the choice of an action in the respective state does not change the weight vector. 
One can verify that the following vectors satisfy realizability assumption:
\begin{align*}
    w_{(\cdot, \cdot),(\cdot, 0),(\cdot, 0)} = \begin{bmatrix} 0 \\ 1 \end{bmatrix},
    \quad \quad
    w_{(\cdot, \cdot),(\cdot, 0),(\cdot, 1)} = \begin{bmatrix} 0 \\ 0 \end{bmatrix},
    \\
    w_{(\cdot, \cdot),(\cdot, 1),(\cdot, 0)} = \begin{bmatrix} 1 \\ 1 \end{bmatrix},
    \quad \quad
    w_{(\cdot, \cdot),(\cdot, 1),(\cdot, 1)} = \begin{bmatrix} 1 \\ 0 \end{bmatrix}.
\end{align*}
It remains to show that the non-deterministic policies are also realizable.
For a policy $\pi$ that takes action $(\cdot, 1)$ at $s_2$ with probability $p_2$, and action $(\cdot, 0)$ at $s_3$ with probability $p_3$, the realizable weight vector is:
\[
    w_\pi = \begin{bmatrix}
    p_2\\
    p_3\\
    \end{bmatrix}.
\]
This holds since the choice of the action in $s_1$ does not change the weight vector in this example.

\subsection{Experimental Results}

\begin{wrapfigure}{r}{0.3\textwidth}
	\begin{center}
		\includegraphics[width=0.3\textwidth]{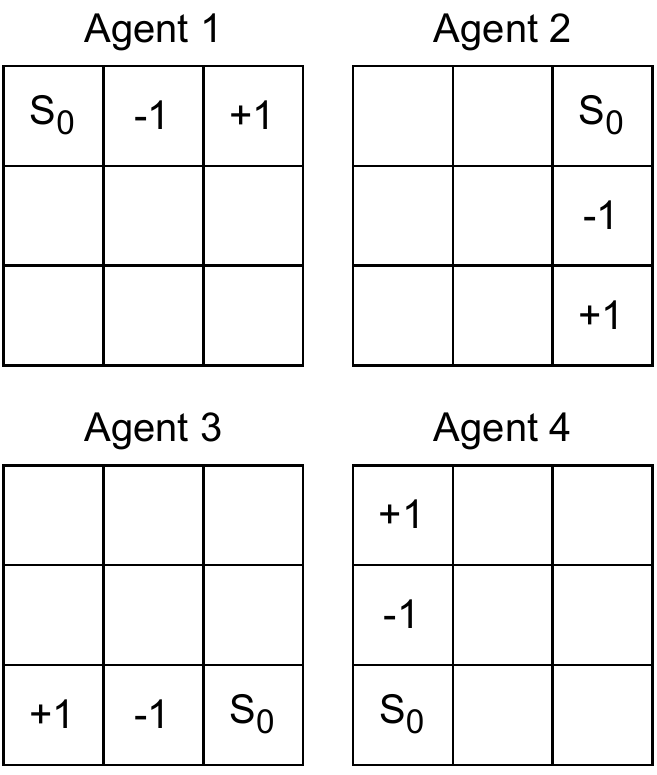}
	\end{center}
	\caption{Four agent grid world.}\label{fig:grid}
\end{wrapfigure}

We evaluate the performance of the proposed algorithms in a small grid world example as shown in \cref{fig:grid}. 
Each of four agents is placed in a 3x3 grid world. 
The agents obtain a +1 reward for reaching the goal state and a -1 reward in a `trap' state. 
Reaching either the trap state or the reward state terminates the episode. 
Each agent has four actions to move to a neighboring cell. 
The selected action is applied with probability 0.95 while with 0.05 probability an action is chosen uniformly at random. 
The global reward is the sum of the agents rewards. 
Note that the individual rewards are not observed, therefore the example is different from four separate grid worlds.

We run each variant of the algorithm for 50 iterations ($K=50$) without resets (the resets are mainly for simplicity of analysis). 
The discount factor is set to $\gamma = 0.8$, the regularization parameter is set to $\lambda = 10^{-5}$, for Politex we set $\alpha=1$ and the rollout length is $H=15$. 
The agents' individual features are one-hot encodings of agent, agent positions and actions which results in a feature of dimension $d=4 \cdot 9 \cdot 4 = 144$. 
Note, however, that the joint MDP is \emph{not} tabular, as the joint features,~i.e. the sum over the agent features, are not one-hot vectors. 
In fact, the features are crucial for generalization as there are a total $9^4 = 9561$ joint states for all four agents combined.

\Cref{fig:exp} shows two experiments with $n=10$ and $n=50$ rollouts. 
The plots show the performance of the policy estimate after each iteration averaged over 25 random seeds. 
We run both \MCLSPI~and \MCPolitex~with \EGSS~(\cref{alg:uncertainty check egss}) and \DAV~(\cref{alg:uncertainty check dav}) uncertainty checks. 
In addition we compare to the \NAIVE~uncertainty check (\cref{alg:uncertainty check}) that iterates over all $|A| = 4^4$ actions \citep{yin2021efficient}.  
Note that with 50 rollouts, both the \EGSS~and \DAV~variants perform essentially the same as \NAIVE, despite the relaxed uncertainty bound. 
LSPI finds a good policy within at most five iterations. 
With only 10 rollouts, the final policy of \MCLSPI~ converges to a suboptimal value on average. This can be understood as the data between iterations is not shared, and the noise from the Monte-Carlo estimates sometimes leads to a deteriorating in the policy improvement step. With 50 rollouts per iteration, LSPI reliably finds the optimal policy in all MDPs.
On the other hand, \MCPolitex~is much more stable even with just 10 rollouts, but also requires more iterations to converge. 
This is expected because in \MCPolitex,~the policy estimates from all iterations are averaged.

\begin{figure}[t]
	\includegraphics{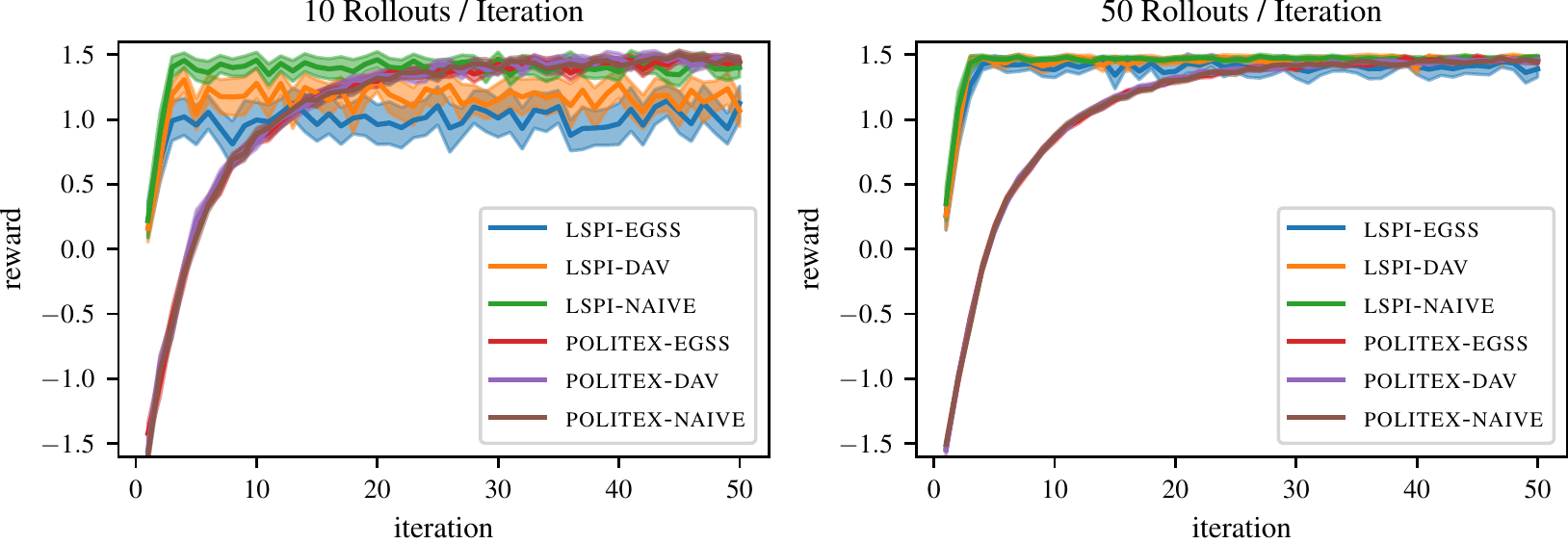}
	\caption{Numerical results on a grid world with four agents.}\label{fig:exp}
\end{figure}

\end{document}


%

%

\onecolumn
\aistatstitle{Provably Efficient Cooperative Multi-Agent Reinforcement Learning with an Online Simulator \\
Supplementary Materials}

\section{Efficient Policy Sampling} \label{app:efficienct policy sampling}
    The policy in \citep{yin2021efficient} can be extended to the multi-agent setting with action space $\cA^\allm$ as follows
    \begin{equation}
    \pi_k(a^\allm|s) \gets 
        \begin{cases}
            \mathbbm{1}\left(a^\allm = \argmax\limits_{\tilde{a}^\allm \in \cA^\allm} w^\top \phi(s, \tilde{a}^\allm)\right) & \text{LSPI} \\
            \exp\left(\alpha \sum\limits_{j=0}^{k-1} w_{j}^\T \phi(s, a^\allm)\right) / \sum\limits_{a^\allm \in \cA^\allm} \exp\left(\alpha \sum\limits_{j=0}^{k-1} w_{j}^\T \phi(s, a^\allm)\right) . & \text{Politex}
        \end{cases} \label{eq:yin policy}
    \end{equation}
    with $w_k = (\Phi_\cC^\top \Phi_\cC + \lambda I)^{-1} \Phi_\cC^\top q_\cC$ 
    and $Q_{k-1}(s, a) = \Pi_{[0, (1-\gamma)^{-1}]} (w_k^\T \phi(s, a))$ for the Politex case only.
    In this appendix we show that the above policy can be sampled from efficiently if assumption \cref{ass: feature decomposition}  or \cref{ass: argmax oracle} is satisfied for the LSPI case and policy $\pi_k$ can be sampled from efficiently if \cref{ass: argmax oracle} is satisfied for the Politex case.
    To be precise, by efficiently we mean with computation that depends on $\poly(m, A, d)$ and not $\poly(A^m, d)$.
    We assume only $w \in \RR^d$ or $w_0, ..., w_{k-1} \in \RR^d$ (for LSPI and Politex respectively) and a feature map $\phi: \cS \times \cA^\allm \to \RR^d$ are given, thus the process of sampling may require calculating the policy if necessary to accurately sample.
    First we handle the LSPI case.
    
    \begin{proposition}[Efficient LSPI Policy Sampling]
        \label{prop: efficient lspi policy sampling}
        Given state $s \in \cS$, parameter vector $w \in \RR^d$, feature map $\phi: \cS \times \cA^\allm \to \RR^d$ and assumption \ref{ass: feature decomposition} or \ref{ass: argmax oracle} satisfied. 
        Then policy
        $$\pi_k(a^\allm|s) = \mathbbm{1}\left(a^\allm = \argmax_{\tilde{a}^\allm \in \cA^\allm} w^\top \phi(s, \tilde{a}^\allm)\right)$$
        can be sampled from in time $\poly(d, m, A)$.
    \end{proposition}
    \begin{proof}
        One can sample from policy $\pi_k$ by simply outputting the result of $\argmax_{\tilde{a}^\allm \in \cA^\allm} w^\top \phi(s, \tilde{a}^\allm)$. 
        Under assumption \ref{ass: argmax oracle} $\argmax_{\tilde{a}^\allm \in \cA^\allm} w^\top \phi(s, \tilde{a}^\allm)$ can be computed in time $\poly(d, m, A)$ by applying the oracle to $w$ and $\phi$ (i.e. $\cO(w, \phi)$).
        Since assumption \ref{ass: feature decomposition} implies assumption \ref{ass: argmax oracle} the result also holds under assumption \ref{ass: feature decomposition}. 
    \end{proof}

    Next, we handle the Politex case. 
    To achieve the result below we had to modify the Politex algorithm slightly, by removing the clipping of the $Q$-function at each iteration $k$ (i.e. we define the $Q$-function at iteration $k$ to be $Q_{k-1}(s, a) = w_k^\T \phi(s, a^\allm)$ instead of $Q_{k-1}(s, a) = \min\{\max\{w_k^\T \phi(s, a^\allm), 0\}, 1/(1-\gamma)\}$).
    This was done since we were not aware of an efficient way to compute the clipped $Q$-function for all action-vectors in $\cA^\allm$.
    Removing the clipping only suffers a minor increase in the final policies sub-optimality (shown in Appendix \ref{app: theorem proofs} and in the Notes section of \citet{szepesvari2022})
    \begin{proposition}[Efficient Politex Policy Sampling]
        \label{prop: efficient politex policy sampling}
        Given state $s \in \cS$, parameter vectors $w_0, ..., w_{k-1} \in \RR^d$, feature map $\phi: \cS \times \cA^\allm \to \RR^d$ and assumption \ref{ass: feature decomposition} satisfied. 
        Then policy
        $$\pi_k(a^\allm|s) = \exp\left(\alpha \sum\nolimits_{j=0}^{k-1} w_{j}^\T \phi(s, a^\allm)\right)/ \sum\nolimits_{\tilde{a}^\allm \in \cA^\allm} \exp \left(\alpha \sum\nolimits_{j=0}^{k-1} w_{j}^\T \phi(s, \tilde{a}^\allm)\right)$$ 
        can be sampled from in time $\poly(d, m, A)$.
    \end{proposition}

    \begin{proof}
    Fix arbitrary $a^\allm \in \cA^\allm$.
    To sample from $\pi_k$ it is sufficient to sample actions $a^\allm \in \cA^\allm$ proportional to $\exp(\alpha \sum_{j=0}^{k-1} Q_{j}(s, a^\allm))$.
    Rearranging $\exp(\alpha \sum_{j=0}^{k-1} Q_{j}(s, a^\allm))$ and plugging in that $\phi(s, a^\allm) = \sum_{i=1}^m w^\T \phi_i(s, a^\ag{i})$ under assumption \ref{ass: feature decomposition} we have
    \begin{align*}
        \exp\left(\alpha \sum\nolimits_{j=0}^{k-1} w_{j}^\T \phi(s, a^\allm)\right) 
        &= \prod_{j=0}^{k-1} \exp\left(\alpha w_{j}^\T \phi(s, a^\allm)\right) \\
        &= \prod_{j=0}^{k-1} \exp\left(\alpha w_{j}^\T \sum_{i=1}^m \phi_i(s, a^\ag{i})\right) \\
        &= \prod_{i=1}^m \prod_{j=0}^{k-1} \exp\left(\alpha w_{j}^\T \phi_i(s, a^\ag{i})\right)
    \end{align*}

    Which means that the probability of sampling action $a^\allm$ is equal to the product of the probabilities of sampling $a^\ag{i}$ for agents $i \in [m]$ independently.
    Since $a^\allm$ was arbitrary this completes the proof.
    \end{proof}

    \section{{Bound on Core Set Size}}
    Recall, that only tuples containing state-action vectors that satisfy $\phi(s, a^\allm)^\top (\Phi^\top \Phi + \lambda I)^{-1} \phi(s, a^\allm) > \tau$ are add to the core set. 
    This ensures that the size of the core set can be be bounded by a $\poly(d, \tau, \log(m), \log(1/\lambda))$ function.
    
    \begin{lemma}[Bound on Core Set Size (modified Lemma 5.1 in \citep{yin2021efficient})] \label{lemma:bound on core set size}
        When \cref{ass: bounded features} is satisfied, and $(s,  a^\allm) \in (\cS \times \cA^\allm)$ that satisfy $\phi(s, a^\allm)^\top (\Phi_\cC^\top \Phi_\cC + \lambda I)^{-1} \phi(s, a^\allm) > \tau$ are added to the core set, the size of the core set can be bounded by
        \begin{align}
             \tilde{C}_{\max} := \frac{e}{e-1} \frac{1 + \tau}{\tau} d \left( 
                \log(1 + \frac{1}{\tau}) +
                \log(1 + \frac{m}{\lambda})
            \right). \label{eq:cmax-new}       
        \end{align}
    \end{lemma}

    The Lemma displayed above is borrowed directly from \citet{yin2021efficient}.
    The only difference is that the features in our setting are bounded by $m$ instead of $1$ (Assumption \ref{ass: bounded features}), and thus we obtain an extra $\log(m)$ factor, which can easily be verified.

    \section{Efficient Uncertainty Check} \label{app:efficient uncertainty check}
    \label{app: efficient uncertainty check}
    The \textsc{Uncertainty Check} algorithm is the implementation used by \citet{yin2021efficient} extended to the multi-agent setting, where the action space is $\cA^\allm$.
    The \textsc{Confident MC-LSPI/Politex} algorithm proposed by \citet{yin2021efficient} algorithm is identical to our \textsc{Confident Multi-Agent MC-LSPI/Politex} algorithm with \textsc{Uncertainty Check with (DAV/EGSS)} replaced with \textsc{Uncertainty Check} and the policy on line 15 replaced with \cref{eq:yin policy}.
    \begin{algorithm}
    \caption{\textsc{Uncertainty Check}} \label{alg:uncertainty check}  
    \begin{algorithmic}[1]
    \State \textbf{Input:} state $s$, feature matrix $\Phi_\cC$, regularization coefficient $\lambda$, threshold $\tau$.
    \For {$a^\allm \in \cA^\allm$}
        \If {$\phi(s, a^\allm)^\top (\Phi_\cC^\top \Phi_\cC + \lambda I)^{-1} \phi(s, a^\allm) > \tau$}
            \State status $\gets$ uncertain, result $\gets (s, a^\allm, \phi(s, a^\allm), \text{none})$
            \State \Return {status, result}
        \EndIf
    \EndFor 
    \State \Return certain, none 
    \end{algorithmic}
    \end{algorithm}
    In this appendix we show how the loop over all action vectors $a^\allm \in \cA^\allm$ in the \textsc{Uncertainty Check} algorithm (line 2) can be avoided when either \cref{ass: feature decomposition} or \cref{ass: argmax oracle} is satisfied.
    In particular, we show that \textsc{Uncertainty Check with DAV} and \textsc{Uncertainty Check with EGSS} algorithms are able to reduce the computation time of the \textsc{Uncertainty Check} from $\poly(A^m, d)$ to $\poly(m, A, d)$, while still maintaining suitable output policy guarantees.

    Since, we are extending the \textsc{Confident MC-LSPI/Politex} algorithm proposed by \citet{yin2021efficient}, we will be borrowing much of the steps from their proof.
    \citet{yin2021efficient} used a \textit{virtual algorithm} (VA) and \textit{main algorithm} (MA) to prove the sub-optimality of their \textsc{Confident MC-LSPI/Politex} algorithm.
    We give a brief summary of the VA and MA; however, avoid full details since we use the exact same definition as in \citet{yin2021efficient}. 
    Until the next subsection, assume \textsc{Uncertainty Check} is used in place of \textsc{Uncertainty Check with (DAV/EGSS)} in \textsc{Confident Multi-Agent MC-LSPI/Politex}.
    The MA is exactly \textsc{Confident Multi-Agent MC-LSPI/Politex}. 
    The VA is based on the \textsc{Confident Multi-Agent MC-LSPI/Politex} algorithm, but has some differences, which we outline next. 
    The VA runs for exactly $C_\text{max}$ loops, $K$ iterations, and completes all $n$ of its rollouts of length $H$. 
    For each loop and iteration $k$ the VA always obtains estimates $q_\cC$ of its policy.
    The VA uses a different policy than the MA for rollouts.
    We will first focus on the LSPI case and return to Politex much later.
    The VA's $Q$-function at iteration $k$ is  
    \begin{equation*}
        \tilde{Q}_{k-1}(s, a^\allm)=\begin{cases}
                          \tilde{w}_k^\top \phi(s, a^\allm) \quad &\text{if} \, \phi(s, a^\allm) \in \cH \\
                          Q_{\tilde{\pi}_{k-1}}(s, a^\allm)     \quad &\text{if} \, \phi(s, a^\allm) \notin \cH \\
                    \end{cases}
    \end{equation*}
    where $\tilde w_k = V_\cC^{-1} \Phi_\cC^\top \tilde q_\cC$, and $\tilde q_\cC$ are the estimates obtained from running \textsc{Multi-Agent-Confident Rollout} on each element of the core set, and $\cH = \{\phi(s, a^\allm): \|\phi(s, a^\allm)\|_{V_\cC^{-1}}^2 \le \tau\}$ is the \textit{good set}.
    The VA's policy is
    \begin{equation*}
        \tilde \pi_k(a^\allm | s) = \mathbbm{1} \left( a^\allm = \argmax_{\tilde a^\allm \in \cA^\allm} \tilde Q_{k-1}(s, \tilde a^\allm) \right).
    \end{equation*}

    The nice thing about defining the VA's policy in this way is that we can make use of the following Lemma from \citep{yin2021efficient}.
    
    \begin{lemma}[Lemma B.2 in \citep{yin2021efficient}]
    \label{lemma: yin lemma b.2}
    Suppose that Assumption \ref{ass: feature decomposition} holds. 
    With all terms as defined earlier and $\theta > 0$. 
    Then, with probability at least 
    $$1 - 2C_{\text{max}} \exp(-2 \theta^2(1-\gamma)^2 n)$$
    for any $(s,  a^\allm) \in (\cS \times \cA^\allm)$ pair such that $\phi(s, a^\allm) \in \mathcal{H}$, we have 
    $$|\tilde{Q}_{k-1} (s, a^\allm) - Q_{\tilde\pi_{k-1}} (s, a^\allm)| \le b\sqrt{\lambda \tau} + \left(\epsilon +  \frac{\gamma^{H-1}}{1 - \gamma} + \theta \right) \sqrt{\tau C_{\text{max}}} + \epsilon := \eta$$
    \end{lemma}

    Notice that for any $(s,  a^\allm) \in (\cS \times \cA^\allm)$ pair such that $\phi(s, a^\allm) \notin \mathcal{H}$, the VA's $Q$-function $\tilde Q_{k-1}$ has access to the true $Q$-function $Q_{\tilde{\pi}_{k-1}}$ of policy $\tilde \pi_{k-1}$.
    Thus, we have that 
    \begin{equation}
    \label{eq:va inf-norm bound}
        \| \tilde{Q}_{k-1} (s, a^\allm) - Q_{\tilde\pi_{k-1}} (s, a^\allm) \|_\infty \le \eta 
    \end{equation}
    Combined with the fact that $\tilde \pi_k$ is greedy w.r.t. $\tilde Q_{k-1}$ the above result turns out to be especially useful.
    
    To understand why, we state a classic policy improvement result, which can be found as Lemma B.3 in \citet{yin2021efficient} and in other papers.
    \begin{lemma}[approximate policy iteration]
    \label{lemma:approximate policy iteration}
        Suppose that we run K approximate policy iterations and generate a sequence of policies
        $\pi_0, \pi_1, \pi_2, \cdots, \pi_K$.
        Suppose that for every $k = 1, 2, \cdots, K$, in the k-th iteration, we obtain a function
        $\tilde{Q}_{k-1}$ such that, $\| \tilde{Q}_{k - 1} - Q_{\pi_{k - 1}} \|_\infty \leq \eta$,
        and choose $\pi_k$ to be greedy with respect to $\tilde{Q}_{k-1}$.
        Then
        \begin{align*}
            \| Q^* - Q_{\pi_K} \|_\infty \leq \frac{2 \eta}{1 - \gamma} + \frac{\gamma^K}{1 - \gamma},
        \end{align*}
    \end{lemma}

    In our case the VA's policy $\tilde \pi_k$ is greedy w.r.t. $\tilde Q_{k-1}$ and thus we have that
    \begin{align*}
        \| Q^* - Q_{\tilde \pi_K} \|_\infty \leq \frac{2 \eta}{1 - \gamma} + \frac{\gamma^K}{1 - \gamma},
    \end{align*}

    Now we explain how the MA can be related to the VA, and make use of the above result.
    The \textsc{Uncertainty Check} algorithm can have two cases: 
    
    \textbf{Case 1:} $\|\phi(s, a^\allm)\|_{V_\cC^{-1}}^2 > \tau$ holds for at least one $a^\allm \in \cA^\allm$,
    
    \textbf{Case 2:} $\|\phi(s, a^\allm)\|_{V_\cC^{-1}}^2 \le \tau$ holds for all $a^\allm \in \cA^\allm$. This is equivalent to saying $\phi(s, a^\allm) \in \cH, \ \forall a^\allm \in \cA^\allm$.

    The VA is exactly the same at the MA algorithm, until Case 1 occurs for the first time.
    This is because the MA's and VA's simulators are coupled, in the sense that at iteration $k$, rollout $i$, and step $t$, when both simulators are queried with the same state-action vector pairs, they sample the exact same next state and reward. 
    The VA also uses the same initial policy as the MA at the start of policy iteration for every loop.
    Once Case 1 occurs the MA would restart policy iteration (else condition in line 12 of \textsc{Confident Multi-Agent MC-LSPI/Politex}), while the VA does not. 
    The VA records the state-action vector pair when Case 1 occurs for the first time and adds it to the core set once it completes running policy iteration for the current loop.
    In this way the core set maintained by the MA and VA are always the same.
    Since the size of the core set is bounded by $C_\text{max}$ when $(s, a^\allm) \in (\cS \times \cA^\allm)$ that satisfy $\phi(s, a^\allm)^\top (\Phi_\cC^\top \Phi_\cC + \lambda I)^{-1} \phi(s, a^\allm) > \tau$ are added to the core set (\cref{lemma:bound on core set size}), there will be a loop of policy iteration at which the MA and VA never encounter Case 1 for any of the $K$ iterations of policy iteration.
    We call this loop the \emph{final loop}.
    This is equivalent to say that all $(s,  a^\allm) \in (\cS \times \cA^\allm)$ observed during all $K$ iterations of policy iteration in the final loop are in the good set (i.e. $\phi(s, a^\allm) \in \cH$).
    Notice that this means MA and VA behaved identical in the final loop, since the VA's policy would have always been greedy w.r.t. $\tilde{w}_k^\top \phi$ and the MA and VA use the same initial policy at the start of each loop.
    It turns out this relationship between the MA and VA allows us to bound the sub-optimality of the MA in the final loop, by using the result in \cref{eq:va inf-norm bound} we have for the VA. 
    More precisely, the following result can be extracted from \citep{yin2021efficient}
    \begin{proposition}[equation (B.15) in \citet{yin2021efficient}] \label{prop:optimality of output policy}
    With all terms as defined earlier. 
    Define $\eta \ge \|\tilde Q_{k-1}(s, a^\allm) - Q_{\tilde \pi_{k-1}}(s, a^\allm)\|_\infty$. 
    Suppose $\eta \ge |\tilde w_{k}^\T \phi(\rho, a^\allm) - Q_{\tilde \pi_{k-1}}(\rho, a^\allm)|_\infty, \ \forall a^\allm \in \cA^\allm$.
    Then, if the VA and MA behave identically in the final loop, with probability at least $1 - 4KC_{\text{max}}^2 \exp(-2 \theta^2(1-\gamma)^2 n)$ we have
    \begin{align}
        V^*(\rho) - V_{\pi_{K-1}}(\rho) \leq \frac{8 \eta}{(1 - \gamma)^2} + \frac{2 \gamma^{K - 1}}{(1 - \gamma)^2} \label{eq: v function bound main}
    \end{align}
    \end{proposition}
    
    Notice, that we require three things to use the above result. 
    We need a bound on $\|\tilde Q_{k-1}(s, a^\allm) - Q_{\tilde \pi_{k-1}}(s, a^\allm)\|_\infty$. 
    We need a bound on $|\tilde w_{k}^\T \phi(\rho, a^\allm) - Q_{\tilde \pi_{k-1}}(\rho, a^\allm)|_\infty, \ \forall a^\allm \in \cA^\allm$.
    We need to ensure that the VA and MA behave identically in the final loop. 
    Then, we can get a bound on the sub-optimality of the MA's output policy $\pi_{K-1}$.
    An important observation is that \textsc{Uncertainty Check} ensured that MA and VA behave identically in the final loop.
    It did this by making sure that the VA's policy $\tilde \pi_k$ would only be able to use $\tilde{w}_k^\top \phi$ to derive its actions, since \textsc{Uncertainty Check} always returns a \textsc{status} of \textsc{certain} in the final loop, which means that $\phi(s, a^\allm) \in \cH$ for all $s, a^\allm \in \cS \times \cA^\allm$ encountered in the final loop.
    With this information in mind, we now show that \textsc{Uncertainty Check with DAV} and \textsc{Uncertainty Check with EGSS} only requires computation $\poly(d, m, A)$, while providing only slightly worse sub-optimality guarantees when compared to the result in \citep{yin2021efficient}.


    \subsection{Default Action Vector (DAV) Method}
    In this section we prove some useful results for \textsc{Uncertainty Check with DAV}.
    Fix a state $s \in \cS$.
    First, \textsc{Uncertainty Check with DAV} only iterates over $\sumA$ action vectors instead of all the action vectors like \textsc{Uncertainty Check} does.
    Define the $\sum_{i=1}^m A^\ag{i}$ sized set of modified default action vectors as $\bar \cA^\allm = \{ (a^\ag{i}, \bar a^{(-i)}): a^\ag{i} \in \cA^\ag{i}, \ i \in [m] \}$.
    Notice \textsc{Uncertainty Check with DAV} iterates over all the actions in the set $a^\allm \in \bar \cA^\allm$ and checks if any of them satisfy $\|\phi(s, a^\allm)\|_{V_\cC^{-1}}^2 > \tau$.
    This of course achieves the goal of only $\poly(d, m, A)$ computation.
    Also, since only $a^\allm \in \cA^\allm$ that satisfy $\|\phi(s, a^\allm)\|_{V_\cC^{-1}}^2 > \tau$ are added to the core set, we can still use \cref{lemma:bound on core set size} to bound the size of the core set by $C_\text{max}$.

    Now, we aim to ensure that the VA and MA behave identically in the final loop.
    Define the set of states for which all the modified default action vectors are in the good set as $\bar \cS = \{ s \in \cS: \|\phi(s, a^\allm)\|_{V_\cC^{-1}}^2 \le \tau, \forall a^\allm \in \bar \cA^\allm \}$.
    Redefine the VA's $Q$-function as 
    \begin{equation*}
    \tilde Q_{k-1}(s, a^\allm)=\begin{cases}
          \tilde w_k^\top \phi(s, a^\allm) \quad & s \in \bar \cS \\
          Q_{\tilde{\pi}_{k-1}}(s, a^\allm). \quad & s \in \cS \backslash \bar \cS
        \end{cases}
    \end{equation*}
    The VA's policy is
    \begin{equation*}
        \tilde \pi_k(a^\allm | s) = \mathbbm{1} \left( a^\allm = \argmax_{\tilde a^\allm \in \cA^\allm} \tilde Q_{k-1}(s, \tilde a^\allm) \right).
    \end{equation*}
    Notice that in the final loop the check $\phi(s, (a^{(j)}, \bar a^{(-j)}))^\top (\Phi_\cC^\top \Phi_\cC + \lambda I)^{-1} \phi(s, (a^{(j)}, \bar a^{(-j)})) > \tau$ in \textsc{Uncertainty Check with DAV} never returns \textsc{True}, and thus we are sure that all $a^\allm \in \bar \cA^\allm$ for all the states encountered in the final loop are in the good set.
    Notice that these states that satisfy this condition are state in $\bar \cS$.
    Thus, the VA's policy $\pi_{k}$ would always be greedy w.r.t. $\tilde w_k^\top \phi$ in the final loop.
    This ensures that the VA and MA behave identically in the final loop.

    Now we show that we can bound $\|\tilde Q_{k-1}(s, a^\allm) - Q_{\tilde \pi_{k-1}}(s, a^\allm)\|_\infty$ with this new definition of $\tilde Q_{k-1}$. 
    First we state a slight modification of \cref{lemma: yin lemma b.2} for $w_{\tilde{\pi}_{k-1}}^\T \phi$ instead of $Q_{\tilde{\pi}_{k-1}}$ which excludes the $\|w_{\tilde{\pi}_{k-1}}^\top \phi(s, a^\allm) - Q_{\tilde{\pi}_{k-1}}(s, a^\allm)\|_\infty \le \epsilon$ term in the proof of Lemma B.2 in \citep{yin2021efficient}.
    
    \begin{lemma}[Lemma B.2 in \citep{yin2021efficient}]
    \label{lemma: yin lemma b.2 no epsilon}
    Suppose that Assumption \ref{ass: feature decomposition} holds. 
    With all terms as defined earlier and $\theta > 0$. 
    Then, with probability at least 
    $$1 - 2C_{\text{max}} \exp(-2 \theta^2(1-\gamma)^2 n)$$
    for any $(s,  a^\allm) \in (\cS \times \cA^\allm)$ pair such that $\phi(s, a^\allm) \in \mathcal{H}$, we have 
    $$|\tilde{w}_{k} (s, a^\allm) - w_{\tilde\pi_{k-1}}^\top (s, a^\allm)| \le b\sqrt{\lambda \tau} + \left(\epsilon +  \frac{\gamma^{H-1}}{1 - \gamma} + \theta \right) \sqrt{\tau C_{\text{max}}}:= \bar\eta$$
    \end{lemma}

    The following Proposition gives us a bound on $\|\tilde Q_{k-1}(s, a^\allm) - Q_{\tilde \pi_{k-1}}(s, a^\allm)\|_\infty$.
    
    \begin{proposition}[approximate value function bound for DAV]
    \label{prop: approx value function bound for DAV}
    Suppose that Assumption \ref{ass: feature decomposition} holds. 
    With all terms as defined earlier and $\theta > 0$. 
    Then, with probability at least 
    $$1 - 2C_{\text{max}} \exp(-2 \theta^2(1-\gamma)^2 n)$$
    we have
    $$\|\tilde Q_{k-1}(s, a^\allm) - Q_{\tilde{\pi}_{k-1}}(s, a^\allm)\|_\infty \le \bar\eta (2m-1) + \epsilon := \eta_1.$$ 
    \end{proposition}
    
    \begin{proof}
    
    For any $(s, a^\allm) \in (\bar \cS \times \cA^\allm)$, we have
    \begin{align}
        & |\tilde Q_{k-1}(s, a^\allm) - Q_{\tilde{\pi}_{k-1}}(s, a^\allm)| \nonumber \\
        &= |\tilde{w}_k^\top \phi(s, a^\allm) - Q_{\tilde{\pi}_{k-1}}(s, a^\allm)| \nonumber \\
        &= |\tilde{w}_k^\top \phi(s, a^\allm) \pm w_{\tilde{\pi}_{k-1}}^\top \phi(s, a^\allm) - Q_{\tilde{\pi}_{k-1}}(s, a^\allm)| \nonumber \\
        &\le |\tilde{w}_k^\top \phi(s, a^\allm) - w_{\tilde{\pi}_{k-1}}^\top \phi(s, a^\allm)| + |w_{\tilde{\pi}_{k-1}}^\top \phi(s, a^\allm) - Q_{\tilde{\pi}_{k-1}}(s, a^\allm)| \nonumber \\
        &\le |\tilde{w}_k^\top \phi(s, a^\allm) - w_{\tilde{\pi}_{k-1}}^\top \phi(s, a^\allm)| + \epsilon \nonumber \\
        &= |\tilde{w}_k^\top \phi(s, a^\allm) - w_{\tilde{\pi}_{k-1}}^\top \phi(s, a^\allm) \pm (m-1)\tilde{w}_k^\top \phi(s, \bar a^\allm) \pm (m-1) w_{\tilde{\pi}_{k-1}}^\top \phi(s, \bar a^\allm)| + \epsilon \nonumber \\
        &= \left|\left( \sum_{i=1}^m \tilde{w}_k^\top \phi(s, (a^\ag{i}, \bar a^{(-i)})) - w_{\tilde\pi_{k-1}}^\top \phi(s, (a^\ag{i}, \bar a^{(-i)})) \right) + (m-1)\left[w_{\tilde{\pi}_{k-1}}^\top \phi(s, \bar a^\allm)) - \tilde{w}_k^\top \phi(s, \bar a^\allm\right]\right| + \epsilon \nonumber \\
        &\le m \bar\eta + (m-1) \bar \eta + \epsilon \nonumber \\ 
        &= \bar\eta (2m-1) + \epsilon \label{value function bound 1}
    \end{align}
    where the second last inequality holds by Lemma \ref{lemma: yin lemma b.2 no epsilon} (because the features of all the state action pairs considered are in $\cH$, since $s \in \bar \cS$).
    
    While for any $(s, a^\allm) \in ((\cS \backslash \bar \cS) \times \cA^\allm)$, we have
    \begin{align}
        |\tilde Q_{k-1}(s, a^\allm) - Q_{\tilde{\pi}_{k-1}}(s, a^\allm)| 
        = |Q_{\tilde \pi_{k-1}}(s, a^\allm) - Q_{\tilde{\pi}_{k-1}}(s, a^\allm)|
        &= 0 \label{value function bound 2}
    \end{align}
    \end{proof}

    Finally, it is left to show that $|\tilde w_{k}^\T \phi(\rho, a^\allm) - Q_{\tilde \pi_{k-1}}(\rho, a^\allm)|$  can be bounded for all $a^\allm \in \cA^\allm$.
    Notice that lines 3-6 in \textsc{Confident Multi-Agent MC-LSPI} run \textsc{Uncertainty Check with DAV} with state $\rho$ as input until the returned \textsc{status} is \textsc{certain}.
    Recall that once \textsc{Uncertainty Check with DAV} returns a \textsc{status} of \textsc{certain} we know that $\rho \in \bar \cS$.
    Thus, we can immediately apply the result in \cref{value function bound 1} to bound $\eta_1 \ge |\tilde w_{k}^\T \phi(\rho, a^\allm) - Q_{\tilde \pi_{k-1}}(\rho, a^\allm)|, \ \forall a^\allm \in \cA^\allm$.
    
    \subsection{Efficient Good Set Search Approach (EGSS)} \label{subsec:good set search}
    
    In this section we prove some useful results for \textsc{Uncertainty Check with EGSS}.
    Fix a state $s \in \cS$.
    First, we show that with $\poly(d, m, A)$ computation, one can find an action vector $a^\allm \in \cA^\allm$ that approximately maximizes $\phi(s, a^\allm)^\top V_\cC^{-1} \phi(s, a^\allm)$.
    
    \begin{lemma}[Efficient good set search]
    \label{lemma:good set search}
    Assume either assumption \ref{ass: feature decomposition} or \ref{ass: argmax oracle} is satisfied.
    With all terms as defined earlier. 
    One can ensure, with computation time $2 d^2 \sum_{i=1}^m A^\ag{i}$ that either
    $$\phi(s, a^\allm)^\top V_\cC^{-1} \phi(s, a^\allm) \le d\tau$$
    for all $a^\allm \in \cA^\allm$, or there exists an $a^\allm \in \cA^\allm$ such that
    $$\phi(s, a^\allm)^\top V_\cC^{-1} \phi(s, a^\allm) > \tau$$
    \end{lemma}
    
    \begin{proof}
    Recall that we are able to compute $\max_{a^\allm \in \cA^\allm} \langle u, \phi(s, a^\allm) \rangle$ for any $u \in \RR^d$ in $\poly(d ,m, A)$ time (due to assumption \ref{ass: feature decomposition} or \ref{ass: argmax oracle}).
    Now, we make use of a bi-directional 2-norm to $\infty$-norm inequality that will take advantage of the above mentioned efficient computation.
    
    Fix $\cC$ and define the lower triangular matrix $L$ via the Cholesky decomposition $V_\cC^{-1} = L L^\top$.
    Define $\{e_i\}_{i=1}^d$ as the standard basis vectors and 
    \begin{equation*}
    (v^*, a_\text{max}^\allm) := \text{arg} \left(\max_{v \in \{\pm e_i\}_{i=1}^d} \max_{a^\allm \in \cA^\allm} \langle L v, \phi(s, a^\allm) \rangle \right) 
    \end{equation*}
    Then we have that
    \begin{align}
       \frac{1}{d} \| \phi(s, a_\text{max}^\allm) \|_{V_\cC^{-1}}^2
       &= \frac{1}{d} \phi(s, a_\text{max}^\allm)^\top V_\cC^{-1} \phi(s, a_\text{max}^\allm) \nonumber \\
       &= \frac{1}{d} \phi(s, a_\text{max}^\allm)^\top L L^\top \phi(s, a_\text{max}^\allm) \nonumber \\
       &= \frac{1}{d} \| L^\top \phi(s, a_\text{max}^\allm) \|_2^2 \nonumber \\
       &\le \max_{a^\allm \in \cA^\allm} \| L^\top \phi(s, a^\allm)\|_\infty^2 \nonumber \\
       &= \max_{v \in \{\pm e_i\}_{i=1}^d} \max_{a^\allm \in \cA^\allm} \langle v, L^\top \phi(s, a^\allm) \rangle^2 \nonumber \\
       &= \max_{v \in \{\pm e_i\}_{i=1}^d} \max_{a^\allm \in \cA^\allm} \langle L v, \phi(s, a^\allm) \rangle^2 \nonumber \\
       &= \langle L v^*, \phi(s, a^\allm_\text{max}) \rangle^2  \label{inf-norm term} \\
       &\le \| L^\top \phi(s, a_\text{max}^\allm) \|_2^2 \nonumber 
    \end{align}
    Notice that the purpose of writing all the equalities up to equation (\ref{inf-norm term}) was to show that equation (\ref{inf-norm term}) can be computed in $\poly(d, m, A)$ time. 
    Since $\max_{a^\allm \in \cA^\allm} \langle L v, \phi(s, a^\allm) \rangle^2$ can be computed in $\poly(d, m, A)$ time (due to assumption \ref{ass: feature decomposition} or \ref{ass: argmax oracle}) and $\{\pm e_i\}_{i=1}^d$ contains $2d$ elements. 
    Also, note that $L$ can be computed with at most $d^2$ computation in each loop by doing a rank one update to the Cholesky decomposition of $V_\cC^{-1} = L L^\top$.
    
    If equation (\ref{inf-norm term}) is larger than $\sqrt{\tau}$, then $\|\phi(s, a_\text{max}^\allm) \|_{V^{-1}}^2 > \tau$.
    While, if equation (\ref{inf-norm term}) is less than or equal $\tau$, then $\|\phi(s, a_\text{max}^\allm) \|_{V^{-1}}^2 \le d\tau$, completing the proof.
    \end{proof}

    \textsc{Uncertainty Check with EGSS} is essentially an implementation of equation (\ref{inf-norm term}), thus it only takes computation $\poly(d, m, A)$ to run, as stated in Lemma \ref{lemma:good set search}.
    Also, since only $a^\allm \in \cA^\allm$ that satisfy $\|\phi(s, a^\allm)\|_{V_\cC^{-1}}^2 \ge \|\phi(s, a^\allm)\|_\infty^2 > \tau$ are added to the core set, we can still use \cref{lemma:bound on core set size} to bound the size of the core set by $C_\text{max}$.
    Basically, \ref{inf-norm term} is an underestimate of $\| \phi(s, a_\text{max}^\allm) \|_{V^{-1}}^2$ and we only add elements to the core set when it is larger than $\tau$, thus the core set is no larger than it was when using \textsc{Uncertainty Check}.
    
    Now, we aim to ensure that the VA and MA behave identically in the final loop.
    Notice that \textsc{Uncertainty Check with EGSS} provides a weaker guarantee than \textsc{Uncertainty Check}, when the returned \textsc{result} is \textsc{certain}.
    Specifically, when \textsc{Uncertainty Check with EGSS} returns a \textsc{result} of \textsc{certain}, then Lemma \ref{lemma:good set search} guarantees that $\|\phi(s, a^\allm)\|_{V_\cC^{-1}}^2 \le d\tau$ for all $a^\allm \in \cA^\allm$.
    While when the \textsc{Uncertainty Check} returns a \textsc{result} of \textsc{certain}, then $\|\phi(s, a^\allm)\|_{V_\cC^{-1}}^2 \le \tau$ for all $a^\allm \in \cA^\allm$.
    Thus, we define a smaller good set $\cH_d = \{ \phi(s, a^\allm): \|\phi(s, a^\allm)\|_{V_\cC^{-1}}^2 \le d\tau\}$.
    
    Redefine the VA's $Q$-function at iteration $k$ as
    \begin{equation*}
        \tilde{Q}_{k-1}(s, a^\allm)=\begin{cases}
                          \tilde{w}_k^\top \phi(s, a^\allm) \quad &\text{if} \, \phi(s, a^\allm) \in \cH_d \\
                          Q_{\tilde{\pi}_{k-1}}(s, a^\allm)     \quad &\text{if} \, \phi(s, a^\allm) \notin \cH_d \\
                    \end{cases}
    \end{equation*}
    and VA's policy as
    \begin{equation*}
        \tilde \pi_k(a^\allm | s) = \mathbbm{1} \left( a^\allm = \argmax_{\tilde a^\allm \in \cA^\allm} \tilde Q_{k-1}(s, \tilde a^\allm) \right).
    \end{equation*}
    Notice that in the final loop \textsc{Uncertainty Check with DAV} always returns a \textsc{result} of \textsc{certain}, and thus we are sure that all $a^\allm \in \cA^\allm$ for all the states encountered in the final loop are in the smaller good set $\cH_d$.
    Thus, the VA's policy $\pi_{k}$ would always be greedy w.r.t. $\tilde w_k^\top \phi$ in the final loop.
    This ensures that the VA and MA behave identically in the final loop.

    We need show that we can bound $\|\tilde Q_{k-1}(s, a^\allm) - Q_{\tilde \pi_{k-1}}(s, a^\allm)\|_\infty$ with this new definition of $\tilde Q_{k-1}$. 
    First we state a slight modification of \cref{lemma: yin lemma b.2} that holds for the smaller good set $\cH_d$ 
    
    \begin{lemma}[EGSS modified Lemma B.2 from \cite{yin2021efficient}]
    \label{lemma: mod b.2 egss}
    Suppose that Assumption \ref{ass: feature decomposition} holds. 
    With all terms as defined earlier and $\theta > 0$. 
    Then, with probability at least 
    $$1 - 2C_{\text{max}} \exp(-2 \theta^2(1-\gamma)^2 n)$$
    for any $(s,  a^\allm) \in (\cS \times \cA^\allm)$ pair such that $\phi(s, a^\allm) \in \cH_d$, we have 
    $$|\tilde{w}_k^\top \phi(s, a^\allm) - w_{\tilde\pi_{k-1}}^\top \phi(s, a^\allm)| \le b\sqrt{\lambda d \tau} + \left(\epsilon + \frac{\gamma^{H+1}}{1 - \gamma} + \theta \right) \sqrt{d \tau C_{\text{max}}} + \epsilon = \sqrt{d} \bar \eta:= \eta_2$$
    \end{lemma}
    
    \begin{proof}
    The proof is identical to that of Lemme B.2 from \cite{yin2021efficient} except $\tau$ is replaced with $d \tau$ everywhere, due to the weaker guarantee of algorithm \ref{alg:uncertainty check egss} as discussed above. 
    \end{proof}
    
    Essentially we get an extra $\sqrt{d}$ factor due to the smaller good set $\cH_d$. 
    Since the VA's policy $\tilde \pi_{k}$ has access to the true $Q$-function $Q_{\tilde \pi_{k-1}}$ for all $\phi(s, a^\allm) \notin \cH_d$,
    Now we show that $\|\tilde Q_{k-1}(s, a^\allm) - Q_{\tilde \pi_{k-1}}(s, a^\allm)\|_\infty$ can be bounded.
    
    \begin{proposition}[approximate value function bound for EGSS]
    \label{prop: approx value function bound for EGSS}
    Suppose that Assumption \ref{ass: feature decomposition} holds. 
    With all terms as defined earlier and $\theta > 0$. 
    Then, with probability at least 
    $$1 - 2C_{\text{max}} \exp(-2 \theta^2(1-\gamma)^2 n)$$
    we have
    $$\|\tilde Q_{k-1}(s, a^\allm) - Q_{\tilde{\pi}_{k-1}}(s, a^\allm)\|_\infty \le \eta_2.$$ 
    \end{proposition}

    \begin{proof}
    For any $(s, a^\allm) \in (\cS \times \cA^\allm)$ such that $\phi(s, a^\allm) \in \cH_d$, we have
    \begin{align}
        |\tilde Q_{k-1}(s, a^\allm) - Q_{\tilde{\pi}_{k-1}}(s, a^\allm)| \le \eta_2
    \end{align}
    by \cref{prop: approx value function bound for EGSS}.
    While for any $(s, a^\allm) \in (\cS \times \cA^\allm)$ such that $\phi(s, a^\allm) \notin \cH_d$, we have
    \begin{align}
        |\tilde Q_{k-1}(s, a^\allm) - Q_{\tilde{\pi}_{k-1}}(s, a^\allm)| 
        = |Q_{\tilde \pi_{k-1}}(s, a^\allm) - Q_{\tilde{\pi}_{k-1}}(s, a^\allm)|
        &= 0 
    \end{align}
    \end{proof}

    Finally, it is left to show that $|\tilde w_{k}^\T \phi(\rho, a^\allm) - Q_{\tilde \pi_{k-1}}(\rho, a^\allm)|$  can be bounded for all $a^\allm \in \cA^\allm$.
    Notice that lines 3-6 in \textsc{Confident Multi-Agent MC-LSPI} run \textsc{Uncertainty Check with EGSS} with state $\rho$ as input until the returned \textsc{status} is \textsc{certain}.
    Recall that once \textsc{Uncertainty Check with EGSS} returns a \textsc{status} of \textsc{certain} we know that $\rho \in \cH_d$.
    Thus, we can immediately apply \cref{lemma: mod b.2 egss} to bound $\eta_2 \ge |\tilde w_{k}^\T \phi(\rho, a^\allm) - Q_{\tilde \pi_{k-1}}(\rho, a^\allm)|, \ \forall a^\allm \in \cA^\allm$.

    \subsection{Extending to Politex} \label{subsec:extending to politex}
    Recall the above results where for the \textsc{Confident Multi-Agent MC-LSPI} case.
    It turns out the story for the \textsc{Confident Multi-Agent MC-Politex} is extremely similar and can be argued in nearly the same way. 
    The main difference is that the policy used in \textsc{Confident Multi-Agent MC-Politex} is different than in \textsc{Confident Multi-Agent MC-LSPI} (line 15).
    As such, we can no longer use \cref{lemma:approximate policy iteration} (since it relied on a greedy policy) and, thus cannot use \cref{prop:optimality of output policy} to bound the sub-optimality of the policy output by \textsc{Confident Multi-Agent MC-Politex}. 
    Next, we show there is a similar Lemma and Proposition that can derived for \textsc{Confident Multi-Agent MC-Politex}.

    Recall that we do not use clipping on the $Q$-functions in \textsc{Confident Multi-Agent MC-Politex}, so that we can sample from the policy efficiently (Proposition \ref{prop: efficient politex policy sampling}).  
    This means we must define the VA's $Q$-function differently from \citep{yin2021efficient}, by removing clipping from the case when $\phi(s, a^\allm) \in \cH$.
    \begin{equation*}
        \tilde{Q}_{k-1}(s, a^\allm)=\begin{cases}
                          \tilde{w}_k^\top \phi(s, a^\allm) \quad &\text{if} \, \phi(s, a^\allm) \in \cH \\
                          Q_{\tilde{\pi}_{k-1}}(s, a^\allm)     \quad &\text{if} \, \phi(s, a^\allm) \notin \cH \\
                    \end{cases}
    \end{equation*}
    Then the VA's policy is
    \begin{equation} \label{eq:politex virtual policy}
        \tilde \pi_k(a^\allm | s) \propto \exp \left( \alpha \sum_{j=0}^{k-1} \tilde Q_{j}(s, a^\allm) \right).
    \end{equation}
    
    Also, due to no clipping, the sequence of $Q$-functions during policy iteration is now in the $[-\eta, (1-\gamma)^{-1} + \eta]$ interval, where $\eta \ge \|\tilde Q_{k-1}(s, a^\allm) - Q_{\tilde \pi_{k-1}}(s, a^\allm)\|_\infty$.
    We now restate Lemma D.1 from \citet{yin2021efficient} which bounds the mixture policy output by Politex for an arbitrary sequence of $Q$-functions
    Since we do not use clipping the theorem is slightly modified (we replace the interval $[0, (1-\gamma)^{-1}]$ with a general interval $[a, b], \ a, b \in \RR$, which can be extracted from the calculations in \citet{szepesvari2022}).

    \begin{lemma}[modified Lemma D.1 in \citet{yin2021efficient} also in \citet{szepesvari2022}] \label{lemma:politex mixture policy bound}
    Given an initial policy $\pi_0$, a sequence of functions $Q_k: \cS \times \cA^\allm \to [a, b], \ k \in [K-1], a, b \in \RR$, and $Q_{\pi^*} \in [0, 1/(1-\gamma)]$, construct a sequence of policies $\pi_1, ..., \pi_{K-1}$ according to (\ref{eq:politex virtual policy}) with $\alpha = 1/(b-a) \sqrt{\frac{2 \log(|\cA^\allm|)}{K}}$, then, for any $s \in \cS$, the mixture policy $\bar \pi_{K-1} \sim \text{Unif}\{\pi_k\}_{k=0}^{K-1}$ satisfies

    \begin{equation}
        V^*(s) - V_{\bar \pi_K}(s) \le \frac{b-a}{(1-\gamma)}\sqrt{\frac{2 \log(|\cA^\allm|)}{K}} + \frac{2 \max_{0 \le k \le K-1} \|Q_k - Q_{\pi_k}\|_\infty}{1 - \gamma}
    \end{equation}
    \end{lemma}

    Notice that the above result suggests we just need to control the term $\|Q_k - Q_{\pi_k}\|_\infty$.
    For the VA this is $\|\tilde Q_k - Q_{\tilde \pi_k}\|_\infty$ and as we have already seen, this can be bounded using the high probability bound on policy evaluation for \textsc{Uncertainty Chcek with DAV} (Proposition \ref{prop: approx value function bound for DAV}) and \textsc{Uncertainty Chcek with EGSS} (Proposition \ref{prop: approx value function bound for EGSS}).
    Using Lemma \ref{lemma:politex mixture policy bound} instead of Lemma D.1 in \citet{yin2021efficient}, one can extract another slightly modified result from \citet{yin2021efficient}.
    
    \begin{proposition}[equation (D.8) in \citet{yin2021efficient}] \label{prop:politex optimality of output policy}
    With all terms as defined earlier. 
    Define $\eta \ge \|\tilde Q_{k-1}(s, a^\allm) - Q_{\tilde \pi_{k-1}}(s, a^\allm)\|_\infty$. 
    Suppose $\eta \ge |\tilde w_{k}^\T \phi(\rho, a^\allm) - Q_{\tilde \pi_{k-1}}(\rho, a^\allm)|_\infty, \ \forall a^\allm \in \cA^\allm$.
    Then, if the VA and MA behave identically in the final loop, with probability at least $1 - 4KC_{\text{max}}^2 \exp(-2 \theta^2(1-\gamma)^2 n)$ we have
    \begin{align}
            V^*(s) - V_{\bar \pi_{K-1}}(\rho) \le \frac{b-a}{(1-\gamma)} \sqrt{\frac{2 \log(|\cA^\allm|)}{K}} + \frac{4 \eta}{1 - \gamma}
    \end{align}
    \end{proposition}
    
    Notice, that we require the same three things as in the \textsc{Confident Multi-Agent MC-LSPI} case (Proposition \ref{prop:optimality of output policy}).
    We need a bound on $\|\tilde Q_{k-1}(s, a^\allm) - Q_{\tilde \pi_{k-1}}(s, a^\allm)\|_\infty$. 
    We need a bound on $|\tilde w_{k}^\T \phi(\rho, a^\allm) - Q_{\tilde \pi_{k-1}}(\rho, a^\allm)|_\infty, \ \forall a^\allm \in \cA^\allm$.
    We need to ensure that the VA and MA behave identically in the final loop. 
    Then, we can get a bound on the sub-optimality of the MA's output policy $\bar \pi_{K-1}$.
    Using the same steps as in the previous sections, one can verify that indeed, \textsc{Confident Multi-Agent MC-Politex} with \textsc{Uncertainty Check with DAV} or \textsc{Uncertainty Check with EGSS} does satisfy the above three conditions, with $\eta = \eta_1$ ($\eta_1$ as defined in \ref{prop: approx value function bound for DAV}) and $\eta = \eta_2$ ($\eta_2$ as defined in \ref{prop: approx value function bound for EGSS}) respectively.
    
    We bound $|\cA^\allm| \le A^m$. 
    We can replace $b-a$ with $1/(1-\gamma) + 2\eta$, since $w^\T \phi(s, a^\allm) \in [-\eta, (1-\gamma)^{-1} + \eta], \ \forall (s \times a^\allm) \in (\cS \times \cA^\allm)$ in the final loop for the same event which holds with probability at least $1 - 4KC_{\text{max}}^2 \exp(-2 \theta^2(1-\gamma)^2 n)$ in \cref{prop:politex optimality of output policy}. 
    We get with probability at least $1 - 4KC_{\text{max}}^2 \exp(-2 \theta^2(1-\gamma)^2 n)$ that
    \begin{align} 
            V^*(s) - V_{\bar \pi_{K-1}}(\rho) \le \left(\frac{1}{(1-\gamma)^2} + \frac{2\eta}{(1-\gamma)}\right) \sqrt{\frac{2 m \log(A)}{K}} + \frac{4 \eta}{1 - \gamma}. \label{eq:politex actual optimality of output policy}
    \end{align}

    \section{Proofs of Theorems} \label{app: theorem proofs}
    We make a remark on the query complexity of \textsc{Confident Multi-Agent MC-LSPI/Politex}.
    From \cref{lemma:bound on core set size} we know the core set size is bounded by $C_\text{max} = \tilde \cO(d)$.
    The total number of times Policy iteration is thus at most $C_\text{max}$.
    Each run of policy iteration can take as much as $K$ iterations.
    In each iteration \textsc{MA-Confident Rollout} is run at most $C_\text{max}$ times.
    \textsc{MA-Confident Rollout} does $n$ rollouts of length $H$ which queries the simulator once for each step.
    In total the number of queries performed by \textsc{Confident Multi-Agent MC-LSPI/Politex} is bounded by $C_\text{max}^2 K n H$.
    This equation is used to calculate the query cost for the different variants of \textsc{Confident Multi-Agent MC-LSPI/Politex}, once all the parameter values have been calculated.


    \subsection{Proof of \cref{thm:ma-mc-lspi sub-optimality}}
    Plugging in $\eta=\eta_1$ when \textsc{Uncertainty Check with DAV} is used ($\eta_1$ as defined in \ref{prop: approx value function bound for DAV}) and $\eta=\eta_2$ when \textsc{Uncertainty Check with EGSS} is used ($\eta_2$ as defined in \ref{prop: approx value function bound for EGSS}) into Proposition \ref{prop:optimality of output policy}.
    Setting $z=2m-1$ when \textsc{Uncertainty Check with DAV} is used, and $z = \sqrt{d}$ when \textsc{Uncertainty Check with EGSS} is used.
    Suppose assumption \ref{ass: feature decomposition} is satisfied with $\epsilon=0$.
    By choosing appropriate parameters according to $\delta$ and $\kappa$, we can ensure that with probability at least $1 - \delta$ that the policy output by \textsc{Confident Multi-Agent MC-LSPI} $\pi_{K-1}$ satisfies:
    \begin{align*}
        V^*(\rho) - V_{\pi_{K-1}}(\rho) \leq \kappa,
    \end{align*}
    with the following parameter initialization (see \ref{sec:parameter mclsipi-dav})
    \begin{align*}
        \tau &= 1\\
        \lambda &= \frac{\kappa^2(1 - \gamma)^4}{1024 b^2 z^2}\\
        \theta &= \frac{\kappa(1- \gamma)^2}{32 z \sqrt{C_{\text{max}}}}\\
        H &= \frac{
            \log \left ( 32 \sqrt{C_{\text{max}}} z \right)
            - \log \left( \kappa(1 - \gamma)^3 \right)
        }{
            1-\gamma
        } - 1\\
        K &= \frac{\log\left(\frac{1}{\kappa(1 - \gamma)^2}\right) + \log(8)}{1-\gamma} + 1 \\
        n &= \frac{\log(\delta) - \log(4KC_{\text{max}}^2)}{2 \theta^2(1-\gamma)^2} \\
        C_{\max} &= \frac{e}{e-1} \frac{1 + \tau}{\tau} d \left( 
            \log(1 + \frac{1}{\tau}) +
            \log(1 + \frac{m}{\lambda})
        \right) 
    \end{align*}
    with computational cost of $\poly(d, \frac{1}{1 - \gamma}, \frac{1}{\kappa}, \log(\frac{1}{\delta}), \log(b), m, |\mathcal{A}|)$.
    and query cost $\cO\left(\tfrac{z^2 d^3}{\kappa^2 (1-\gamma)^8} \right)$ 

    Suppose assumption \ref{ass: feature decomposition} is satisfied with $\epsilon \neq 0$,
    By choosing parameters as above, with $\kappa = \frac{32 \epsilon \sqrt{d} z}{(1-\gamma)^2} (1 + \log(m b^2 \epsilon^{-2} d^{-1}))^{1/2}$, we can ensure that with probability of at least $1 - \delta$ that the policy output by \textsc{Confident Multi-Agent MC-LSPI} $\pi_{K-1}$ satisfies:
    
    $$V^*(\rho) - V_{\pi_{K-1}}(\rho) \leq \frac{64 \epsilon \sqrt{d} z}{(1-\gamma)^2} (1 +\log(1+m b^2 \epsilon^{-2} d^{-1}))^{1/2}$$
    
    with computational cost of $\poly(d, \frac{1}{1 - \gamma}, \frac{1}{\kappa}, \log(\frac{1}{\delta}), \log(b), m, |\mathcal{A}|)$.
    and query cost $\cO\left(\tfrac{d^2}{\epsilon^2 (1-\gamma)^4} \right)$ 
    
    Moreover, the above results also holds under \cref{ass: argmax oracle} when \textsc{Uncertainty Check with EGSS} is used.

    \subsection{Proof of \cref{thm:ma-mc-politex sub-optimality}}
    Plugging in $\eta=\eta_1$ when \textsc{Uncertainty Check with DAV} is used ($\eta_1$ as defined in \ref{prop: approx value function bound for DAV}) and $\eta=\eta_2$ when \textsc{Uncertainty Check with EGSS} is used ($\eta_2$ as defined in \ref{prop: approx value function bound for EGSS}) into \cref{eq:politex actual optimality of output policy}.
    Setting $z=2m-1$ when \textsc{Uncertainty Check with DAV} is used, and $z = \sqrt{d}$ when \textsc{Uncertainty Check with EGSS} is used.
    Suppose assumption \ref{ass: feature decomposition} is satisfied with $\epsilon=0$.
    By choosing appropriate parameters according to $\delta$ and $\kappa$, we can ensure that with probability at least $1 - \delta$ that the policy output by \textsc{Confident Multi-Agent MC-Politex} $\pi_{K-1}$ satisfies:
    \begin{align*}
        V^*(\rho) - V_{\pi_{K-1}}(\rho) \leq \kappa,
    \end{align*}
    with the following parameter initialization (see \ref{sec:parameter mclsipi-dav})
    \begin{align*}
        \tau &= 1\\
        \lambda &= \frac{\kappa^2(1 - \gamma)^2}{576 b^2 z^2}\\
        \theta &= \frac{\kappa(1- \gamma)}{24 z \sqrt{C_{\text{max}}}}\\
        H &= \frac{
            \log \left ( 24 \sqrt{C_{\text{max}}} z \right)
            - \log \left( \kappa(1 - \gamma)^2 \right)
        }{
            1-\gamma
        } - 1\\
        K &= 2m \log(A) \left( \frac{4}{\kappa^2 (1-\gamma)^4} + \frac{3}{\kappa (1-\gamma)^2} + \frac{9}{16} \right)\\
        n &= \frac{\log(\delta) - \log(4KC_{\text{max}}^2)}{2 \theta^2(1-\gamma)^2} \\
        C_{\max} &= \frac{e}{e-1} \frac{1 + \tau}{\tau} d \left( 
            \log(1 + \frac{1}{\tau}) +
            \log(1 + \frac{m}{\lambda})
        \right) 
    \end{align*}
    with computational cost of $\poly(d, \frac{1}{1 - \gamma}, \frac{1}{\kappa}, \log(\frac{1}{\delta}), \log(b), m, |\mathcal{A}|)$.
    and query cost $\cO\left(\tfrac{m z^2 d^3}{\kappa^4 (1-\gamma)^9} \right)$ 

    Suppose assumption \ref{ass: feature decomposition} is satisfied with $\epsilon \neq 0$,
    By choosing parameters as above, with $\kappa = \frac{16 \epsilon \sqrt{d} z}{(1-\gamma)} (1 + \log(m b^2 \epsilon^{-2} d^{-1}))^{1/2}$, we can ensure that with probability of at least $1 - \delta$ that the policy output by \textsc{Confident Multi-Agent MC-Politex} $\pi_{K-1}$ satisfies:
    
    $$V^*(\rho) - V_{\pi_{K-1}}(\rho) \leq \frac{32 \epsilon \sqrt{d} z}{(1-\gamma)^2} (1 +\log(1+m b^2 \epsilon^{-2} d^{-1}))^{1/2}$$

    with computational cost of $\poly(d, \frac{1}{1 - \gamma}, \frac{1}{\kappa}, \log(\frac{1}{\delta}), \log(b), m, |\mathcal{A}|)$.
    and query cost $\cO\left(\tfrac{m d}{\epsilon^4 (1-\gamma)^5} \right)$

    \section{Kernel Setting} \label{app:kernel setting}
    
    The kernelized setting is a standard extension of the finite-dimensional linear setup \citep{srinivas2009gaussian,abbasi2012online}. It lifts the restriction that features and parameter vector are elements of $\bR^d$. Instead we require that the $Q_\pi$-function is contained in a reproducing kernel Hilbert space (RKHS). This includes cases where the linear dimension of function class is infinite.
    
    The more general setup requires us to address two main challenges: First, the scaling of the sample complexity with the dimension $d$ needs to be improved to a notion of effective dimension that can be bounded for the RKHS of interest. Second, computationally we cannot directly work with infinite dimensional features $\phi(s,a)$. Instead, we need to rely on the `kernel trick' and compute all quantities of interest in the finite-dimensional data space. 
    
    Formally for each agent $j \in [m]$, the function $k^{(j)} : (\cS \times \cA^\allm)^2 \rightarrow \bR$ is defined as
    
    \begin{align}
        k^{(j)}(s_1, a_1^\allm, s_2, a_2^\allm) = k_j(s_1, a_1^{(j)}, s_2, a_2^{(j)}), \label{eq:rkhs_j}
    \end{align}
    where $k_j: (\cS \times \cA)^2 \rightarrow \bR$ is the underlying kernel function for agent $j$,
    and $\cH_j$ is the RKHS associated with it.
    
    Based on definition \eqref{eq:rkhs_j}, it's easy to see that $\{k^{(j)}\}_{j \in [m]}$ is a set of kernel functions too, and they share the same vector space which is $V := \RR^{\cS \times \cA^\allm}$.
    However, they have different inner products on this space which produce a different RKHS for every $j \in [m]$.
    We denote RKHS of $k^{(j)}$ as $\cH^{(j)}$, and its inner product follows from equation \eqref{eq:rkhs_j} as
    \begin{align}
        \langle k^{(j)}(s_1, a_1^\allm, \cdot, \cdot), k^{(j)}(s_2, a_2^\allm, \cdot, \cdot) \rangle_{\cH^{(j)}} =
        \langle k_j(s_1, a_1^{(j)}, \cdot, \cdot), k_j(s_2, a_2^{(j)}, \cdot, \cdot) \rangle_{\cH_j}. \label{eq:rkhs-inner-1}
    \end{align}
    By defining $\phi_j(s, a) := k_j(s,a, \cdot, \cdot) \in \cH_j$ and $\phi^{(j)}(s, a^\allm) := k^{(j)}(s, a^\allm, \cdot, \cdot) \in \cH^{(j)}$, we can rewrite \eqref{eq:rkhs-inner-1} for fixed $s_1, s_2, a_1^\allm, a_2^\allm$ as
    \begin{align}
        \langle \phi^{(j)}(s_1, a_1^\allm), \phi^{(j)}(s_2, a_2^\allm) \rangle_{\cH^{(j)}} &=
        \langle \phi_j(s_1, a_1^{(j)}), \phi_j(s_2, a_2^{(j)})\rangle_{\cH_j}.\label{eq:rkhs-inner-1-rev}
    \end{align}
    Intuitively, equation \eqref{eq:rkhs-inner-1-rev} suggests that the inner product $\langle \cdot, \cdot \rangle_{H^{(j)}}$ only depends on the state $s$, and the action taken by agent $j$.
    
    Next, we define the joint additive kernel $k : (\sS \times \aA^\allm)^2 \rightarrow \bR$ as follows
    \begin{align}
    	k(s_1,a_1^\allm, s_2,a_2^\allm) &= \sum_{j=1}^m k^{(j)}(s_1,a_1^\allm, s_2,a_2^\allm) \label{eq:jak-def}\\
    	&= \sum_{j=1}^m k_j(s_1,a_1^{(j)}, s_2,a_2^{(j)})\\
    	&= \sum_{j=1}^m \langle k_j(s_1,a_1^{(j)}, \cdot, \cdot), k_j(s_2,a_2^{(j)}, \cdot, \cdot) \rangle_{\cH_j},
    \end{align}
    and we denote its associated RKHS as $\cH$. Again, note that $\cH$ uses the same vector space, namely $V$, as all the $\cH^{(j)}$s.
    
    Now, we can restate Assumption for the kernel case.
    
    TODO: I think we still want to solve the problem under assumption one for the kernel case, right?\ref{ass: feature decomposition}
    
    \begin{assumption}[Assumption 1 for RKHS]
        \label{ass:kernel-2}
    	For each (deterministic) policy $\pi$, there exists
    	$f_\pi \in \cH$, such that
    	$Q_\pi(s, a^\allm) = \langle \phi(s,a^\allm), f_\pi \rangle_{\cH}$.
    \end{assumption}
    Next, we show that there exist a function $f_\pi^\ag{i} \in \cH^\ag{i}$ for $i \in [m]$, such that:
    \begin{align*}
        Q_\pi (s, a^\allm) &= \sum_{j=1}^m Q_\pi^{(j)}(s, a^\allm)\\
        Q_\pi^\ag{i} (s, a^\allm) &= \langle \phi^\ag{i}(s,a^\allm), f_\pi^\ag{i} \rangle_{\cH^\ag{i}}
    \end{align*}
    Or, there exist $f_{\pi, j} \in \cH_j$ for $j \in \cH_j$, such that:
    \begin{align*}
        Q_\pi (s, a^\allm) &= \sum_{j=1}^m Q_{\pi,j}(s, a^{(j)})\\
        Q_\pi^\ag{i} (s, a^\allm) &= \langle \phi^\ag{i}(s,a^\allm), f_{\pi,j} \rangle_{\cH^\ag{i}}
    \end{align*}
    
    \begin{proof}
        As $f_\pi$ is an element of $\cH$ we know that it can be shown based on the basis vectors of $\cH$:
        \begin{align*}
            f_\pi = \sum_{i=1}^{\infty} \alpha_i k(s_i, a_i, \cdot, \cdot).
        \end{align*}
        From the definition of the joint additive kernel and the assumption \ref{ass:kernel-2} we have:
        \begin{align}
            Q_\pi (s, a^\allm) &= \langle \phi(s,a^\allm), f_\pi \rangle_{\cH} \nonumber \\
            &= \langle \phi(s,a^\allm), \sum_{i=1}^{\infty} \alpha_i k(s_i, a_i, \cdot, \cdot) \rangle_{\cH} \nonumber \\
            &= \sum_{i=1}^{\infty} \alpha_i \langle k(s,a^\allm, \cdot, \cdot), k(s_i, a_i, \cdot, \cdot) \rangle_{\cH} \nonumber \\
            &= \sum_{i=1}^{\infty} \alpha_i k(s,a^\allm, s_i, a_i) \nonumber \\
            &= \sum_{i=1}^{\infty} \alpha_i \sum_{j=1}^m k^{(j)}(s,a^\allm, s_i, a_i) & \text{Based on }\ref{eq:jak-def}\label{eq:checkpoint}\\
            &= \sum_{j=1}^m \sum_{i=1}^{\infty} \alpha_i k^{(j)}(s,a^\allm, s_i, a_i) \nonumber \\
            &= \sum_{j=1}^m \sum_{i=1}^{\infty} \alpha_i \langle \phi^{(j)}(s,a^\allm) , \phi^{(j)}(s_i, a_i) \rangle_{\cH^{(j)}} \nonumber\\
            &= \sum_{j=1}^m \langle \phi^{(j)}(s,a^\allm) , \underbrace{\sum_{i=1}^{\infty} \alpha_i \phi^{(j)}(s_i, a_i)}_{:=E^{(j)}(f_\pi) := f_\pi^\ag{i}} \rangle_{\cH^{(j)}} \nonumber \\
            &= \sum_{j=1}^{m} \langle \phi^{(j)}(s,a^\allm), f_\pi^{(j)} \rangle_{\cH^{(j)}}. \nonumber
        \end{align}
        or from \eqref{eq:checkpoint} we have:
        \begin{align*}
            Q_\pi (s, a^\allm)
            &= \sum_{i=1}^{\infty} \alpha_i \sum_{j=1}^m k^{(j)}(s,a^\allm, s_i, a_i) \\
            &= \sum_{i=1}^{\infty} \alpha_i \sum_{j=1}^m k_j(s,a^{(j)}, s_i, a_i^{(j)}) & \text{Based on }\ref{eq:rkhs_j}\\
            &= \sum_{i=1}^{\infty} \alpha_i \sum_{j=1}^m \langle \phi_j(s,a^{(j)}), \phi_j(s_i, a_i^{(j)}) \rangle_{\cH_j}\\
            &= \sum_{j=1}^m \langle
            \phi_j(s,a^{(j)}),
            \underbrace{
            \sum_{i=1}^{\infty} \alpha_i \phi_j(s_i, a_i^{(j)})
            }_{:=E_j(f_\pi) :=f_{\pi, j}}
            \rangle_{\cH_j}\\
            &= \sum_{j=1}^m \langle
            \phi_j(s,a^{(j)}),
            f_{\pi, j}
            \rangle_{\cH_j}.
        \end{align*}
        We may need to show that $f_\pi^{j}$ and $f_{\pi, j}$ have finite norms in their corresponding Hilbert spaces.
    \end{proof}
    
    \paragraph{Kernelized Algorithm}
    
    As before we can compute the ridge estimate
    \begin{align}
    	\hat Q_t = \argmin_{Q \in \cH} \sum_{(s,a^\allm)\in\cC_t} (Q(s,a^\allm) - q_{(s,a^\allm)})^2 + \lambda \|Q\|_{\cH}^2 = (\Phi_{\cC_t}\Phi_{\cC_t}^\T + \lambda \eye_\hH)^{-1}\Phi_{\cC_t}q_{\cC_t}
    \end{align}
    Here, $\eye_{\cH} : \hH \rightarrow \hH$ is the identity mapping, and $\Phi_{\cC}^\T$ can be formally defined as map $\Phi_\cC^\T : \cH \rightarrow \bR^{|\cC|}, f \mapsto [f(s,a^\allm)]_{(s,a^\allm) \in \cC}, \, f \in \cH$; and $\Phi_{\cC} : \bR^{|\cC|} \rightarrow \hH$ is the adjoint of $\Phi_{\cC}^\T$.
    
    Using the `kernel trick' we express the estimator as follows
    \begin{align}
    	\hat Q_t = \Phi_{\cC_t}(K_{\cC_t} + \lambda \eye_{t})^{-1}q_{\cC_t}
    \end{align}
    where $K_{\cC_t} = \Phi_{\cC_t}^\T \Phi_{\cC_t} \in \bR^{t \times t}$ is the kernel matrix. Lastly, we can evaluate for any $s,a^\allm$:
    \begin{align}
    	\hat Q_t(s,a^\allm) = k_{\cC_t}(s,a^\allm)^\T(K_{\cC_t} + \lambda \eye_{t})^{-1}q_{\cC_t}
    \end{align}
    where we defined $k_\cC(s,a^\allm) = [k(s,a^\allm, s',a'^\allm)]_{(s',a'^\allm) \in \cC} \in \bR^{|\cC|}$ (for some fixed ordering of $\cC$). Importantly, the last display only involves finite-dimensional quantities that can be computed from kernel evaluations. Moreover, since $k(s,a^\allm,s',a'^\allm) = \sum_{j=1}^m k_j(s,a^{(j)}, s', a^{(j)})$ we can write
    \begin{align}
    	\hat Q_t(s,a^\allm) = \sum_{j=1}^m k_{j, \cC_t}(s,a^{(j)})^\T(K_{\cC_t} + \lambda \eye_{t})^{-1}q_{\cC_t}
    \end{align}
    where $k_{j, \cC}(s,a^{(j)}) = [k_j(s,a^{(j)}, s^\prime , a^{\prime(j)})]_{(s^\prime, a^{\prime(1:m)}) \in \cC} \in \bR^{|\cC|}$. Hence we can still compute the maximizer independently for each agent.
    
    The second quantity required by the algorithm is the squared norm $\|\phi(s,a^\allm)\|_{(\Phi_\cC \Phi_{\cC}^\T + \lambda \eye_\hH)^{-1}}^2$, where now $\phi(s,a^\allm) = k(s,a^\allm, \cdot, \cdot) \in \cH$. Using the Woodbury identity, we can write 
    \begin{align}
    	\lambda (\Phi_\cC \Phi_{\cC}^\top + \lambda \eye_\hH)^{-1} = \eye_{\cH} - \Phi_{\cC} (K_\cC + \lambda \eye_{|\cC|})^{-1}\Phi_{\cC}^\T
    \end{align}
    Therefore the feature norm can be written using finite-dimensional quantities: 
    \begin{align}
    	\|\phi(s,a^\allm)\|_{(\Phi_\cC \Phi_{\cC}^\top + \lambda \eye_\hH)^{-1}}^2 = \frac{1}{\lambda} \left( k(s,a^\allm,s,a^\allm) - k_\cC(s,a^\allm)^\T(K_\cC+ \lambda \eye_{|\cC|})^{-1}k_{\cC}(s,a^\allm)\right)
    \end{align}
    With this, we can implement the DAV version of the algorithm directly. The EGSS is more tricky to implement, but this is potentially possible using eigenfunctions from Mercer's theorem.

    \paragraph{Analysis}
    Our goal next is to extend the analysis to the kernel case, carefully arguing that the linear dimension $d$ can be replaced by a more benign quantity. A common complexity measure is the total information gain, which we define as follows:
    \begin{align}
    	\Gamma_{\cC} = \log \det (\Phi_{\cC}\Phi_{\cC}^\T + \lambda \eye_d) - \log \det (\lambda \eye_d)
    \end{align}
    Note that we can compute $\Gamma_{\cC}$ for any given core set $\cC$. In the kernel case, we can compute $\Gamma_{\cC} = \log \det (\eye_{|\cC|} + \lambda^{-1} K_{\cC})$ using similar arguments as before.
    
    The maximum information gain is $\Gamma_t = \max_{\cC : |\cC|=t} \Gamma_{\cC}$. It serves as a complexity measure in the bandit literature and can be bounded for many kernels of interests \citep{srinivas2009gaussian,vakili2021information}. Following \citet{du2021bilinear}, we further define the \emph{critical information gain},
    \begin{align}
    	\tilde \Gamma = \max \{t \geq 1 : t \leq \Gamma_t \}
    \end{align}
    Note that the proof of  \cite[Lemma 5.1]{yin2021efficient} implies that $|C| \leq  \log(1+\tau)^{-1}\Gamma_{|C|}$
    
    Since the dimension $d$ enters our bounds only through $C_{\max}$ we can immediately get a sample complexity bound for the kernelized algorithm in terms of $\tilde\Gamma$. For the finite-dimensional case, \cite[Lemma 5.1]{yin2021efficient} shows that $\tilde \Gamma \leq \oO(d)$, recovering the previous bound.

    \paragraph{Unknown Critical Information Gain} Somewhat impractical for the algorithm is that we need to know a bound on $C_{\max}$ or $\tilde \Gamma$ respectively to set the number of episodes required for some target level of accuracy $\kappa > 0$ (roughly, $m = C_{\max}/\kappa^2$).
    
    As a remedy, we can replace the check $\|\phi(s,a)\|_{(\Phi_\cC \Phi_{\cC} + \lambda \eye_d)^{-1}}^2 > \tau$ by
    
    TODO: this needs some more thinking, as we don't want to set $\tau$ to be too small - maybe? An easier approach could be to set $m = |\cC|/\kappa^2$?
    
    \begin{align*}
    	\|\phi(s,a)\|_{(\Phi_\cC \Phi_{\cC} + \lambda \eye_d)^{-1}}^2 > \frac{\tilde \tau}{\max(\Gamma_{\cC},1)}
    \end{align*}
    
    Let $\cC_1, \dots, \cC_t$ be the sequence of core sets obtained by adding elements that satisfy the above condition. Note that $\Gamma(\cC_t)$ is a non-decreasing sequence. Combined with \citep[Lemma 5.1]{yin2021efficient} , this implies that
    \begin{align}
    	t \log\left(1 + \tfrac{\tilde \tau}{\max(\Gamma_t,1)}\right) \leq t \log\left(1 + \tfrac{\tilde \tau}{\max( \Gamma_{\cC_{t}}, 1)}\right) \leq \sum_{s=1}^t \log\left(1 + \tfrac{\tilde \tau}{\max(\Gamma_{\cC_{s}},1)}\right) \leq \Gamma_{\cC_{t}} \leq \Gamma_t\label{eq:cmax adaptiv}
    \end{align}
    Hence the condition is triggered at most
    \begin{align}
    	\tilde C_{\max}(\tilde \tau) = \max \left\{t \geq 1 : t \leq \Gamma_t \log\left(1 + \tfrac{\tilde \tau}{\max(\Gamma_t, 1)} \right)^{-1}\right\}
    \end{align}
    times. 
    
    TODO: Would be great to show some bounds for $\tilde C_{\max}$, e.g. in the finite-dimensional case
    
    Moreover, we can set $\tilde \tau=1$ and $m = \frac{1}{\kappa^2}$ (i.e.~without knowing a bound on $C_{\max}$) to obtain the required target accuracy $\tilde \oO(\kappa)$.
    
    TODO: this requires some introspection of \citep[Lemma B.2]{yin2021efficient} and \eqref{eq:cmax adaptiv}

    \subsection{We need to prove that if we add a state-action pair to the core set, it remains in the good set in future.}

    \begin{theorem}
    Assume that $\Phi_\cC \in \RR^{t \times d}$,  $V_\cC = \Phi_\cC^\top \Phi_\cC + \lambda I$, and $\phi_{t+1} \in \RR^d$. Define $\hat \Phi = [\Phi^\top \, \phi_{t+1}]^\top$,
    and $\hat V_\cC = \hat{\Phi}_\cC^\top \hat{\Phi}_\cC + \lambda I$.
    Then we have:
    \begin{align*}
        \| \phi_{t+1} \|_{\hat V_\cC^{-1}} < 1
    \end{align*}
    \end{theorem}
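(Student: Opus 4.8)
The plan is to reduce the claim to a one-line application of the Sherman--Morrison formula. First I would observe that stacking $\phi_{t+1}$ as an extra row of $\Phi_\cC$ changes the regularized Gram matrix by exactly a rank-one term:
\[
  \hat V_\cC = \hat\Phi_\cC^\top \hat\Phi_\cC + \lambda I = \Phi_\cC^\top\Phi_\cC + \phi_{t+1}\phi_{t+1}^\top + \lambda I = V_\cC + \phi_{t+1}\phi_{t+1}^\top .
\]
Since $\lambda > 0$, the matrix $V_\cC$ is positive definite, hence invertible, and so is $\hat V_\cC$; this is the only place the hypothesis on $\lambda$ is used.

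Next I would apply Sherman--Morrison to $(V_\cC + \phi_{t+1}\phi_{t+1}^\top)^{-1}$ and contract on both sides with $\phi_{t+1}$. Writing $x := \phi_{t+1}^\top V_\cC^{-1}\phi_{t+1} = \|\phi_{t+1}\|_{V_\cC^{-1}}^2 \ge 0$, this yields the identity
\[
  \|\phi_{t+1}\|_{\hat V_\cC^{-1}}^2 = \phi_{t+1}^\top\hat V_\cC^{-1}\phi_{t+1} = x - \frac{x^2}{1+x} = \frac{x}{1+x} = 1 - \frac{1}{1+x} < 1 ,
\]
the strict inequality holding for every finite $x \ge 0$ (including $x = 0$). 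Taking square roots gives $\|\phi_{t+1}\|_{\hat V_\cC^{-1}} < 1$, which is the claim.

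I do not anticipate a genuine obstacle here: the argument is entirely routine once the rank-one structure is spotted, and the only points worth stating explicitly are that $\lambda>0$ makes both Gram matrices invertible and that the bound $x/(1+x)<1$ is strict. As a remark tying this back to the core-set construction, note that a feature is added precisely when $\|\phi_{t+1}\|_{V_\cC^{-1}}^2 > \tau \ge 1$, and the display above shows that immediately afterwards $\|\phi_{t+1}\|_{\hat V_\cC^{-1}}^2 < 1 \le \tau$; moreover any later core set $\cC'$ satisfies $V_{\cC'} \succeq \hat V_\cC$, hence $V_{\cC'}^{-1} \preceq \hat V_\cC^{-1}$ in the Loewner order, so $\|\phi_{t+1}\|_{V_{\cC'}^{-1}}^2 \le \|\phi_{t+1}\|_{\hat V_\cC^{-1}}^2 < 1 \le \tau$, i.e.\ the pair stays in the good set forever.
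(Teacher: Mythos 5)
Your proposal is correct and follows essentially the same route as the paper: identify $\hat V_\cC = V_\cC + \phi_{t+1}\phi_{t+1}^\top$ as a rank-one update, apply Sherman--Morrison, and reduce the quantity to $x/(1+x) < 1$ with $x = \|\phi_{t+1}\|_{V_\cC^{-1}}^2$. Your closing remark about the pair remaining in the good set for all later core sets (via monotonicity in the Loewner order) is exactly the consequence the paper draws immediately after the theorem, so nothing is missing.
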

    \begin{proof}
    By the definition of the norm we have:
    \[
    \begin{aligned}
        \| \phi_{t+1} \|_{\hat{V}_\cC^{-1}} &= \phi_{t+1}^\top \hat{V}_\cC^{-1} \phi_{t+1}\\ 
        &= \phi_{t+1}^\top \left( 
            \hat{\Phi}_\cC^\top \hat{\Phi}_\cC + \lambda I
        \right)^{-1} \phi_{t+1}\\
        &= \phi_{t+1}^\top \left ( 
            \sum_{i = 1}^{t+1} \phi_i \phi_i^\top
            + \lambda I
        \right)^{-1} \phi_{t+1}\\
        &= \phi_{t+1}^\top \left ( 
            \sum_{i = 1}^{t} \phi_i \phi_i^\top
            + \lambda I
            + \phi_{t+1} \phi_{t+1}^\top
        \right)^{-1} \phi_{t+1}\\
        &= \phi_{t+1}^\top \left ( 
            V_\cC
            + \phi_{t+1} \phi_{t+1}^\top
        \right)^{-1} \phi_{t+1}\\
        &= \phi_{t+1}^\top \left ( 
            V_\cC^{-1}
            - \frac{V_\cC^{-1} \phi_{t+1} \phi_{t+1}^\top V_\cC^{-1}}{1 + \phi_{t+1}^\top V_\cC^{-1} \phi_{t+1}^\top}
        \right) \phi_{t+1} & \text{Sherman-Morrison}\\
        &= \phi_{t+1}^\top V_\cC^{-1} \phi_{t+1}
            - \frac{
            \phi_{t+1}^\top V_\cC^{-1} \phi_{t+1} \phi_{t+1}^\top V_\cC^{-1}\phi_{t+1}
            }{
            1 + \phi_{t+1}^\top V_\cC^{-1} \phi_{t+1}
            }
        \\
        &= \frac{
            \phi_{t+1}^\top V_\cC^{-1} \phi_{t+1} 
            }{
            1 + \phi_{t+1}^\top V_\cC^{-1} \phi_{t+1}
            }\\
        &< 1
        \\
    \end{aligned}
    \]
    \end{proof}
    Therefore, if we set $\tau \geq 1$, then none of the features that have been added to the core set can produce $V_\cC^{-1}$-norm greater than $\tau$, so they remain in the good set.

\section*{Parameter Assignments} \label{app:parameter assignments}

\subsection{MCLSPI-DAV}
\label{sec:parameter mclsipi-dav}
The total error is the following:
\begin{align}
    \frac{8 \eta_1}{(1 - \gamma)^2} + \frac{2 \gamma^{K - 1}}{(1 - \gamma)^2} &\leq \kappa \label{eq:egss-main} \\
    \frac{8}{(1 - \gamma)^2}
    \left(
        b\sqrt{\lambda  \tau} + \left( \frac{\gamma^{H+1}}{1 - \gamma} + \theta \right) \sqrt{ \tau C_{\text{max}}}
        \right)(2m-1) +
        \frac{2\gamma^{K - 1}}{(1 - \gamma)^2}
    &\leq \kappa \nonumber \\
    &\Rightarrow \nonumber \\
    \frac{8(2m-1)}{(1 - \gamma)^2} b\sqrt{\lambda  \tau} & \leq \frac{\kappa}{4} \label{eq:egss-lambda}\\
    \frac{8\sqrt{ \tau C_{\text{max}}}(2m-1)}{(1 - \gamma)^2} \frac{\gamma^{H+1}}{1 - \gamma} & \leq \frac{\kappa}{4} \label{eq:egss-H}\\
    \frac{8\sqrt{ \tau C_{\text{max}}}(2m-1)}{(1 - \gamma)^2} \theta & \leq \frac{\kappa}{4} \label{eq:egss-theta}\\
    \frac{2\gamma^{K - 1}}{(1 - \gamma)^2} & \leq \frac{\kappa}{4} \label{eq:egss-K}
\end{align}
First we assume that $\tau = 1$.

From \eqref{eq:egss-lambda} we get:
\begin{align*}
    \frac{8(2m - 1)}{(1 - \gamma)^2} b\sqrt{\lambda } & \leq \frac{\kappa}{4} \\
    \sqrt{\lambda } &\leq \frac{(1 - \gamma)^2\kappa}{32 b (2m -1)}\\
    \lambda &\leq \frac{(1 - \gamma)^4\kappa^2}{1024 b^2 (2m -1)^2}\\
\end{align*}
From \eqref{eq:egss-H} we get:
\begin{align*}
    \frac{8\sqrt{ C_{\text{max}}}(2m -1)}{(1 - \gamma)^3} \gamma^{H+1} & \leq \frac{\kappa}{4} \\
    (2m -1) \sqrt{C_{\text{max}}} \gamma^{H+1} & \leq \frac{\kappa(1 - \gamma)^3}{32} \\
    \gamma^{H+1} & \leq \frac{\kappa(1 - \gamma)^3}{32 \sqrt{C_{\text{max}}} (2m - 1)} \\
    H & \geq \frac{\log\left (\frac{\kappa(1 - \gamma)^3}{32 \sqrt{C_{\text{max}}} (2m - 1)} \right)}{
        \log(\gamma)
    } - 1\\
\end{align*}

From \eqref{eq:egss-theta} we get:
\begin{align*}
    \frac{8\sqrt{ C_{\text{max}}}(2m-1)}{(1 - \gamma)^2} \theta & \leq \frac{\kappa}{4}\\
    (2m-1) \sqrt{C_{\text{max}}} \theta & \leq \frac{\kappa(1- \gamma)^2}{32}\\
    \theta & \leq \frac{\kappa(1- \gamma)^2}{32 (2m-1) \sqrt{C_{\text{max}}}}\\
\end{align*}

From \eqref{eq:egss-K} we get:
\begin{align*}
    \frac{2\gamma^{K - 1}}{(1 - \gamma)^2} & \leq \frac{\kappa}{4}\\
    \gamma^{K - 1} & \leq \frac{\kappa(1 - \gamma)^2}{8}\\
    K & \leq \frac{\log \left( \frac{\kappa(1 - \gamma)^2}{8} \right)}{\log(\gamma)} + 1\\
    K & \leq \frac{\log\left(\kappa(1 - \gamma)^2\right) - \log(8)}{\log(\gamma)} + 1\\
\end{align*}

We know that \eqref{eq:egss-main} holds with probability at least $1 - 2KC_{\text{max}} \exp(-2 \theta^2(1-\gamma)^2 n)$.
Therefore from that the rest of values we get:
\begin{align*}
        2KC_{\text{max}} \exp(-2 \theta^2(1-\gamma)^2 n) &\leq \delta\\
        \exp(-2 \theta^2(1-\gamma)^2 n) &\leq \frac{\delta}{2KC_{\text{max}}}\\
        -2 \theta^2(1-\gamma)^2 n &\leq log( \frac{\delta}{2KC_{\text{max}}})\\
        n &\geq \frac{log(\delta) - \log(2KC_{\text{max}})}{2 \theta^2(1-\gamma)^2}\\
\end{align*}

\subsection{MCLSPI-EGSS}
\label{sec:parameter mclspi-egss}
The total error is the following:
\begin{align}
    \frac{8 \eta_2}{(1 - \gamma)^2} + \frac{2 \gamma^{K - 1}}{(1 - \gamma)^2} &\leq \kappa \label{eq:dav-main} \\
    \frac{8}{(1 - \gamma)^2}
    \left(
        b\sqrt{\lambda  \tau} + \left( \frac{\gamma^{H+1}}{1 - \gamma} + \theta \right) \sqrt{ \tau C_{\text{max}}}
        \right) +
        \frac{2\gamma^{K - 1}}{(1 - \gamma)^2}
    &\leq \kappa \nonumber \\
    &\Rightarrow \nonumber \\
    \frac{8}{(1 - \gamma)^2} b\sqrt{\lambda  \tau} & \leq \frac{\kappa}{4} \label{eq:dav-lambda}\\
    \frac{8\sqrt{ \tau C_{\text{max}}}}{(1 - \gamma)^2} \frac{\gamma^{H+1}}{1 - \gamma} & \leq \frac{\kappa}{4} \label{eq:dav-H}\\
    \frac{8\sqrt{ \tau C_{\text{max}}}}{(1 - \gamma)^2} \theta & \leq \frac{\kappa}{4} \label{eq:dav-theta}\\
    \frac{2\gamma^{K - 1}}{(1 - \gamma)^2} & \leq \frac{\kappa}{4} \label{eq:dav-K}
\end{align}
First we assume that $\tau = 1$.

From \eqref{eq:dav-lambda} we get:
\begin{align*}
    \frac{8}{(1 - \gamma)^2} b\sqrt{\lambda } & \leq \frac{\kappa}{4} \\
    \sqrt{\lambda } &\leq \frac{(1 - \gamma)^2\kappa}{32 b}\\
    \lambda &\leq \frac{(1 - \gamma)^4\kappa^2}{1024 b^2 }\\
\end{align*}
From \eqref{eq:dav-H} we get:
\begin{align*}
    \frac{8\sqrt{ C_{\text{max}}}}{(1 - \gamma)^3} \gamma^{H+1} & \leq \frac{\kappa}{4} \\
    d^{\frac{1}{4}} C_{\text{max}}^{\frac{1}{2}} \gamma^{H+1} & \leq \frac{\kappa(1 - \gamma)^3}{32} \\
    \gamma^{H+1} & \leq \frac{\kappa(1 - \gamma)^3}{32 d^{\frac{1}{4}} C_{\text{max}}^{\frac{1}{2}}} \\
    H & \geq \frac{\log\left (\frac{\kappa(1 - \gamma)^3}{32 d^{\frac{1}{4}} C_{\text{max}}^{\frac{1}{2}}} \right)}{
        \log(\gamma)
    } - 1\\
\end{align*}

From \eqref{eq:dav-theta} we get:
\begin{align*}
    \frac{8\sqrt{C_{\text{max}}}}{(1 - \gamma)^2} \theta & \leq \frac{\kappa}{4}\\
    d^{\frac{1}{4}} C_{\text{max}}^{\frac{1}{2}} \theta & \leq \frac{\kappa(1- \gamma)^2}{32}\\
    \theta & \leq \frac{\kappa(1- \gamma)^2}{32 d^{\frac{1}{4}} C_{\text{max}}^{\frac{1}{2}}}\\
\end{align*}

From \eqref{eq:dav-K} we get:
\begin{align*}
    \frac{2\gamma^{K - 1}}{(1 - \gamma)^2} & \leq \frac{\kappa}{4}\\
    \gamma^{K - 1} & \leq \frac{\kappa(1 - \gamma)^2}{8}\\
    K & \leq \frac{\log \left( \frac{\kappa(1 - \gamma)^2}{8} \right)}{\log(\gamma)} + 1\\
    K & \leq \frac{\log\left(\kappa(1 - \gamma)^2\right) - \log(8)}{\log(\gamma)} + 1\\
\end{align*}

We know that \eqref{eq:dav-main} holds with probability at least $1 - 2KC_{\text{max}} \exp(-2 \theta^2(1-\gamma)^2 n)$.
Therefore from that the rest of values we get:
\begin{align*}
        2KC_{\text{max}} \exp(-2 \theta^2(1-\gamma)^2 n) &\leq \delta\\
        \exp(-2 \theta^2(1-\gamma)^2 n) &\leq \frac{\delta}{2KC_{\text{max}}}\\
        -2 \theta^2(1-\gamma)^2 n &\leq \log( \frac{\delta}{2KC_{\text{max}}})\\
        n &\geq \frac{\log(\delta) - \log(2KC_{\text{max}})}{2 \theta^2(1-\gamma)^2}\\
\end{align*}